\documentclass[11pt]{article}
\usepackage{graphicx} 
\usepackage{fullpage}
\usepackage{natbib}
\usepackage{xcolor}
\usepackage{url,hyperref}
\usepackage{bbm}
\usepackage{mathrsfs}
\usepackage{amsmath,amssymb,amsfonts,amsthm}
\usepackage{stmaryrd}
\usepackage{mathtools}
\usepackage{algorithm}
\usepackage{algpseudocode}
\algnewcommand\algorithmicinput{\textbf{Input: }}
\algnewcommand\INPUT{\State\algorithmicinput}
\algnewcommand\algorithmicinitialize{\textbf{Initialize: }}
\algnewcommand\INIT{\State\algorithmicinitialize}
\algnewcommand\algorithmicrun{\textbf{Run: }}
\algnewcommand\RUN{\State\algorithmicrun}
\algnewcommand\algorithmicupdate{\textbf{Update: }}
\algnewcommand\UPDATE{\State\algorithmicupdate}
\algnewcommand\algorithmicset{\textbf{Set: }}
\algnewcommand\SET{\State\algorithmicset}
\algnewcommand\algorithmicquery{\textbf{Query: }}
\algnewcommand\QUERY{\State\algorithmicquery}
\algnewcommand\algorithmicoutput{\textbf{Output: }}
\algnewcommand\OUTPUT{\State\algorithmicoutput}
\usepackage{cleveref}

\newcommand{\cmean}[0]{\ol{\mu}}

\newcommand{\SR}[0]{D}
\newcommand{\samp}[0]{N}
\newcommand{\Nstep}[0]{N_{\textup{step}}}
\newcommand{\vn}[1]{\left|#1\right|}
\newcommand{\Di}{D}
\newcommand{\Rn}{M}

\newcommand{\xl}[0]{x^{\leftarrow}}
\newcommand{\yl}[0]{y^{\leftarrow}}

\newcommand{\defeq}{:=}
\newcommand{\convolve}{*}
\newcommand{\holden}[1]{{\color{purple}Holden: #1}}





\newcommand{\E}[0]{\mathbb{E}}





\newcommand{\sL}[0]{\mathscr{L}}

\newcommand{\N}[0]{\mathbb{N}}

\newcommand{\sP}[0]{\mathscr{P}}
\newcommand{\Pj}[0]{\mathbb{P}}


\newcommand{\R}[0]{\mathbb{R}}

\newcommand{\one}[0]{\mathbbm{1}}




\newcommand{\mk}[0]{\mathbf{k}}

\newcommand{\al}[0]{\alpha}

\newcommand{\ga}[0]{\gamma}

\newcommand{\de}[0]{\delta}
\newcommand{\De}[0]{\Delta}
\newcommand{\ep}[0]{\varepsilon}

\newcommand{\ka}[0]{\kappa}
\newcommand{\la}[0]{\lambda}

\newcommand{\Te}[0]{\Theta}

\newcommand{\Om}[0]{\Omega}
\newcommand{\si}[0]{\sigma}

\newcommand{\ze}[0]{\zeta}


\newcommand{\nin}[0]{\not\in}

\newcommand{\sub}[0]{\subset}

\newcommand{\subeq}[0]{\subseteq}

\newcommand{\bs}[0]{\backslash}
\newcommand{\iy}[0]{\infty}



\newcommand{\rc}[1]{\frac{1}{#1}}
\newcommand{\prc}[1]{\pa{\rc{#1}}}

\newcommand{\fc}[2]{\frac{#1}{#2}}

\newcommand{\pf}[2]{\pa{\frac{#1}{#2}}}

\newcommand{\dd}[2]{\frac{d #1}{d #2}}

\newcommand{\ddd}[1]{\frac{d}{d #1}}

\newcommand{\pl}[0]{\partial}
\newcommand{\nb}[0]{\nabla}

\newcommand{\dx}{\,dx}


\newcommand{\ab}[1]{\left| {#1} \right|}
\newcommand{\an}[1]{\left\langle {#1}\right\rangle}
\newcommand{\ba}[1]{\left[ {#1} \right]}
\newcommand{\bc}[1]{\left\{ {#1} \right\}}

\newcommand{\ce}[1]{\left\lceil {#1}\right\rceil}

\newcommand{\pa}[1]{\left( {#1} \right)}

\newcommand{\ve}[1]{\left\Vert {#1}\right\Vert}

\newcommand{\set}[2]{\left\{{#1}:{#2}\right\}}


\newcommand{\ol}[1]{\overline{#1}}

\newcommand{\wt}[1]{\widetilde{#1}}
\newcommand{\wh}[1]{\widehat{#1}}





\DeclareMathOperator*{\amax}{arg\,max}
\DeclareMathOperator*{\amin}{arg\,min}




\newcommand{\err}{\operatorname{err}}

\newcommand{\id}{\mathrm{id}}

\newcommand{\KL}[0]{\operatorname{KL}}

\newcommand{\poly}{\operatorname{poly}}



\newcommand{\Tr}[0]{\operatorname{Tr}}
\newcommand{\tr}[0]{\operatorname{tr}}
\newcommand{\TV}[0]{\operatorname{TV}}

\providecommand{\cal}[1]{\mathcal{#1}}
\renewcommand{\cal}[1]{\mathcal{#1}}





\newcommand{\pull}[9]{
#1\ar@/_/[ddr]_{#2} \ar@{.>}[rd]^{#3} \ar@/^/[rrd]^{#4} & &\\
& #5\ar[r]^{#6}\ar[d]^{#8} &#7\ar[d]^{#9} \\}

\newcommand{\cmp}[9]{
\xymatrix{
#1 \ar[r]^{#4}{#5} \ar@/_2pc/[rr]^{#8}_{#9} & #2 \ar[r]^{#6}_{#7} & #3
}
}

\newcommand{\ha}[1]{\ar@{^(->}[#1]}
\newcommand{\ls}[1]{\ar@{-}[#1]}
\newcommand{\sj}[1]{\ar@{->>}[#1]}
\newcommand{\aq}[1]{\ar@{=}[#1]}
\newcommand{\acir}[1]{\ar@{}[#1]|-{\textstyle{\circlearrowright}}}
\newcommand{\acil}[1]{\ar@{}[#1]|-{\textstyle{\circlearrowleft}}}
\newcommand{\ard}[1]{\ar@{.>}[#1]}
\newcommand{\mt}[1]{\ar@{|->}[#1]}
\newcommand{\inm}[1]{\ar@{}[#1]|-{\in}}
\newcommand{\inr}{\ar@{}[d]|-{\rotatebox[origin=c]{-90}{$\in$}}}
\newcommand{\inl}{\ar@{}[u]|-{\rotatebox[origin=c]{90}{$\in$}}}


\newcommand{\sumr}[2]{\sum_{\scriptsize \begin{array}{c}{#1}\\{#2}\end{array}}}
\newcommand{\trow}[2]{\scriptsize \begin{array}{c}{#1}\\{#2}\end{array}}

\newcommand{\sumo}[2]{\sum_{#1=1}^{#2}}
\newcommand{\sumz}[2]{\sum_{#1=0}^{#2}}
\newcommand{\prodo}[2]{\prod_{#1=1}^{#2}}




\newcommand{\coltwo}[2]{
\begin{pmatrix}
{#1}\\
{#2}
\end{pmatrix}}

\newcommand{\beq}[1]{\begin{equation}\llabel{#1}}
\newcommand{\eeq}[0]{\end{equation}}
\newcommand{\bal}[0]{\begin{align*}}
\newcommand{\eal}[0]{\end{align*}}
\newcommand{\ban}[0]{\begin{align}}
\newcommand{\ean}[0]{\end{align}}













\newcommand{\fixme}[1]{{\color{red}#1}}
\newcommand{\llabel}[1]{\label{#1}\text{\fixme{\tiny#1}}}


\newcommand{\arxiv}[1]{\url{http://www.arxiv.org/abs/#1}}


\newcommand{\vocab}[1]{\textbf{#1}} 

\allowdisplaybreaks[2]

\DeclareFontFamily{U}{wncy}{}
    \DeclareFontShape{U}{wncy}{m}{n}{<->wncyr10}{}
    \DeclareSymbolFont{mcy}{U}{wncy}{m}{n}
    \DeclareMathSymbol{\Sh}{\mathord}{mcy}{"58}

\newcommand\numberthis{\addtocounter{equation}{1}\tag{\theequation}}

\newtheoremstyle{norm}
{12pt}
{12pt}
{}
{}
{\bf}
{:}
{.5em}
{}

\newtheorem{thm}{Theorem}[section]
\newtheorem*{thm*}{Theorem}

\newtheorem*{clm*}{Claim}

\newtheorem*{conj*}{Conjecture}
\newtheorem{cor}[thm]{Corollary}
\newtheorem{lem}[thm]{Lemma}
\newtheorem*{lem*}{Lemma}

\newtheorem{lemma}[thm]{Lemma}
\newtheorem{definition}[thm]{Definition}

\theoremstyle{norm}
\newtheorem{prb}[thm]{Problem}
\newtheorem*{prb*}{Problem}

\newtheorem{assm}[thm]{Assumption}

\newtheorem*{ax*}{Axiom}
\newtheorem{df}[thm]{Definition}
\newtheorem*{df*}{Definition}

\newtheorem*{ex*}{Example}

\newtheorem{expl}[thm]{Exploration}

\newtheorem*{pos*}{Postulate}
\newtheorem{pr}[thm]{Proposition}
\newtheorem*{pr*}{Proposition}

\newtheorem*{qu*}{Question}

\newtheorem*{rem*}{Remark}

\title{Learning Mixtures of Gaussians Using Diffusion Models}
\author{
  \begin{tabular}{c}
    Khashayar Gatmiry \\
    \texttt{gatmiry@mit.edu} \\
    MIT
  \end{tabular} \quad 
  \begin{tabular}{c}
    Jonathan Kelner \\
    \texttt{kelner@mit.edu} \\
    MIT
  \end{tabular} \quad
  \begin{tabular}{c}
    Holden Lee \\
    \texttt{hlee283@jhu.edu} \\
    JHU
  \end{tabular} 
  }
\date{\today}

\begin{document}

\maketitle

\begin{abstract}
    We give a new algorithm for learning mixtures of $k$ Gaussians (with identity covariance in $\mathbb{R}^n$) to TV error $\varepsilon$, with quasi-polynomial ($O(n^{\text{poly\,log}\left(\frac{n+k}{\varepsilon}\right)})$) time and sample complexity, under a minimum weight assumption. Our results extend to continuous mixtures of Gaussians where the mixing distribution is supported on a union of $k$ balls of constant radius. In particular, this applies to the case of Gaussian convolutions of distributions on low-dimensional manifolds, or more generally sets with small covering number, for which no sub-exponential algorithm was previously known. Unlike previous approaches, most of which are algebraic in nature, our approach is analytic and relies on the framework of diffusion models.     
    Diffusion models are a modern paradigm for generative modeling, which typically rely on learning the score function (gradient log-pdf) along a process transforming a pure noise distribution, in our case a Gaussian, to the data distribution. 
    Despite their dazzling performance in tasks such as image generation, there are few end-to-end theoretical guarantees that they can efficiently learn nontrivial families of distributions; we give some of the first such guarantees.
    We proceed by deriving higher-order Gaussian noise sensitivity bounds for the score functions for a Gaussian mixture to show that that they can be inductively learned using piecewise polynomial regression (up to poly-logarithmic degree), and combine this with known convergence results for diffusion models. 
\end{abstract}

\newpage
\tableofcontents
\newpage
\section{Introduction and main results}

We address the problem of learning \emph{generalized} mixture of Gaussians (with identity covariance) from samples, using the framework of diffusion models. Formally, we wish to learn the following distribution on $\R^n$ from iid samples:
\[
P_0 = Q_0 \convolve \cal N(0,\si_0^2 I_n),
\]
which we think of as a (possibly continuous) mixture of Gaussians, where $Q_0$ is the distribution of the means. We assume that the support of $Q_0$ is contained within $k$ Euclidean balls, each holding a non-trivial amount of mass. A precise definition will be provided shortly in~\Cref{a:mog}. In the special case that $Q_0 = \sumo jk \al_j \de_{\mu_j}$, this is exactly a mixture of $k$ Gaussians; however, our results hold for any generalized mixture of Gaussians under~\Cref{a:mog}. Note that if the covariance is known but non-identity, we can first transform the data to be in this setting. Our goal is distribution learning, to output samples from a distribution $\ep$-close in TV distance to the actual one. 

Our motivation for this class is twofold: First, as mixtures of Gaussians are one of the simplest but nevertheless challenging mixture models, this is a classic learning problems in statistics and computer science.
As a special case, our result also obtains a completely new method distinct from the common algebraic approaches for learning a discrete mixtures of $k$ Gaussians, e.g., \cite{diakonikolas2020small} which obtains quasi-polynomial complexity for that setting (see \Cref{ss:rel} for a discussion of this and other related work). Second, diffusion models are an empirically successful paradigm for generative modeling, which work well for learning multimodal distributions in practice but for which theoretical guarantees are lacking. By applying diffusion models to the problem of learning generalized Gaussian mixtures, our work is the first to give theoretical grounding to the success of diffusion models by fully learning a highly non-trivial class of distributions without assuming oracle access to the score function estimates. This necessitates solving the problem of learning the score function in sub-exponential time. 
Interestingly, our approach to learning the score across multiple noise levels---essential for executing the diffusion process---actually leverages the diffusion process itself via maintaining a set of ``warm-starts."

\subsection{Main results}
For $x_0\in \R^n$, let $B_R(x_0) = \set{x\in \R^n}{\vn{x-x_0}\le R}$ denote the closed ball of radius $R$ around $x$. 
We make the following assumptions on $Q_0$. 
\begin{assm}[$k$-locality]\label{a:mog}
Fix $R_0\ge 1$, $\SR$, and $k$. The following hold:
\begin{enumerate}
    \item For every point $\mu$ in the support of $Q_0$, we have $Q_0(B_{R_0}(\mu))\ge \al_{\min}$.
    \item There exist $\cmean_1,\ldots, \cmean_k$ such that the support of $Q_0$ is contained in $\bigcup_{i=1}^k B_{R_0}(\cmean_i)$.
    \item $Q_0(B_\SR(0))=1$.
\end{enumerate}
\end{assm}
Note that generalized mixture models are a strict generalization of a mixture of $k$ Gaussians: we permit mixtures of $k$ arbitrary distributions supported on balls of radius $R_0$, convolved with Gaussians. Our main theorem is that these mixtures can be learned with quasi-polynomial time and samples with an algorithm based on diffusion models.
\begin{thm}\label{t:main}
Given $\ep > 0$ with 
$\ep \le \min\bc{ \rc 2, \fc{\si_0}{R_0}, \rc{D} , \rc n , \al_{\min}}$,
and given Assumption~\ref{a:mog}, \Cref{a:main} learns a distribution that is $\ep$-close in TV distance to $P_0$ with time and sample complexity
$$\pa{n \ln \prc{\de}}^{O\pa{\pa{\ln \pf{1}{\ep}^3 + \pa{\frac{R_0}{\sigma_0}}^6}\ln\pa{\frac{1}{\ep}}^4}}$$
with probability $\ge 1-\de$.
\end{thm}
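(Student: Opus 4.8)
The plan is to reduce the problem to \emph{score estimation} for the noising process and then invoke an off-the-shelf convergence guarantee for diffusion-model samplers. Concretely, consider the Ornstein–Uhlenbeck process (equivalently, after a time-dependent rescaling, the Gaussian-smoothing process $P_t = Q_0 \convolve \cal N(0,\si_t^2 I_n)$) started at $P_0$, with score functions $s_t = \nabla \log p_t$. Known results on discretized reverse SDEs state that if one has estimates $\wh s_t$ whose time-averaged $L^2(P_t)$ error is at most $\poly(\ep)$, then the sampler's output is $\ep$-close to $P_0$ in TV, using $\poly(1/\ep)$ discretization steps. So it suffices to prove: for each relevant noise scale $\si_t$, one can learn $s_t$ to $L^2(P_t)$ error $\poly(\ep)$ in quasi-polynomial time and sample complexity, with the per-scale cost aggregating to the bound in the theorem.

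For a fixed scale, I would estimate $s_t$ by \emph{piecewise polynomial regression}. Two ingredients are needed. First, a localization step: since $P_t = Q_0\convolve\cal N(0,\si_t^2 I_n)$ and $Q_0$ is supported in $\bigcup_i B_{R_0}(\cmean_i)$, all but $\poly(\ep)$ mass of $P_t$ lies in $\bigcup_i B_{R_0 + O(\si_t\sqrt{\ln(1/\ep)})}(\cmean_i)$, a region of quasi-polynomial covering number (at a scale to be chosen). Second, the analytic core: \emph{higher-order Gaussian noise sensitivity bounds}. Writing $s_t(x) = \si_t^{-2}\pa{\E[\mu \mid x] - x}$ where $\mu\sim Q_0$ and $x\mid\mu \sim \cal N(\mu,\si_t^2 I_n)$, one controls all derivatives $\nabla^{(j)} s_t$ in terms of $j$, $\si_t$, $R_0$, $D$ via cumulant/Hermite-decay estimates for this Gaussian posterior. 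These bounds should be strong enough that on each cell of a sufficiently fine partition of the localized region, $s_t$ is approximated to error $\poly(\ep)$ by a polynomial of degree $d = \polylog((n+k)/\ep)$. Given this approximation class, one runs empirical risk minimization over piecewise polynomials using the denoising score-matching objective on noised samples $x_t \mid x_0$, $x_0\sim P_0$ — whose population minimizer is the true $s_t$ — and standard least-squares / uniform-convergence arguments give $\wh s_t$ with the desired accuracy and sample complexity roughly $(\#\text{cells})\cdot\binom{n+d}{d}\cdot\poly(1/\ep)$, which is quasi-polynomial.

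The subtlety is that the partition at scale $t$ requires knowing (approximately) the cluster centers $\cmean_i$, which are not given. I would resolve this \emph{inductively from large noise to small noise}: at the top scale $P_T$ is within $\ep$ of a single Gaussian so the partition is trivial; having learned $\wh s_s$ for scales $s > t$ one can approximately sample from $P_t$, localize the $O(k)$ clusters, refine the partition, and then learn $\wh s_t$. Since cluster locations move only by $O(\Delta t\cdot\poly)$ between consecutive discretization steps, the previous partition is an adequate scaffold. One then tracks how the $L^2$ score errors and the cluster-mislocalization errors accumulate over the $\poly(1/\ep)$ steps, feeds the total into the diffusion convergence bound, and checks that the compounded degree and cell-count requirements reproduce the stated exponent $\pa{\ln(1/\ep)^3 + (R_0/\si_0)^6}\ln(1/\ep)^4$ — the $(R_0/\si_0)^6$ term reflecting how a large separation-to-noise ratio inflates both the polynomial degree and the number of cells needed at the hardest (small-noise) scales.

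The main obstacle is the higher-order noise sensitivity estimate: obtaining bounds on \emph{all} derivatives of $\log p_t$ that are simultaneously (i) essentially dimension-free, (ii) strong enough for $\polylog$-degree polynomial approximation, and (iii) uniform across the whole range of noise scales — in particular near the small-noise end, where modes are well separated and the score is steep, which is precisely where naive bounds degrade. A secondary difficulty is making the inductive bootstrapping rigorous: one must ensure the per-step errors in sampling, cluster localization, and score regression do not compound geometrically over the time discretization, so that the final TV error is still $O(\ep)$.
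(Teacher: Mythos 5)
Your high-level architecture is the same as the paper's: reduce to $L^2$ score estimation via a reverse-SDE convergence guarantee, estimate each score by piecewise polynomial regression over cluster-indexed cells, control the degree via higher-order noise-sensitivity bounds for the posterior-mean representation $f_{\si^2}(y)=\E[\mu\mid Y=y]$, and bootstrap cluster locations from coarse to fine noise scales. Two steps, however, have genuine gaps.

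First, your bootstrapping mechanism --- ``approximately sample from $P_t$, localize the $O(k)$ clusters'' --- does not produce warm starts at the resolution the regression needs. A sample $y=\mu+\si_t\xi$ sits at distance $\approx \si_t\sqrt n$ from its center $\mu$, so clustering samples of $P_t$ (whether exact, obtained by noising data, or approximate, from the learned sampler) localizes centers only to radius $\Te(\si_t\sqrt n)$. Feeding $R\sim \si_t\sqrt n$ into the degree bound makes the exponent polynomial in $n$ rather than polylogarithmic. The paper's mechanism (\Cref{lem:scorewarmstart}) is to apply the already-learned denoiser $y\mapsto y+\si^2 s(y)\approx \E[\mu\mid Y=y]$ to fresh samples: this collapses the $\sqrt n$ spread and lands within $O\pa{R_0+\si\sqrt{\ln(1/\al_{\min})}}$ of a true center for a $\gtrsim \al_{\min}$ fraction of samples, after which a greedy set cover extracts $O(k\ln(1/\al_{\min}))$ warm starts. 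Without some such use of the score itself, the induction does not close.

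Second, the polynomial approximation must hold in $L^2$ of the mixture restricted to a Voronoi cell, not in $L^2$ of a Gaussian. Hermite-decay/derivative bounds of the kind you describe naturally give approximation with respect to $\ga_{\wh\mu_i,\si^2}$, and a change of measure to the mixture costs a factor $e^{O(\sqrt d\, R)}$ at degree $d$ (\Cref{l:com-poly}), which at $d=\polylog(1/\ep)$ swamps the $\poly(\ep)$ target. The paper must first smooth $f$ by a high-order finite difference of the Ornstein--Uhlenbeck semigroup before truncating the Hermite expansion (\Cref{l:poly-approx-mix}), precisely so that the tail is controlled in higher $L^p(\ga)$ norms and survives the change of measure; your proposal does not identify this obstruction. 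Two smaller points: ``a sufficiently fine partition of the localized region'' should be one polynomial per warm-start Voronoi cell ($O(k\ln(1/\al_{\min}))$ cells), since any covering at a small scale has cardinality exponential in $n$; and the score errors do not compound geometrically across time steps in the convergence bound --- the KL guarantee is a sum of per-step errors, and only the warm starts propagate, which the induction recomputes.
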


\begin{algorithm}[h!]\label{alg:mainalgo}
\caption{Learning Gaussian mixture with diffusion model}
\begin{algorithmic}[1]
    \INPUT Error $\ep$, failure probability $\de$, sample access to mixture $P_0$ satisfying \Cref{a:mog}. \newline 
    \algorithmiccomment{Learning}
        \State Let $t_1 = 
        \frac{\ep^2 \sigma_0^2}{2\sqrt n}$ and choose a step size schedule $t_1<\cdots < t_{\Nstep}$ as in \Cref{t:dm-kl}.
    \State Let 
    $d = \Te\pa{\pa{\ln \pf{1}{\ep}^3 + \pa{\frac{R_0}{\sigma_0}}^6}\ln\pa{\frac{1}{\ep}}^4}$ for an appropriate constant. \algorithmiccomment{Degree of polynomial approximation}
    \State Set $\cal C_{\Nstep}=\{0\}$. \algorithmiccomment{Set of warm starts}
    \For{$\ell$ from $\Nstep$ to 1}
        \State Let $V_1,\ldots, V_{k_\ell}$ be the Voronoi partition induced by $\cal C_\ell = \{\wh \mu_1,\ldots, \wh \mu_{k_\ell}\}$.
        \State Draw $\samp=\pa{n
        \ln \prc{\de}}^{\Te\pa{d}}$ samples $x_1,\ldots, x_\samp\sim P_0$ and let $y_i = x_i + \sqrt{t_\ell}\cdot \xi_i$, $\xi_i\sim \cal N(0,I_n)$.  
        \algorithmiccomment{Construct the dataset for the denoising objective, for learning the score function.}
        \For{$j$ from 1 to $k_\ell$}
            \State 
            Let $\si_\ell^2 = t_\ell + \si_0^2$ and 
            \[
            (\wh b_\mk^{(j)})_{|\mk|\le d} 
            = \amin_{(b_\mk)_{|\mk|\le d}\in B_\Di(0)}
            \sum_{i: y_i\in V_j}\vn{\sum_{|\mk|\le d}b_\mk h_\mk(y_i - \wh \mu_j) - \pa{\pa{1-\fc{\si_\ell^2}{t_\ell}}y_i + \fc{\si_\ell^2}{t_\ell} x_i}}^2
            \]
            where $h_{\mk}$ is the multivariate Hermite polynomial with multi-index $\mk$ and variance $\si^2$ (see \Cref{sec:notation}).   \algorithmiccomment{Perform piecewise polynomial regression according to Voronoi partition, for the denoising objective.}
        \EndFor{}
        \State Define $\wh g_\ell(y)\defeq \sumo i{k_\ell} \one_{V_i}(y) \sum_{|\mk|\le d} \wh b_\mk^{(i)} h_\mk(y-\wh \mu_i)$ and $s_\ell(y) = \fc{\wh g_\ell(y)-y}{\si_\ell^2}$.\newline
        \algorithmiccomment{Define the piecewise polynomial and the score estimate.}
        \If{time $t_\ell$ has halved since last computation of warm starts}
        \State Run \Cref{a:warm-starts} with fresh samples, and failure probability $\fc{\de}{2\Nstep}$ to obtain $\cal C_{\ell-1}$. \newline
        \algorithmiccomment{Recompute warm starts by clustering data points denoised using the score estimate.}
        \Else{\, let $\cal C_{\ell-1}=\cal C_\ell$}
        \EndIf 
    \EndFor{}
    \OUTPUT{Score functions $s_1,\ldots, s_{\Nstep}$.}
    \INPUT{Score functions $s_1,\ldots, s_{\Nstep}$ for times $t_1<\cdots <t_{\Nstep}$}  \algorithmiccomment{Generation}
    \State Draw $\wh y_{t_{\Nstep}}\sim \cal N(0,t_{\Nstep} I_n)$.
    \For{$\ell$ from $N$ to 2}
        \State Let $ \wh y_{t_{\ell-1}} = \wh y_{t_{\ell}} + 
        2\ba{(t_{\ell}-1) - \sqrt{(t_{\ell-1}-1)(t_{\ell}-1)}}
        s_{t_{\ell}}(\wh y_{t_{\ell}}) + \sqrt{t_{\ell} - t_{\ell-1}}\cdot  \xi_{t_\ell}$ where $\xi_{t_\ell}\sim \cal N(0,I_n)$.
        \algorithmiccomment{Discretization of reverse SDE}
    \EndFor{}
    \OUTPUT{$\wh y_{t_1}$}
\end{algorithmic}
\label{a:main}
\end{algorithm}
Note that because of the restriction on $\ep$, $\ln \prc{\ep}$ implicitly has logarithmic dependence on $D$, $n$, and $\rc{\al_{\min}}\ge k$. In the case where $\fc{R_0}{\si_0}$ is a constant, we can remove the dependence on $\fc{R_0}{\si_0}$. We further remark that in the case of a (discrete) mixture of $k$ Gaussians, a straightforward SVD pre-processing step can replace the dependence on $n$ to $\min\{n,k\}$, at an extra additive cost polynomial in $n$  (see e.g., \cite{vempala2004spectral}).

We note that as a non-parametric family of distributions, it is already highly non-trivial that a $k$-local generalized mixture of Gaussians has sample complexity that is sub-exponential in dimension. This is an interesting example of a class which does not suffer from the curse of dimensionality in either the sample or time complexity.

\Cref{a:main} consists of two parts: The ``learning" part involves learning the score functions of distributions that bridge the data distribution with a pure noise (Gaussian) distribution. Once these scores are obtained, the ``generation" part uses these learned score functions and can generate as many samples as desired. The high-probability bound is over the learning part: with high probability, the learned score functions are such that the generation process satisfies the TV distance bound.

For the learning part, a step size schedule is chosen that would allow generation with specified error; the algorithm sequentially learns the score functions from large to small $t_\ell$ (time or noise level). 
As the noise level $t_\ell$ decreases, a set of warm starts (or cluster centers) $\cal C_{\ell}$ at the current resolution is maintained and updated. Initially (at the highest noise level $t_{\Nstep}$), all points belong to the same cluster, so $\cal C_{\Nstep}=\{0\}$. At each time step, following the recipe for learning diffusion models, we cast the score estimation problem for $P_{t_\ell} = P_0 * \cal N(0,t_\ell I_n)$ as a supervised learning problem involving denoising data points. The score function is learned within the family of piecewise low-degree polynomials, whose regions are given by the Voronoi diagram of the cluster centers. This is a polynomial regression problem that can be efficiently solved. Whenever the noise level $t_\ell$ is reduced by a constant factor, the set of warm starts $\cal C_{\ell}$ is refined using \Cref{a:warm-starts}. \Cref{a:warm-starts} uses the current score estimate to denoise the data points and obtain estimates of the means, accurate at the current resolution. These means are clustered to obtain $\cal C_{\ell-1}$. Finally, the generation part follows the generation procedure for a diffusion model: start with a Gaussian sample, and follow the reverse SDE with the score estimate to obtain a sample from the learned data distribution.

A special case of \Cref{t:main} is the problem of learning a distribution that is equal to a distribution on a low-dimensional manifold convolved with a Gaussian. The ``manifold" assumption that we need is much weaker: simply that it has can be covered by $C^l$ 
balls of radius $R_0$, for some constant $C$, where $l > 0$ is the parameter that governs the sample complexity. 
It is straightforward to obtain the following. 
\begin{cor}
\label{c:manifold}
    Fix $\si_0=1$ and constants $R_0,C>1$. Let $0<\ep<\rc 2$. Suppose that $Q_0$ is supported on a set $M$ such that $M$ has radius $D$, $M$ can be covered with $C^l$ balls of radius $R_0$, and such that for every point $\mu$ in the support of $Q_0$, $Q_0(B_{R_0}(\mu))\ge \fc{\ep}{C^l}$. 
    Then \Cref{a:main} learns a distribution that is $\ep$-close in TV distance to $P_0$ with time and sample complexity $\pa{n\ln \prc\de}^{O\pa{l + \ln\pf{nD}\ep}^7}$ with probability $\ge 1-\de$.
\end{cor}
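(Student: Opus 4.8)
The plan is to obtain \Cref{c:manifold} by instantiating \Cref{t:main}, so that the only work is a bookkeeping check that the manifold-covering hypotheses imply \Cref{a:mog}, together with a reconciliation of the error-parameter constraints. First I would recenter so that $M$ lies in $B_D(0)$; undoing this shift on the output samples is free and preserves TV distance, and the rest of \Cref{a:main} (the warm-start routine, and the polynomial regression, which is centered at the estimated means) just translates along with the data, so \Cref{t:main} applies verbatim to the recentered problem with the same parameters. After this, the three hypotheses of \Cref{c:manifold} become exactly \Cref{a:mog} under the identifications $k \leftarrow C^d$ (the number of radius-$R_0$ balls covering $M \supseteq \Supp(Q_0)$), $\al_{\min}\leftarrow \ep/C^d$, and the given $R_0$ and $D$: item~(1) is the weight bound $Q_0(B_{R_0}(\mu))\ge \ep/C^d$, item~(2) is the covering, and item~(3) is that $M\subseteq B_D(0)$.

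The one mismatch is that \Cref{t:main} requires a target accuracy of at most $\al_{\min}$, whereas here $\al_{\min}=\ep/C^d<\ep$. I would therefore run \Cref{a:main} with the smaller target accuracy
\[
\ep' \defeq \min\bc{\fc{\ep}{C^d},\ \fc{\si_0}{R_0},\ \rc D,\ \rc n,\ \rc 2},
\]
which by construction satisfies every hypothesis of \Cref{t:main} (recall $\si_0=1$). Since $C^d\ge 1$ we have $\ep'\le\ep$, so the output of \Cref{a:main} is in particular $\ep$-close to $P_0$ in TV distance.

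It then remains only to simplify the complexity guarantee of \Cref{t:main} at accuracy $\ep'$. Since $R_0$ and $C$ are absolute constants, $(R_0/\si_0)^6=R_0^6=O(1)$, so the exponent $O\pa{\pa{\ln(1/\ep')^3+(R_0/\si_0)^6}\ln(1/\ep')^4}$ collapses to $O(\ln(1/\ep')^7)$; and from $\rc{\ep'}\le \fc{2C^d R_0 D n}{\ep}$ we get $\ln\rc{\ep'}=O\pa{d+\ln\fc{nD}{\ep}}$, so the exponent is $O\pa{\pa{d+\ln(nD/\ep)}^7}$, which is the claimed bound after absorbing the constants $C,R_0$. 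The only substantive point is the reconciliation in the second paragraph: the weight hypothesis of \Cref{c:manifold} forces $\al_{\min}$ to scale like $\ep/C^d$ rather than $\ep$, and one must verify that shrinking the internal accuracy to $\ep'$ does not blow up the exponent -- it does not, precisely because the extra $\ln(C^d)=\Theta(d)$ contributed to $\ln(1/\ep')$ is exactly the additive $d$ already present inside $(d+\ln(nD/\ep))^7$. No analytic input beyond \Cref{t:main} is required.
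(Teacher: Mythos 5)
Your proposal is correct and follows essentially the same route as the paper's own (very short) proof: instantiate \Cref{t:main} with $\al_{\min}=\ep/C^d$ and $k=C^d$, run the algorithm at the reduced accuracy $\min\bc{\ep/C^d,\rc2,\rc{R_0},\rc n,\rc D}$, and note that $\ln\pf{C^d}{\ep}=O\pa{d+\ln\prc\ep}$ so the exponent stays $O\pa{(d+\ln(nD/\ep))^7}$. The extra bookkeeping you include (recentering $M$ into $B_D(0)$ and checking that the shrunken accuracy does not inflate the exponent) is a correct elaboration of steps the paper leaves implicit.
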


We note that this is a setting where diffusion models can provably learn under a manifold assumption, but in contrast to most prior work, the distribution \emph{cannot} be learned using straightforward methods such as through binning or kernel density estimation. For example, if $M$ can be covered with $\pf{C}{\ep}^l$ balls of radius $\ep$, then learning a distribution exactly supported on $M$ can be done to Wasserstein distance $\ep$ with complexity $\wt O\pa{\pf{C}{\ep}^l}$ simply with a binning procedure. However, we consider learning a distribution on $M$ convolved with a Gaussian, which is a more challenging problem. 

Additionally, the complexity of our algorithm (stated in Theorem~\ref{t:main}) for the special case of discrete mixtures of $k$ Gaussians is known  using a completely different algorithm based on algebraic methods \citep{diakonikolas2020small} (without the dependence on $\al_{\min}$ or $D$ and with better exponents). 
More precisely, their result learns a distribution $\ep$-close in TV distance to the mixture with time and sample complexity  $\poly(nk/\ep) + (k/\ep)^{O(\ln^2 k)}$, and outputs a density function. 
Though their algorithm does obtain better dependencies, their result relies essentially on a discrete mixture structure, while our result holds for the much larger, non-parametric family of generalized mixtures; to our knowledge the extension to a generalized mixture is novel. (Because their algorithm relies on finding an $\ep$-cover of possible parameters, and an $\ep$-cover of a constant-radius ball is exponential in dimension, it seems unlikely that their methods extend to this setting.)
It is interesting to note that the generic framework of diffusion models allows us to match (up to polylogarithmic factors in the exponent) guarantees obtainable using more specialized algebraic procedures. 

Note that we do \emph{not} proceed by learning the density function; instead, ``learning" the distribution means that we have a procedure  to generate an additional sample, each step of which involves evaluation of a learned score function. It may be possible to upgrade this guarantee to a guarantee of learning the density~\citep{qin2023fit}.

We leave open the questions of removing the requirement on $\al_{\min}$ and improving the polynomial dependence on $\ln\pf{1}{\ep}$ (in Theorem~\ref{t:main}) and $d$ (in~\Cref{c:manifold}). The question remains of whether the complexity of learning mixtures of Gaussians (of equal known covariance) is truly quasi-polynomial, or is actually polynomial. Since our methods work just as well in the more general setting of continuous mixtures, we believe that doing better than quasi-polynomial complexity would require an algorithm specific to a mixture of $k$ Gaussians. Further structure, e.g., hierarchical structure, could also make the problem easier, though our current analysis does not benefit from such assumptions.

We note that mixtures of Gaussians are particularly suited to learning with diffusion models because diffusions preserve the Gaussian. 
We expect that similar results are possible in other settings with a ``match" between the family and the diffusion, e.g., mixtures of product distributions on the hypercube where diffusion is a random walk on the hypercube (i.e. bit-flip noise). It would be interesting to find other families of distributions which be learned using diffusion models, including families of conditional distributions such as mixtures of linear regressions.

Finally, we note that in contrast to our algorithm based on piecewise polynomial regression, in practice the score function is typically learned with a neural network. The score function for a Gaussian mixture is exactly represented by a softmax neural network, which raises the question of whether it can be learned by a neural network with gradient descent. Understanding neural network training dynamics for diffusion models is an important open direction.

\subsection{Related work}
\label{ss:rel}
\paragraph{Learning mixtures of Gaussians.} The problem of learning a mixture of Gaussians from samples has an illustrious history \citep{titterington1985statistical}.
We first distinguish between several types of results. 
First, one can ask for parameter learning or distribution learning---either outputting parameters that are close to the ground-truth parameters, or simply a distribution that is close (e.g., in TV distance) to the ground-truth distribution. For distribution learning, the learning can be improper, that is, the output need not be a mixture of Gaussians. 
Second, one can either study the problem from an information theoretic perspective---the minimum number of samples required to get within a specified error regardless of computational cost---or computational complexity perspective---where the emphasis is on the running time of the algorithm. Below, $n$ denotes the ambient dimension, $k$ the number of components, and $\ep$ the target accuracy.

Most earlier works go through parameter learning and require the means to be sufficiently separated. For identity-covariance Gaussians, \cite{dasgupta2000two,arora2005learning,vempala2004spectral} show that spectral methods work with a separation of at least $\wt\Om(\min\{n,k\}^{1/4})$. 
Allowing arbitrary covariances, \cite{moitra2010settling} show that whenever the mixture is ``$\ep$-statistically learnable," it can be learned with running time and sample complexity $\exp(k)\poly(n,\rc{\ep})$.
Several works use the sum-of-squares method 
\citep{kothari2018robust,hopkins2018mixture} to learn a mixture with separation $\Om(k^\ga)$ in time $n^{\poly(1/\ga)}$. These methods extend to a broader class of mixture distributions, where components have moments which can be certifiably bounded. 
\cite{liu2022clustering} obtain a polynomial-time algorithm whenever the separation is $\Om(\ln^{\rc 2+c}k)$ for constant $c>0$. 

Separation conditions are unavoidable for sample-efficient parameter learning. \cite{regev2017learning} shows that the threshold for efficient parameter learning is $\Te(\sqrt{\ln k})$: with separation $\Om(\sqrt{\ln k})$, polynomially many samples suffice (information-theoretically), while with separation $o(\sqrt{\ln k})$, super-polynomially many samples are required. \cite{doss2020optimal} conducts a more fine-grained study of this problem.

Hence, any sample-efficient algorithm for learning mixtures of Gaussians without separation cannot go through parameter learning. The optimal information theoretic complexity is known up to logarithmic factors: \cite{ashtiani2018nearly} show a sample complexity bound in TV distance of $\wt \Te(kn^2/\ep^2)$ for a mixture of $k$ Gaussians in $\R^n$ (with any variance) and $\wt \Te(kn/\ep^2)$ for axis-aligned Gaussians. However, their algorithms are based on brute-force search and have
exponential running time. 
\cite{acharya2017sample} give an improper nearly linear-time algorithm based on polynomial interpolation which learns a mixture of Gaussians with arbitrary variances in 1 dimension, with $\wt O(k/\ep^2)$ samples.
Most relevant for us, for a mixture of Gaussians with identity covariance, 
a breakthrough work by \cite{diakonikolas2020small} uses algebraic geometry to obtains a time and sample complexity of $(k/\ep)^{O(\ln^2k)}$ plus polynomial factors. 

In the statistics literature, the model is referred to as the Gaussian location mixture. \cite{saha2020nonparametric,kim2022minimax} consider arbitrary mixing measures $Q_0$ and give finite-sample bounds using non-parametric MLE (maximum likelihood estimation) for squared Hellinger risk, which scale as $\fc{\log N}{N}$ in terms of the number of samples $N$. The risk depends on the volume of an approximate support of $Q_0$, but also include a constant with unspecified dependence on the dimension $n$.

We note that the picture is more complicated when variances are unknown: \cite{diakonikolas2017statistical} obtain statistical query (SQ) lower bounds ($2^{n^{\Om(1)}}$ queries of fixed polynomial precision) based on a connection with non-Gaussian component analysis, with a ``parallel pancake" construction in a unknown direction. \cite{gupte2022continuous} show that $\log n$ components are enough to obtain a super-polynomial lower bound assuming exponential hardness of the classical LWE problem.

Finally, a recent line of work considers the problem of robust learning of Gaussian mixtures \citep{liu2021settling,bakshi2022robustly}, i.e., under adversarial corruption of some fraction of samples; these methods have complexity $n^{O(k)}$.

\paragraph{Concurrent work.} 
During the preparation of this manuscript, we were made aware of independent and concurrent work by Chen, Kontonis, and Shah \citep{chen2024learning} which also gave guarantees for learning Gaussian mixtures using diffusion models and piecewise polynomial regression for score estimation. They consider the more general case where covariances are well-conditioned but arbitrary, but their runtime scales exponentially in $\poly(k/\epsilon)$ rather than $\poly\log(k/\epsilon)$, which is unavoidable by SQ lower bounds \citep{diakonikolas2017statistical}. In contrast to our work, their work uses a different approach to polynomial approximation and does one-shot learning of parameters via spectral methods.

\paragraph{Diffusion models.} Diffusion models \citep{sohl2015deep,song2019generative,song2020score} are a modern generative modeling paradigm which involves defining a forward noising process which turns a data distribution into a pure noise (e.g., Gaussian) distribution, and then learning to simulate the reverse process. For diffusion models based on SDEs (stochastic differential equations), the reverse process involves the score function (gradient of log-pdf) of the intermediate distributions; hence they are also called score-based generative models (SGM). See \cite{tang2024score} for a technical tutorial. 
We note that diffusion models are essentially a reparameterization of stochastic localization \citep{eldan2013thin} as pointed out by \cite{montanari2023sampling}. Stochastic localization has been independently studied in the probability literature and been used to obtain new results in sampling \citep{chen2022localization,el2022sampling}.

Theoretical work has focused on two problems: (1) When is it possible to efficiently learn the score? (2) Given a learned ($L^2$-accurate) score function, what guarantees can we obtain for sampling from the data distribution? Answers to these two questions together would give an end-to-end result for learning via diffusion models.

Addressing (2), it is a remarkable fact that having a $L^2$-accurate score function for the sequence of distributions is sufficient for sampling under minimal distributional assumptions, allowing even multimodal distributions which cause slow mixing for local MCMC algorithms 
\citep{lee2023convergence,chen2023score,chen2023improved}. 
\cite{benton2023linear} show that it suffices to have a number of steps linear in the dimension. 

Question (1) has proved to be thornier; it has been a challenge to obtain end-to-end results for non-trivial settings. 
Several works consider the problem of representability by neural networks, such as \cite{cole2024score} for distributions whose log-density relative to a Gaussian can be represented by a low-complexity neural network, or \cite{mei2023deep} for graphical models. 
Following the work on neural network function approximation for smooth functions, \cite{oko2023diffusion} give nearly minimax optimal estimation rates for densities in Besov spaces. \cite{wibisono2024optimal} relates score learning to kernel density estimation. 
The manifold assumption is another popular setting for analysis: \cite{de2022convergence} considers generalization error, and \cite{chen2023score} give learning guarantees when the distribution is supported on a subspace (however, this family of distributions can be trivially learned by first recovering the subspace). 
Finally, we note that score matching can be used to learn exponential families for which sampling is difficult \citep{koehler2022statistical,pabbaraju2024provable}; however, these methods only use the score for the data distribution, rather than a sequence of distributions as in a diffusion model.

\cite{shah2023learning} consider using diffusion models to learn Gaussian mixtures, and show that diffusion models can do as well as the EM algorithm. However, they either require $k=2$, or the components to be well-separated and $O(1)$-warm starts to be given for all the means. 
Gaussian mixtures are a popular toy model for understanding various aspects or behavior of diffusion models, including learning behavior \citep{cui2023analysis}, sample complexity \citep{biroli2023generative}, guided diffusion \citep{wu2024theoretical}, and critical windows \citep{li2024critical}.

\paragraph{Analytic conditions for learning functions.} We rely on the noise sensitivity/stability framework \citep{klivans2008learning}, which shows that Gaussian noise stability (or small Gaussian surface area) implies approximability by a low-degree polynomial, giving an efficient ``low-degree algorithm" for learning. We note a similar-in-spirit result that under the Gaussian distribution, intersections of $k$ halfspaces can be learned in time $n^{O(\ln k)}$. The noise sensitivity framework was previously developed for boolean functions on the hypercube \citep{benjamini1999noise} and applied to learning function classes such as functions of halfspaces \citep{klivans2004learning,kalai2008agnostically}.

Learnability by neural networks can also be related to complex analytic properties of the function: \cite{schwab2023deep} relate a radius of analyticity to the Hermite expansion which gives results on representability by neural networks. 
Learnability by neural networks for multi-index models \citep{bietti2023learning}---functions depending on the projection of the input to a few dimensions (such as the score function of a Gaussian mixture with $k\ll n$)---is also related to the Hermite expansion of the function.

\subsection{Notation}\label{sec:notation}
We let $\ga_{\mu, \si^2}$ denote the density of $\cal N(\mu,\si^2I_n)$, and abbreviate $\ga_{\si^2} = \ga_{0,\si^2}$. We abbreviate this as $\ga$ when $\si$ is understood. In general, we denote probability measures by uppercase letters and their corresponding densities by lowercase letters, though sometimes we will conflate the two.

We use $\vn{v} = \vn{v}_2$ to denote the norm of a vector $v\in \R^n$, to avoid confusion with function norms. For a measure $\nu$ on $\Om$, let $\ve{f}_{L^p(\nu)}= \pa{\int_\Om f^p\,d\nu}^{1/p}$.
When the measure is clear, we may simply write $\ve{f}_p$.
Let $\ve{f}_\nu\defeq \ve{f}_{L^2(\nu)}$. 
For a $\R^n$-valued function, we write $\ve{f}_{L^p(\nu)}$ to mean $\ve{\vn{f}}_{L^p(\nu)}$. For $x_0\in \R^n$, let $B_R(x_0) = \set{x\in \R^n}{\vn{x-x_0}\le R}$, and let $B_R=B_R(0)$.
Let $\binom S{k}$ denote the set of subsets of $S$ of size $k$, and $\binom S{\le k}$ denote the set of subsets of $S$ of size at most $k$.
Let $h_{k,1}$ be the Hermite polynomial of degree $k$, which satisfies the following recursive equation for $z\in \mathbb R$,
\begin{align*}
    h_{k,1}(z) = zh_{k-1,1}(z) - h'_{k-1,1}(z),\quad  h_{0,1}(z) = 1
\end{align*}
and define the rescaled version (with variance $\si^2$) to be $h_{k,\si^2} (z) = h_{k,1}(z/\si)$. 
We use the notation $\mk$ to denote a multi-index in $\mathbb N_0^n$.
The multivariate Hermite polynomials, indexed by $\mk\in \mathbb N_0^n$ is defined as $h_{\mk, \si^2}(z) = \prodo in h_{\mk_i, \si^2} (z_i)$, where $z\in \R^n$. 
It is well-known that $(h_{\mk, \si^2})_{\mk\in \mathbb N_0^n}$ forms an orthogonal basis for $\mathbb R^n$ with respect to $\ga_{\si^2}$.

We give some further background and notation on Markov semigroups and generators in \Cref{s:ns}.
\section{Proof overview}

Diffusion models give a way to reduce the problem of learning a probability distribution $P_0$ from samples, to the problem of estimating \emph{score functions} of $P_0*\cal N(0,tI_n)$ for a sequence of noise levels $t$. A sequence of works \citep{lee2023convergence,chen2023score,chen2023improved,benton2023linear} shows that this reduction works for general distributions, with explicit error bounds.

Our proof then proceeds by sequentially learning the score functions for a decreasing sequence of $t$'s. The proof hinges on the fact that the score function for a mixture of Gaussians is approximable by a piecewise low-degree polynomial function. We first give some intuition as to why we expect this to be true. 
Consider a mixture of two Gaussians $P=\rc 2 \cal N(-\mu, 1) + \rc 2\cal N(\mu, 1)$. The score function for this distribution is 
\[
\nb \ln \pa{e^{-\fc{(x-\mu)^2}{2}} + e^{-\fc{(x+\mu)^2}2}} = -x + \mu \tanh(\mu x).
\]
Consider two cases:
\begin{enumerate}
    \item When $\mu$ is bounded, the two Gaussians have non-negligible overlap. Using smoothness properties of $\tanh$, we can approximate the score function uniformly with a low-degree polynomial on $[-\mu-C, \mu+C]$. 
    \item When $\mu\to \iy$, then we can no longer uniformly approximate the score function with a low-degree polynomial on $[-\mu-C, \mu+C]$, because $\tanh(\mu x)$ is very steep around 0 and close to flat for an interval whose length approaches $\iy$. However, importantly, we don't need to: we only care about error with respect to $P$, which is mostly supported on $[-\mu-C,-\mu+C]\cup [\mu-C,\mu+C]$. On each of these intervals, the score function is close to flat---because the other mixture component has negligible effect---and so well-approximated by a polynomial.
\end{enumerate}
In general, we can hope that we can cluster the Gaussians, show that the score function for each cluster can be approximated by a polynomial, and each cluster has negligible effect on the other clusters, giving the piecewise polynomial structure.

Before giving a detailed proof sketch, we we highlight some key techniques used in our proof which may be more generally useful.
\begin{itemize}
    \item \textbf{Higher-order noise sensitivity}: We first consider the case of a single cluster, that is, all the means in the mixture are within a small ball. Let $f$ be the function we wish to estimate. We use the technique of \emph{Gaussian noise sensitivity}: by bounding the norm of $\sL f$ with respect to a Gaussian $\ga$, where $\sL$ is a differential operator, namely the generator of the Ornstein-Uhlenbeck (OU) process, we can show that $f$ is approximable by a low-degree polynomial with respect to $\ga$. However, this only allows approximation by a polynomial of degree $O(1/\ep)$, leading to a sample and time complexity of $n^{O(1/\ep)}$. 
    To overcome this, we instead prove \emph{higher-order} noise sensitivity bounds, bounding $\ve{\sL^m f}_\ga$ for $m$ logarithmically large. To our knowledge, this is the first time that higher-order noise sensitivity has been considered. 
    The high level of smoothness required to bound $\ve{\sL^m f}_\ga$ derives from the fact that the score function has a nice representation in terms of a posterior expectation; we bound its derivatives with careful bookkeeping.
    \item \textbf{Higher-order smoothing}: A particularly delicate part of our proof is a \emph{change-of-measure} argument. Gaussian noise sensitivity gives error bounds under the Gaussian measure $\ga$; however, we care about the error under the true data distribution $\ga'$, which is a mixture. If $f$ is the true function (related to the score function) and $g$ is the estimate, we have by the Cauchy-Schwarz inequality that $\ve{f-g}_{\ga'}^2 \le \ve{f-g}_{\ga}^2  + \ve{f-g}_{L^4(\ga)}^2 \chi^2(\ga'\|\ga)^{1/2}$. However, this means that we need $\ve{f-g}_{L^4(\ga)}$ to be small, while we only have control over $\ve{f-g}_{L^2(\ga)}$. A standard way that we can bound a higher $L^p$ norm by a lower one is to use hypercontractivity: smooth $f$ by the OU semigroup to obtain $\sP_tf$. Again, this ``first-order" smoothing turns out to be insufficient, and we introduce a higher-order smoothing obtained from a higher-order finite-differencing.
    \item \textbf{Using the diffusion model for maintaining clusters}: In the case of a single cluster, the above argument shows the existence of a logarithmic-degree polynomial approximation of the score function, which can be efficiently learned using polynomial regression. In the general case, by localizing the effect on the score function from different clusters, there exists a \emph{piecewise} polynomial approximation. The pieces can be taken to be the Voronoi cells of a suitable set of cluster centers, and the score function can be efficiently learned \emph{if} these cluster centers were known. 
    The key observation is that having an estimate of the score function at the previous (higher) noise level allows this approximate clustering: by Tweedie's formula, the score function exactly points in the direction of denoising a data point, i.e., the posterior of the Gaussian mean that the point came from! Thus, the denoising principle of the diffusion model is an integral part of inductively allowing us to maintain the clusters as the noise level $t$ decreases. As $t$ decreases, we obtain more accurate estimates of the means, which allows us to refine the clusters to the current resolution.
\end{itemize}

The first two techniques are already present in the single-cluster setting, while the last one is a key part of extending the argument to the multiple-cluster setting.

\subsection{Learning with diffusion models}
The idea behind diffusion models is to first define a forward process based on a SDE that takes the data distribution to a pure noise distribution, in our case a Gaussian. 
Then using a result on reversing a SDE
\citep{anderson1982reverse}, we can write down the reverse process which involves the score function. We apply this in the case when the forward process is simply Brownian motion,
\[
dx_t = dW_t, \quad x_0\sim P_0
\]
to obtain that this process on $[0,T]$ is equivalently described by
\[
dx_t = \nb \ln p_t(x_t) + d\wt W_t, \quad 0\le t\le T, \quad x_T\sim P_T
\]
where $x_t$ has distribution $p_t$ and $\wt W_t$ is reverse Brownian motion. 
We choose $T$ large enough so that $P_T$ is close to Gaussian.
Hence, if we learn the score functions $\nb \ln p_t(x_t)$, then we can approximately simulate the reverse process. 
In Section~\ref{s:dm}, we make this precise by adapting known convergence results on diffusion models, which show that we can approximately sample from the data distribution given a $L^2$-accurate score function.

By Tweedie's formula, the score function admits an interpretation in terms of the posterior mean given an observation (see \eqref{e:score-Q2}), so the score matching objective $\E_{p_t} \vn{\nb \ln p_t - s}^2$ can be rewritten as the \emph{supervised} denoising auto-encoder objective (see \Cref{ss:score-gm}).
The problem is now reduced to that of 
learning the score function $\nb \ln p_t$ for each time $t$. For this, we will show that it resides in a low-dimensional function class we can optimize over. 

\subsection{Learning the score for a single cluster}
In Section~\ref{s:ns}, we show that in the special case where all centers are close together, i.e. $Q_0$ is supported on a small Euclidean ball, there is a polynomial of low ($\poly\log k$) degree that approximates the score function with respect to the mixture distribution. The key enabling result is \Cref{l:ns-ld}, which shows that Gaussian noise stability implies low-degree polynomial approximability, by considering the expansion in Hermite polynomials (eigenfunctions of $\sL$). In contrast to existing literature, we employ a \emph{higher-order} version of noise stability which involves bounding the $L^2$-norm of $\sL^mf$---iterates of the generator $\sL$ of the Ornstein-Uhlenbeck process. 
Specifically, any function $f$ can be approximated by a polynomial $g$ of degree $<d$ such that
\[
\ve{f-g}_{L^2(\cal N(0,I_n))} \le \fc{\ve{\sL^{m}f}_{L^2(\cal N(0,I_n))}}{d^{m}}.
\]
As long as we can bound $\ve{\sL^{m}f}_{L^2(\cal N(0,\si^2 I_n))}$ by $L^{m}$ for reasonable $L$, then the dependence of the necessary degree $d$ on the desired accuracy is $O(\poly\log(1/\ep))$ rather than $O(1/\ep)$. 

The interpretation of the score function as a posterior mean gives us a handle on computing (\Cref{l:Ld} in \Cref{ss:Lmf}) and bounding (\Cref{lem:csbound} in \Cref{ss:bd-Lm}) its derivatives. For convenience, we consider a function $f$ which is linearly related to the score function, and bound $\sL^{2m}f$. Specifically, $f(y) = \an{x}_y$, where $\an{\cdot}_y:=\E_{P_{x|y}}$ is an expectation of the posterior distribution of the mean of the Gaussian that $y$ originated from. Differentiating this creates ``replicas," giving expectations of higher moments under the posterior. Arguments using Jensen's and H\"older's inequalities then bounds this using Gaussian moments. 

However, one key problem remains: \Cref{lem:csbound} gives us low-degree approximability with respect to the Gaussian $\ga$, not the actual mixture $\ga' = P_t$. A standard change-of-measure argument can bound the $L^2$ error under $\ga'$ by a higher $L^p$ error under $\ga$: by the Cauchy-Schwarz inequality,
\[
\ve{f-g}_{\ga'}^2 \le \ve{f-g}_{\ga}^2  + \ve{f-g}_{L^4(\ga)}^2 \chi^2(\ga'\|\ga)^{1/2}.
\]
Here, $f$ is the actual function and $g$ is the polynomial approximation. 
However, we are not able to directly obtain a higher $L^p$ polynomial approximation of $f$. The problem is that the degree of the polynomial required depends poly-logarithmically on the desired $L^2$ accuracy $\ep$, while passing to a higher $L^p$ norm results in a 
multiplicative factor exponential in the degree, resulting in an error $\ep e^{\ln^C(1/\ep)}\gg 1$.

By hypercontractivity, we know that we can bound higher $L^p$ norms of $\sP_t f$---the smoothing of $f$ by the Ornstein-Uhlenbeck operator for time $t$---by $\ve{f}_{L^2(\ga)}$. We can approximate $f\approx \sP_tf$ and then approximate $\sP_t f$ by the polynomial $g$. In order for $\ve{f-\sP_t f}_{\ga'}\le \ep$, we need $t=O(\ep)$. However, this turns out to be not enough smoothing. We would like to write $f$ approximately as a reasonably-sized linear combination of $\sP_tf$ (for different $t$'s), for $t$'s that are as large as possible. The insight is that $f-\sP_tf \approx t \cdot \ddd t \sP_t f|_{t=0}$; higher-order finite differencing gives \[(\id - \sP_t)^m f\approx t^m \dd{{}^m}{t^m} \sP_t f|_{t=0}.\] Hence, by taking $m\sim \ln \prc\ep$, we can take $t$ much larger---independent of $\ep$---and still obtain an $\ep$-approximation. This is carried out in \Cref{l:poly-approx-mix} in \Cref{ss:approx-poly}. In summary, we approximate $f$ by a higher-order smoothing by the Ornstein-Uhlenbeck process \emph{before} the polynomial approximation.


\subsection{Learning the score for multiple clusters}
In \Cref{s:multiple} we consider the general setting of multiple clusters. Recall that as we would like to run the backwards diffusion process from pure Gaussian noise back to our mixture, 
we want to learn the score function $\nabla \ln p_t(x_t)$ at various times $t=t_\ell$. 
It is no longer true that a single polynomial of low degree approximates the score function with respect to the mixture distribution. 
We must deal with two challenges:
\begin{enumerate}
    \item The score function now has a more complex structure, as it is the posterior mean with respect to multiple clusters of $Q_0$.
    \item We care about the error with respect to a mixture measure which is concentrated around $k$ clusters.
\end{enumerate}
How can we overcome these issues?

First  assume we are given a set of ``warm starts'' $\{\wh \mu_i\}_{i=1}^{k_\ell}$ to the mixture measure $P_t = Q_0 * \cal N(0, (\si_0^2+t)I_n)$, in the sense that balls of radius roughly $\tilde O(\sqrt t)$ around these warm starts cover the support of $Q_0$ (\Cref{d:ws}). Then, we hope to approximate the score function by a suitable piecewise polynomial function with respect to the Voronoi diagram of the warm starts, such that, in every Voronoi cell $V_i$, the corresponding polynomial approximates well the score function restricted to that cell:
\[\wh g_\ell(y)\defeq \sumo i{k_\ell} \one_{V_i}(y) \sum_{|\mk|\le d} \wh b_\mk^{(i)} h_\mk(y-\wh \mu_i).\]
Here, $h_{\mk}$ is the Hermite polynomial with multi-index $\mk$. 
To reduce the 
problem of approximating the score in a Voronoi cell in the multiple cluster case to the single cluster case, we prove that there is a neighborhood $S_i$ of the Voronoi cell $V_i$, such that, viewing the score function as the posterior mean, we can approximate it in 
$V_i$ by ``pretending'' that the prior distribution has been restricted to the mixing measure $Q_0$ \emph{restricted} to 
$S_i$ (\Cref{l:nearby-score} in \Cref{ss:nearby-score}). Given that the radius of this neighborhood is small, we can use this argument to overcome challenge 1. For challenge 2, we bound the R\'enyi divergence of the mixture measure with respect to the Gaussian centered at the warm start point in the Voronoi cells (\Cref{l:chi-p-V} in Section~\ref{ss:voronoi}), and use that for a change-of-measure argument given by \Cref{l:poly-approx-mix}, to bound the approximation error under the mixture measure in terms of the approximation error under the Gaussian.
Having shown the existence of the piecewise polynomial approximation by combining the Hermite approximations within all the Voronoi cells (\Cref{lem:lowdegree} in Section~\ref{ss:voronoi}), we can then efficiently approximate the score by applying a polynomial regression  with suitable choice of degree restricted to each Voronoi cell. We analyze the sample complexity of piecewise polynomial regression in~\Cref{ss:ppoly} by bounding the generalization gap of the squared loss of low-degree polynomial functions in predicting the score (\Cref{lem:gengap}).  

The remaining problem is to obtain a set of warm starts. 
It turns out that we can exploit the score function learned in the previous iteration of the algorithm (for a larger $t$) to generate these warm-start points (\Cref{lem:scorewarmstart}, \Cref{a:warm-starts} in Section~\ref{s:warm}); the idea is that the score function exactly points toward the ``denoising'' direction, i.e. approximately towards points sampled form the mixing measure $Q_0$. Therefore, adding the score function estimate $s$ to sufficiently many noisy samples $y_i$ is sufficient for obtaining a suitable set of warm starts:
\[
\wh \mu_i = y_i + \si^2 s(y_i).
\]
The $\wh \mu_i$ can be greedily clustered and subselected to keep the set of warm starts small. 
Based on this observation, we propose an inductive recipe to iteratively estimate the score functions for different noise levels, while maintaining a warm-start set.

\section{Diffusion models}
\label{s:dm}

Using the framework of diffusion models, the algorithm and analysis consist of two main parts: showing that we can learn this score function efficiently, and showing that an accurate score function allows generating more samples from the distribution. In this section, we focus on the second part, which has been well-studied.

\subsection{Convergence guarantees for diffusion models}

Two popular parameterizations of diffusion models are the variance-exploding (VE) and variance-preserving (VP) processes. The diffusion models are also known as score matching with Langevin dynamics (SMLD) and denoising diffusion probabilistic models (DDPM), respectively. The forward SDE for the VE process is simply Brownian motion, while the forward SDE for the VP process is the Ornstein-Uhlenbeck process. Below, we lay out the forward SDE for each of these process, and the backward SDE obtained by \cite{anderson1982reverse}:
\begin{align*}
\text{VE:} && \text{VP:}\\
    dy_t &= dW_t &dx_t &= -x_t \,dt + \sqrt 2 \,dW_t\\
    dy_t &= -\nb \ln p_t(y_t)\,dt + \,d\wt W_t&
    dx_t &= (-x_t - 2\nb \ln p_t(x_t)) \,dt + \sqrt 2 \,d\wt W_t \\
    d\yl_t &= \nb \ln p_{T-t}(\yl_t)\,dt + \,dW_t' & 
    d\xl_t &= (\xl_t + 2\nb \ln p_{T-t}(\xl_t)) \,dt + \sqrt 2 \,d W_t'
\end{align*}
where $p_t$ is the density of the either $x_t$ or $y_t$, $\wt W_t$ is reverse Brownian motion, and $W_t$, $W_t'$ are usual Browian motions, and the reverse processes are initialized at $\yl_0 \sim p_T$, $\xl_0\sim p_T$ and we can match up trajectories $\yl_t = y_{T-t}$, $\xl_t = x_{T-t}$ for $t\in [0,T)$. For convenience, we work with the VE process. As most convergence guarantees in the literature are for the VP process, we need to adapt those results. Note that as continuous processes we have $x_t = e^{-t} y_{e^{2t}-1}$, but some care is required when matching up the discretizations.

The following is an adaptation of \cite[Theorem 1]{benton2023linear}, after reparameterizing the VP into the VE process. 

\begin{thm}[Reverse KL guarantee for variance-exploding diffusion models]\label{t:dm-kl}
Let $0<t_1<t_2<\ldots < t_{\Nstep}=T$ and $\zeta_k = t_{k+1}-t_k$. Suppose $T\ge 1$.  
Assume the following.
\begin{enumerate}
    \item We have a score function estimate $s_{t_k}(y)$ for every $t_k$ such that 
    \[
    \E_{P_{t_k}}\vn{\nb \ln  p_{t_k}(y) - s_{t_k}(y)}^2 \le \ep_k^2.
    \]
    \item The data distribution has bounded second moment $M_2 = \E_{P_0}\vn{y}^2$.
    \item For some $\ka<1$, the 
    step size schedule satisfies
    \[
t_k+1 \ge 
(t_{k+1}+1)
\max\bc{e^{-2\ka}, (t_{k+1}+1)^{-\ka}}.
    \]
\end{enumerate}
Let $\wh p_{t}$ denote the distribution of the following discretization of the reverse process when initialized at $\wh p_T = \cal N(0,(T+1)\cdot I_n)$:\footnote{It would be more natural to have $h_k$ as the coefficient in front of the score term. This coefficient comes from doing a change-of-variable from \Cref{t:dm-kl-orig} for the DDPM process, rather than reproving the theorem for this process.}
\[
 \wh y_{t_k} = \wh y_{t_{k+1}} + 
        2\ba{(t_{k+1}-1) - \sqrt{(t_k-1)(t_{k+1}-1)}}
        s_{t_{k+1}}(\wh y_{t_{k+1}}) + \sqrt{\zeta_k}\cdot  \xi_{t_k}.
\]
Then
\begin{align}
\label{e:dm-kl}
\KL(p_{t_1}\|\wh p_{t_1}) 
\lesssim \fc{n+M_2}{T+1} + \sumo k{\Nstep-1} \ln \pf{t_{k+1}+1}{t_k+1} \cdot \rc{t_k+1}
    \cdot \ep_{k+1}^2 + 
\ka n \ln (T+1) + \ka^2 n\Nstep + \ka M_2.
\end{align}
Moreover, we can choose a schedule of length $\Nstep=O\pa{\rc{\ka} \ln\pf{T+1}{t_1}}$ to make assumption 3 hold. If we choose $T=\fc{M_2+n}{\ep^2}$ and $\ka = \fc{\ep^2}{M_2+n\ln (T+1)}$, and we have $\ep_k^2\le \fc{\ep^2(t_k+1)}{\ln (T+1)}$ for each $k$, then the error is $O(\ep^2)$.
\end{thm}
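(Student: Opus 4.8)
The plan is to derive the theorem by transporting the known DDPM / variance-preserving convergence bound \Cref{t:dm-kl-orig} (an adaptation of \cite[Theorem~1]{benton2023linear}) through the explicit reparameterization relating the VP and VE processes, rather than re-running the Girsanov argument from scratch. For matched initial data $x_0 = y_0 \sim P_0$ the two forward processes satisfy $x_s = e^{-s} y_{e^{2s}-1}$, which is the time change $t+1 = e^{2s}$ together with the spatial rescaling $x = (t+1)^{-1/2} y$. Under this map the densities obey $q_s(x) = (t+1)^{n/2}\, p_t\pa{(t+1)^{1/2}x}$, hence $\nabla \ln q_s(x) = (t+1)^{1/2}\, \nabla \ln p_t\pa{(t+1)^{1/2}x}$; the VE reverse initialization $\cal N(0,(T+1)I_n)$ is the pushforward of the VP initialization $\cal N(0,I_n)$, and since $y\mapsto (t+1)^{-1/2}y$ is a bijection and the two discretized reverse laws at the early-stopping time are related by it, we get $\KL(p_{t_1}\|\wh p_{t_1}) = \KL(q_{s_1}\|\wh q_{s_1})$ with $s_1 = \tfrac12\ln(t_1+1)$. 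It therefore suffices to rewrite each ingredient of \Cref{t:dm-kl-orig} in the VE variables.

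First I would translate the hypotheses. Given a VE score estimate $s_{t_k}$ with $\E_{P_{t_k}}\vn{\nabla \ln p_{t_k} - s_{t_k}}^2 \le \ep_k^2$, set $\wh s^{\mathrm{VP}}_{s_k}(x) \defeq (t_k+1)^{1/2}\, s_{t_k}\pa{(t_k+1)^{1/2}x}$; using the score identity above and $y_{t_k} = (t_k+1)^{1/2}x_{s_k}$, this is a VP score estimate whose error, after accounting for the $(t_k+1)$ Jacobian of the rescaling and the normalization convention of \Cref{t:dm-kl-orig}, is exactly what produces the weight $\tfrac{1}{t_k+1}$ on $\ep_{k+1}^2$ in \eqref{e:dm-kl}. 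Applying $\tfrac12\ln(\cdot)$ to step-size hypothesis~3 turns it into $s_{k+1} - s_k \le \ka\min\{1, s_{k+1}\}$, which is precisely the exponentially-contracting-then-uniform schedule required by \Cref{t:dm-kl-orig}; the second moment $M_2$ is the same quantity in either parameterization, and $T\ge 1$ guarantees $s_N = \tfrac12\ln(T+1)\ge 0$.

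Next I would check that the VE discretization in the statement is the pushforward of the VP exponential-integrator step of \Cref{t:dm-kl-orig}: applying $\wh x_{s_k}\mapsto \wh y_{t_k} = (t_k+1)^{1/2}\wh x_{s_k}$ to the DDPM update on $[s_k, s_{k+1}]$ and using $e^{-(s_{k+1}-s_k)} = \sqrt{(t_k+1)/(t_{k+1}+1)}$ reproduces both the score coefficient and the increment $\sqrt{h_k}\,\xi$ displayed in the statement (this is the reparameterization alluded to in the footnote, which is why the coefficient differs from $h_k$). Transporting the DDPM KL bound term by term then gives \eqref{e:dm-kl}: the $(n+M_2)e^{-2s_N}$ term becomes $\tfrac{n+M_2}{T+1}$; the DDPM discretization error becomes $\ka n\ln(T+1) + \ka^2 n\Nstep + \ka M_2$; and the score term, after substituting $s_{k+1}-s_k = \tfrac12\ln\tfrac{t_{k+1}+1}{t_k+1}$ and the conversion above, becomes $\sum_k \ln\pf{t_{k+1}+1}{t_k+1}\cdot\tfrac{\ep_{k+1}^2}{t_k+1}$. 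This last step — faithfully pushing the exponential-integrator scheme through the nonlinear time change and tracking all the reparameterization Jacobians so that the VP ``$\sum_k h_k\ep_k^2$''-type contribution lands as the (much smaller) displayed expression rather than carrying spurious powers of $t_k+1$ — is the part I expect to be the main obstacle; everything else is routine.

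For the ``moreover'', I would construct the schedule by taking equality in hypothesis~3 and solving the recursion downward from $t_\Nstep + 1 = T+1$. While $t_{k+1}+1\ge e^2$ it reads $\ln(t_k+1) = (1-\ka)\ln(t_{k+1}+1)$, so $\ln(t_k+1)$ contracts geometrically to $O(1)$ within $O\pa{\rc\ka\ln\ln(T+1)}$ steps; thereafter it decreases by $2\ka$ per step and reaches $\ln(t_1+1)$ in $O\pa{\rc\ka\ln\tfrac{T+1}{t_1}}$ further steps, so $\Nstep = O\pa{\rc\ka\ln\tfrac{T+1}{t_1}}$. Finally, plug $T=\tfrac{M_2+n}{\ep^2}$, $\ka=\tfrac{\ep^2}{M_2+n\ln(T+1)}$, and $\ep_k^2\le\tfrac{\ep^2(t_k+1)}{\ln(T+1)}$ into \eqref{e:dm-kl}: the first term is $\le\ep^2$; hypothesis~3 forces $\tfrac{t_{k+1}+1}{t_k+1}\le e^{2\ka}=O(1)$, so the score term is $O\pa{\tfrac{\ep^2}{\ln(T+1)}\sum_k\ln\tfrac{t_{k+1}+1}{t_k+1}} = O\pa{\tfrac{\ep^2}{\ln(T+1)}\ln\tfrac{T+1}{t_1}} = O(\ep^2)$; and $\ka n\ln(T+1)\le\ep^2$, $\ka M_2\le\ep^2$, $\ka^2 n\Nstep = O\pa{\ka n\ln\tfrac{T+1}{t_1}} = O(\ep^2)$, completing the bound.
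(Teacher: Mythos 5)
Your proposal follows essentially the same route as the paper's proof: reduce to the VP bound of \Cref{t:dm-kl-orig} via the reparameterization $x_s = e^{-s}y_{e^{2s}-1}$ (equivalently $t+1=e^{2s}$ with the rescaling $x=(t+1)^{-1/2}y$), translate the score-error, second-moment, and step-size hypotheses, verify that the stated VE update is the pushforward of the exponential-integrator DDPM step, transport the KL bound term by term, and then count steps for the schedule and plug in the final parameter choices. The paper executes exactly this plan (including the inductive check that $\wh x_{t_k^x}=e^{-t_k^x}\wh y_{t_k}$ and the same two-phase step count), so there is nothing substantive to add.
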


The proof is deferred to \Cref{s:dm-appendix}.

\subsection{Score function computation for Gaussian mixture}
\label{ss:score-gm}

To learn a distribution using a standard diffusion model, we need to learn its score function (gradient of the log of the pdf) under Gaussian convolution, with varying levels of noise. For Gaussian mixtures, the score function has a particularly nice form, which we now derive.

First, we let $P_t=P_0 \convolve \cal N(0,tI_n)$ and $Q_t = Q_0 \convolve \cal N(0,tI_n)$. Then $P_t = Q_0 \convolve \cal N(0,\si^2 I_n) = Q_{\si^2}$ where $\si^2 = \si_0^2+t$. 
We consider the following probabilistic model where $\mu, \xi_1, \xi_2$ are drawn independently:
\begin{align}\label{e:gen}
\mu \sim Q_0, \quad 
\xi_1, \xi_2 \sim \cal N(0,I_n), \quad
X = \mu + \si_0 \xi_1, \quad 
Y = X + \sqrt t \xi_2 = \mu + \si_0 \xi_1 + \sqrt t \xi_2 .
\end{align}
Then $X\sim P_0$ and $Y\sim P_t$. 
Letting $p_t, q_t$ be the corresponding densities and $V_t=\ln p_t$, we have by Tweedie's formula \cite{robbins1992empirical} (see \Cref{s:tweedie} for a derivation) that 
\begin{align}
\label{e:score-Q}
\nb V_t(y) &=
\rc{\si^2} \E [\mu-y | Y=y] = 
\rc{\si^2}\pa{-y +  \fc{\int_{\R^n} \mu \exp\pa{\fc{\an{y,\mu}}{\si^2} - \fc{\vn{\mu}^2}{2\si^2}}\,dQ_0(\mu)}{\int_{\R^n} \exp\pa{\fc{\an{y,\mu}}{\si^2} - \fc{\vn{\mu}^2}{2\si^2}}\,dQ_0(\mu)}}\\
\label{e:score-Q2}
f_{\si^2}(y) \defeq y+ \si^2 \nb V_t(y) &= \E [\mu | Y=y] 
=\fc{\int_{\R^n} \mu \exp\pa{\fc{\an{y,\mu}}{\si^2} - \fc{\vn{\mu}^2}{2\si^2}}\,dQ_0(\mu)}{\int_{\R^n} \exp\pa{\fc{\an{y,\mu}}{\si^2} - \fc{\vn{\mu}^2}{2\si^2}}\,dQ_0(\mu)}.
\end{align}
Note that in the same way we have 
\begin{align}
\label{e:score-P}
    \nb V_t(y) &=
\rc{t} \E [X-y | Y=y] = 
\rc{t}\pa{-y +  \fc{\int_{\R^n} x \exp\pa{\fc{\an{y,x}}{t} - \fc{\vn{x}^2}{2t}}\,dP_0(x)}{\int_{\R^n} \exp\pa{\fc{\an{y,x}}{t} - \fc{\vn{x}^2}{2t}}\,dP_0(x)}}\\
\label{e:score-P2}
y+ t \nb V_t(y) &= \E [X | Y=y] 
=\fc{\int_{\R^n} x \exp\pa{\fc{\an{y,x}}{t} - \fc{\vn{x}^2}{2t}}\,dP_0(x)}{\int_{\R^n} \exp\pa{\fc{\an{y,x}}{t} - \fc{\vn{x}^2}{2t}}\,dP_0(x)}.
\end{align}
Because of this identity, the score function can be learned as the minimal mean squared estimator in a (supervised) denoising problem of estimating $X$ given $Y$, known as the denoising auto-encoder (DAE) objective \cite{vincent2011connection}. \eqref{e:score-Q}--\eqref{e:score-Q2} will be useful for analysis while \eqref{e:score-P}--\eqref{e:score-P2} is used for the actual learning algorithm. Because of the nice form of \eqref{e:score-Q2}, we will actually aim to learn $f_{\si^2}$. We remark that in the case of a finite mixture, \eqref{e:score-Q2} shows that $f_{\si^2}$ is represented by a softmax neural network with 1 hidden layer of $k$ units. 

\section{Learning the score for a single cluster}
\label{s:ns}
We will follow the approach in \cite{klivans2008learning}, though with functions that are $\R^n$-valued rather than $\{0,1\}$-valued. 

First, we give some background on the Ornstein-Uhlenbeck process. 
Let $\ga\defeq\ga_{\si^2}$ denote the density of $\cal N(0,\si^2 I_n)$.
The generator $\sL\defeq\sL_{\si^2}$ of the scaled Ornstein-Uhlenbeck process with variance $\si^2$, also known as Langevin dynamics for $\ga$, is
\begin{equation}
\sL f(x) = -\rc{\si^2}\an{x, \nb f(x)} + \De f(x).
\label{e:L}
\end{equation}
For a $\R^d$-valued function $f$, we interpret this componentwise, i.e.,
\begin{align*}
    \sL f (x) &= - \rc{\si^2} Df(x) x + \sumo id \pl_{ii}f.
\end{align*}
The eigenfunctions are the (suitably scaled) Hermite polynomials $(h_{\mathbf k})_{\mathbf k\in \N_0^n}$, with $h_{\mathbf k}:= h_{\mk, \si^2}$, as defined in~\Cref{sec:notation}, having eigenvalue $-\fc{|\mathbf k|}{\si^2}$. Here we use $|\mk|$ to denote $|\mk|_1$. The Hermite polynomials form a complete orthogonal basis 
for $L^2(\ga)$.  
The scaled Ornstein-Uhlenbeck process can be described by the SDE
\begin{align}\label{e:ou-soln}
dx_t &= -\rc{\si^2} x_t + \sqrt{2} \,dW_t.
\end{align}
Let $(\sP_t)_{t\ge 0}$ be the Markov semigroup of the scaled Ornstein-Uhlenbeck process, with generator given by \eqref{e:L}. That is, we have 
\begin{align*}
\numberthis \label{e:sPt}
    \sP_t f(x) &= \E_{x_0=x} f(x_t) \text{ when $x_t$ solves \eqref{e:ou-soln}},\\
    \sL f & = \lim_{t\to 0^+} \fc{\sP_t f - f}{t}.
\end{align*}

The following encapsulates the technique of noise sensitivity/stability: a bound on the $L^2(\ga)$ norm of $\sL^m f$ implies approximability of $f$ by a low-degree polynomial.  
To our knowledge, only the case $m=1$ has been considered in the literature;
however, similarly to how bounds on higher moments imply better tail bounds, bounding $\sL^m f$ for larger $m$ can give better approximation.
\begin{lem}[Noise stability implies low-degree approximability]\label{l:ns-ld}
Let $\ga$ denote the density of $\cal N(0,\si^2 I_n)$. 
    Suppose that $f:\R^d\to \R^n$ 
    satisfies $\ve{f}_{L^2(\ga)}<\iy$, and that $m\in \N, L$ are such that $ \ve{\sL^{m} f}_{L^2(\ga)}\le L^{m}$. For $d\in \N$, there exists a polynomial $g$ of degree $<d$ such that 
    \[
\ve{f-g}_{L^2(\ga)}
\le \pf{L\si^2}{d}^{m}.
    \]
\end{lem}
\begin{proof}
Expand $f$ in the eigenfunction basis of $\sL_{\si^2}$ as
$f = \sum_{\mathbf k\in \N_0^d} a_{\mk}  h_{\mk}$ where $a_{\mk}\in \R^d$. 
    Then
    \begin{align*}
        \sL^{m} f &= \sum_{\mathbf k\in \N_0^n} \fc{|\mk|^{m}}{\si^{2m}} a_{\mk} h_{\mk}.
    \end{align*}
    Hence
    \begin{align*}
    \fc{d^{2m}}{\si^{4m}}\sum_{|\mk|\ge d} \vn{a_{\mk}}^2\le 
    \sum_{\mk\in \N_0^n} \fc{|\mk|^{2m}}{\si^{4m}} \vn{a_{\mk}}^2 \le L^{2m} 
    \implies
    \sum_{|\mk|\ge d} \vn{a_{\mk}}^2 \le \pf{L\si^2}{d}^{2m}.
    \end{align*}
    Taking $g = \sum_{|\mk|< d} a_{\mk} h_{\mk}$, we have that $\ve{f-g}_{L^2(\ga)}^2 = \sum_{|\mk|\ge d} \vn{a_{\mk}}^2$, which gives the desired bound.
\end{proof}

\subsection{Calculation of $\sL^m f$}\label{ss:Lmf}

Let $\nu$ be a measure on $\R^n$ with all moments finite. 
Consider generating $y=x+\si \xi$ where $x\sim \nu$ and $\xi\sim \cal N(0,I_n)$.
Let $\an{\cdot} = \an{\cdot}_y \defeq \E_{P(x|y)}$ where $P(x|y)$ is the posterior distribution given by
\begin{align*}
    \fc{dP(\cdot|y)}{d\nu}(x) &\propto \exp\pa{\fc{\an{y,x}}{\si^2} - \fc{\ve{x}^2}{2\si^2}}.
\end{align*}
(This is not to be confused with the inner product.) 
Then letting $f=f_{\si^2}$ and $\nu=Q_0$ as in~\eqref{e:score-Q2}, we have $f(y):=f_{\si^2}(y) = \an{x}_y$.

We first derive some formulas for differentiating posterior expectations $\an{\cdot}$. 
Let $\wt x = x-\an{x}_y$ denote the centered random variable. 
For a function $g:\R^n\to \R$, we have the following general formula for differentiation with respect to $y_i$:
\begin{align*}
\si^2\pl_i \an{g(x)}_y &= 
\fc{\int_{\R^n} g(x) x_i \exp\pa{\fc{\an{y, x}}{\si^2} - \fc{\vn{x}^2}{2\si^2}} \nu(dx)}{\int_{\R^n} \exp\pa{\fc{\an{y, x}}{\si^2} - \fc{\vn{x}^2}{2\si^2}} \nu(dx)}\\
&\quad 
- \fc{\int_{\R^n} g(x)  \exp\pa{\fc{\an{y, x}}{\si^2} - \fc{\vn{x}^2}{2\si^2}} \nu(dx)\int_{\R^n} x_i  \exp\pa{\fc{\an{y, x}}{\si^2} - \fc{\vn{x}^2}{2\si^2}} \nu(dx)}{\pa{\int_{\R^n} \exp\pa{\fc{\an{y, x}}{\si^2} - \fc{\vn{x}^2}{2\si^2}} \nu(dx)}^2}\\
&=
\an{g(x) \wt x_i}_y\\
\nb \an{g(x)}_y &=\rc{\si^2} \an{g(x)\wt x}_y.
\end{align*}
More generally, let $\an{g(x^{(1)},\ldots, x^{(r)})}_y$ denote $\E g(x^{(1)},\ldots, x^{(r)})$ where $x^{(1)},\ldots, x^{(r)}$ are independent draws from the posterior $p(\cdot|y)$. 
We use the trick of replacing the means with independent copies of the random variable (see e.g., \cite{talagrand2010mean}) to obtain
\begin{align*}
    \si^2\nb \an{g(x^{(1)},\ldots, x^{(r)})} 
    &= 
    \an{g(x^{(1)},\ldots, x^{(r)})(\wt x^{(1)} + \cdots + \wt x^{(r)})}\\
    &= 
    \an{g(x^{(1)},\ldots, x^{(r)})( x^{(1)} + \cdots +  x^{(r)} - r x^{(r+1)})}.
\end{align*}
and for a function $h:\R^r\to \R$,
\begin{align*}
    &\nb \an{g(x^{(1)},\ldots, x^{(r)}) h\pa{\an{x^{(1)}, y},\ldots, \an{x^{(r)}, y}}}\\
    &= 
    \rc{\si^2}\an{g(x^{(1)},\ldots, x^{(r)}) h\pa{\an{x^{(1)}, y},\ldots, \an{x^{(r)}, y}} 
    \pa{x^{(1)} + \cdots + x^{(r)} 
     - rx^{(r+1)}}}\\
     &\quad + \an{g(x^{(1)},\ldots, x^{(r)}) \sum_j \pl_j h\pa{\an{x^{(1)}, y},\ldots, \an{x^{(r)}, y}} x^{(j)}
    }.
\end{align*}
Now suppose $h$ is a homogeneous polynomial of degree $s$. Then 
\begin{multline}
\label{e:D1}
    D \an{g(x^{(1)},\ldots, x^{(r)}) h\pa{\an{x^{(1)}, y},\ldots, \an{x^{(r)}, y}}} y\\
    = \rc{\si^2}
    \an{g(x^{(1)},\ldots, x^{(r)}) h\pa{\an{x^{(1)}, y},\ldots, \an{x^{(r)}, y}} 
    \pa{\an{x^{(1)},y} + \cdots + \an{x^{(r)},y} 
     - r\an{x^{(r+1)},y}}}\\
     \quad 
     + s \an{g(x^{(1)},\ldots, x^{(r)})h\pa{\an{x^{(1)}, y},\ldots, \an{x^{(r)}, y}}}
\end{multline}
by Euler's formula $\sumo jr\pl_j h(x_1,\ldots, x_j) x_j = s\cdot h$.

Now consider $h(z_1,\ldots, z_r)=\prod_{\ell=1}^t z_{j_\ell}$. Let $u(x^{(1)},\ldots, x^{(r)}) = g(x^{(1)},\ldots, x^{(r)}) h\pa{\an{x^{(1)}, y},\ldots, \an{x^{(r)}, y}}$.
We have
\begin{multline}
\label{e:D2}    
    \De \an{u(x^{(1)},\ldots, x^{(r)})}
    = \rc{\si^4}
    \an{u \cdot \an{x^{(1)}+\cdots + x^{(r)} - rx^{(r+1)} , 
    x^{(1)}+\cdots + x^{(r+1)} - (r+1)x^{(r+2)}}} \\
    \quad + 
    \fc{2}{\si^2}\an{g \cdot 
    \sum_{\ell'=1}^s \prod_{\ell \ne \ell'} 
    \an{x^{(i_\ell)}, y} \cdot 
    \an{x^{(i_{\ell'})}, x^{(1)}+\cdots + x^{(r)} - rx^{(r+1)}}
    }\\
    \quad + 
    \an{g \cdot 
    \sumr{1\le \ell', \ell''\le s}{\ell'\ne \ell''} \prod_{\ell \ne \ell', \ell''} 
    \an{x^{(i_\ell)}, y} \cdot 
    \an{x^{(i_{\ell'})}, x^{(i_{\ell''})}}
    }
\end{multline}
Applying the above to monomials $g(x^{(1)},\ldots, x^{(r)}) = \prod_{\ell=1}^s \an{x^{(i_\ell)}, x^{(i_\ell')}}$ 
and using induction, 
we have the following.
\begin{lem}[Probabilistic interpretation of $\sL^mf$]\label{l:Ld}
Let $f=f_{\si^2}$ be as in \eqref{e:score-Q2}.
We have
    \begin{align}
    \label{e:Ld}
\sL^m f (y)= 
\an{x^{(1)}
\sum_{s+t\le m}
\sum_{i,i'\in [2m+1]^s, j\in [2m+1]^t}
a_{i,i',j}\si^{-2(s+t+m)}
\prod_{\ell=1}^s \an{x^{(i_\ell)}, x^{(i_\ell')}} \prod_{\ell=1}^t \an{x^{(j_\ell)}, y}
}
    \end{align}
    where $\sum_{i,i',j} |a_{i,i',j}|\le 30^m m!^2$.
\end{lem}

\begin{proof}
    Recall that $\sL f (x) = - \rc{\si^2}Df(x) x + \De f(x)$. We induct on $m$. Suppose that the lemma holds for $m-1$; each term has at most a number of replicas $r\le 2m-1$ and $s,t\le m-1$. 
    We consider the effect of the map $f\mapsto \rc{\si^2}Df(y) \cdot y$~\eqref{e:D1} and $\De$~\eqref{e:D2} on a single term in~\eqref{e:Ld}. 
    First, note that in each case, we obtain a factor $\rc{\si^2}$, and a factor of $\rc{\si^2}$ for each additional $\an{x^{(i_\ell)}, x^{(i_\ell')}}$ term as well as $\an{x^{(i_\ell)}, y}$ term (with a $\si^2$ factor if we remove a term); this justifies the $\si^{-2(s+t+m)}$ factor. 
    \begin{enumerate}
        \item In~\eqref{e:D1}, the number of replicas $r$ in the resulting terms increases by at most 1, $t$ increases by at most 1, and the sum of absolute value of coefficients is at most \[2r + s \le 
        2(2m-1) + (m-1) \le 5m-3.\]
        \item 
        In~\eqref{e:D2}, the number of replicas $r$ in the resulting terms increases by at most 2, $s$ increases by at most 1, and the sum of absolute value of coefficients is at most
        \begin{align*}
&2r\cdot 2(r+1) + 2\cdot s\cdot 2r + s(s-1)\\
&\le 
4\cdot (2m-1)(2m+1)
+ 2\cdot (m-1) \cdot 2(2m-1)
+ (m-1)(m-2)\\
&\le
16m^2 + 8m^2 + m^2
\le 25m^2
        \end{align*}
    \end{enumerate}
    The sum of absolute value of coefficients multiplies by at most $30m^2$. This finishes the induction step.
\end{proof}


\subsection{Bounding $\sL^mf$}
\label{ss:bd-Lm}

From~\Cref{l:ns-ld}, we know that that a bound  on $\ve{\sL^{2m} f}_{L^p(\ga_{\si^2})}$ implies a low-degree approximation for $f$. Here we would like to bound $\ve{\sL^m f}_{L^p(\ga_{\si^2})}$ for all $m$. The kind of growth we get in $m$ is captured by the following definition.
\begin{df}
    We say $f: \mathbb R^n \rightarrow \mathbb R^{n'}$  is a \vocab{$(r,\sigma)$-Gaussian-noise-sensitive function} if for all $m \in \mathbb N$ and $p \geq 1$, 
    \begin{align*}
        \ve{\sL^m f}_{L^p(\gamma_{\sigma^2})}
        \leq r\sigma\pa{\frac{rm^2}{\sigma^{2}}}^m \max\bc{r, \sqrt{mp}}^m.
    \end{align*}
\end{df}

In the following lemma, we bound the moments of higher iterates of the Ornstein-Uhlenbeck operator $\sL$ applied to the score function $f$ under the Gaussian measure. This shows the Gaussian-noise-sensitivity property for $f$ in the case where $Q_0$ is supported on a ball of radius $R$, forming a single cluster.
We begin by switching from the Gaussian measure to the mixture measure using Lemma~\ref{l:com-poly}, and then leverage the probabilistic interpretation of $\sL^m f$ derived in~\Cref{l:Ld}, which is based on moments of replicas of the posterior distribution.

\begin{lem}[Control on iterates of the OU operator]\label{lem:csbound}
Let $f=f_{\si^2}$ be as in \eqref{e:score-Q2}.
    Suppose $Q_0$ is supported on $B_R(0)$, $R\ge \si$. Let $P$ be the density of $Q_0 \convolve \cal N(0,\si^2I_n)$.
    Then for any $m\in \N$, we have the following:
\begin{align*}
\ve{\sL^m f}_{L^p(P)}
&= 
R \cdot 
O\pa{\pf{1}{\si}^2 \pf{mR}{\si}^2 \pa{1+\fc{(mp)^{1/2}\si}R}}^m\\
\ve{\sL^m f}_{L^p(\ga_{\si^2})} &\le
R\cdot O\pa{
\pf{1}{\si}^2 \pf{m^2R}{\si} \max\bc{\fc{R}{\si}, (mp)^{1/2}}
}^m
\end{align*}
Therefore, $f$ is $\pa{O\pf R\si, \si}$-Gaussian-noise-sensitive.
\end{lem}
\begin{proof}
In~\eqref{e:Ld} in Lemma~\ref{l:Ld}, 
\[
\ab{\prod_{\ell=1}^s \an{x^{(i_\ell)}, x^{(i_\ell')}}} \le 
R^{2s}\le R^{2m}.
\]
Note that 
the joint distribution of $(x^{(j)}, y)$ is the same for any $j$, namely, it is the distribution when $x^{(j)}\sim p_0$ and $y=x^{(j)} + \si \xi^{(j)}$ when $\xi\sim \cal N(0,I_n)$ is independent of $x^{(j)}$. 
Then for $p\in \N$, 
\begin{align*}
&\E
    \an{\ab{\prodo {\ell}t \an{x^{(j_\ell)}, y}}}^p\\
    &\le 
    \E_{\trow{x^{(1)} \sim p_0, y = x^{(1)}+\si \xi}{x^{(r)} \sim p(\cdot |y),r>1}}
    \ab{\prod_{\ell=1}^t \an{x^{(j_\ell)}, y}}^p& \text{by Jensen's inequality}\\
    &\le \prodo {\ell}t 
    \ba{\E\ab{\an{x^{(j_\ell)},y}}^{tp}}^{1/t}&\text{by H\"older's inequality}\\
    &\le \E\ab{\an{x^{(1)},x^{(1)} + \si \xi^{(1)}}}^{tp}& (x^{(j)},y) \stackrel d= (x^{(1)}, x^{(1)} + \si \xi^{(1)})\\
    &\le \E
    \sum_{k=0}^{tp} 
    \binom{tp}k \vn{x^{(1)}}^{2(tp-k)} \si^k \ab{\an{x^{(1)}, \xi^{(1)}}}^{k}&\text{Binomial theorem}\\
    &\le \sum_{k=0}^{tp} 
    \binom{tp}k R^{2(tp-k)} \si^k R^k\E_{X\sim \cal N(0,1)}|X|^{k}\\
    &\le \sum_{k=0}^{tp} 
    \binom{tp}k R^{2(tp-k)} \si^k R^k (k-1)!!&\text{Gaussian moment bound}\\
    &\le \sum_{k=0}^{tp} 
    \binom{tp}k R^{2(tp-k)} \si^k R^k
    (tp)^{k/2}
    \le R^{2tp}\pa{1+\fc{(tp)^{1/2}\si}{R}}^{tp}.
\end{align*}
Consider one term $a_{i,i',j}\an{x^{(1)} \si^{-2(s+t+m)}
\prod_{\ell=1}^s \an{x^{(i_\ell)}, x^{(i_\ell')}} \prod_{\ell=1}^t \an{x^{(j_\ell)}, y}}$ in \eqref{e:Ld}. We have by Jensen's inequality that 
\begin{align*}
    &\ve{\an{x^{(1)} \si^{-2(s+t+m)}
\prod_{\ell=1}^s \an{x^{(i_\ell)}, x^{(i_\ell')}} \prod_{\ell=1}^t \an{x^{(j_\ell)}, y}}}_{L^p(P)}\\
&\le \ba{\E \an{\ab{x^{(1)} \si^{-2(s+t+m)}
\prod_{\ell=1}^s \an{x^{(i_\ell)}, x^{(i_\ell')}} \prod_{\ell=1}^t \an{x^{(j_\ell)}, y}}^p}}^{\rc p}\\
&\le \si^{-2(s+t+m)} R^{2s+1} 
R^{2t}\pa{1 + \fc{(tp)^{1/2}\si}{R}}^{t}\\
&= O\pa{\fc{R}{\si^{2m}} \pf{R}{\si}^{2(s+t)} \pa{1+\fc{(tp)^{1/2}\si}R}^t}= O\pa{\fc{R}{\si^{2m}} \pf{R}{\si}^{2m} \pa{1+\fc{(mp)^{1/2}\si}R}^m}
\end{align*}
using $s+t\le m$. 
Then 
\begin{align}
\nonumber
    \ve{\sL^m f}_{L^p(P)}
    &\le 30^{m} m!^2 
    \cdot O\pa{\fc{R}{\si^{2m}} \pf{R}{\si}^{2m} \pa{1+\fc{(mp)^{1/2}\si}R}^m}\\
    &= 
    R\cdot O\pa{\pf{1
    }{\si}^{2m} \pf{mR}{\si}^{2m} \pa{1+\fc{(mp)^{1/2}\si}R}^m}
    \label{e:Ldfp}
\end{align}
By H\"older's inequality and Lemma~\ref{l:chi-a} (appropriately scaled),
\begin{align*}
\ve{\sL^m f}_{L^p(\ga_{\si^2})} 
    &=
    \pa{\int_{\R^n} 
    |\sL^{m} f|^p\,\fc{d\ga_{\si^2}}{dP}dP}^{\rc p}
    \le 
    \ve{\sL^m f}_{L^{p(1+q)}(\ga_{\si^2})}
    \ve{\dd{\ga_{\si^2}}{P}}_{L^{1+\rc q}(P)}^{\rc p}\\
    &\le O\pa{\prc{\si}^2 \pf{mR}{\si}^2 \pa{1+\fc{(mp(q+1))^{1/2}\si}R}}^{m} e^{\fc{(R/\si)^2}{2pq}}.
\end{align*}
To optimize this bound, set $q = \frac{(R/\sigma)^2}{pm}$. Then by H\"older's inequality,
    \begin{align*}
        \ve{\sL^m f}_{L^p(\ga_{\si^2})}& \le
    Re^{\fc{(R/\si)^2}{2pq}}\cdot 
    O\pa{\pf{1}{\si}^2 \pf{mR}{\si}^2 \pa{1+\fc{mp(q+1)\si}{R}}}^m
    \\
    &\le
    Re^{m/2}\cdot  
    O\pa{\pf{1}{\si}^2 \pf{mR}{\si}^2}^m
    \cdot 
    \ba{1 + O\pf{(mpq)^{1/2}\si}{R}^m + O \pf{(mp)^{1/2}\si}{R}^m}
    \end{align*}
The first two terms give $R\cdot O\pa{\prc{\si}^2 \pf{mR}{\si}^2}^m$, while the last term gives 
$R\cdot O\pa{\prc{\si}^2 \pf{m^2R}{\si} (mp)^{1/2}}$. This completes the proof.
\end{proof}

\subsection{Approximation with polynomial}\label{ss:approx-poly}

Suppose we want to approximate $f=f_{\si^2}$ (in~\eqref{e:score-Q2}) with a low-degree polynomial on the mixture distribution $\ga'$. 
We seek a low-degree polynomial $g$ such that $\ve{f-g}_{\ga'}^2$ is small. 
To do this, we first smooth $f$ by the Ornstein-Uhlenbeck semigroup to obtain $\sP_t f$ (see \eqref{e:sPt}), and then find a $g$ that approximates $\sP_tf$. 
We can bound using Cauchy-Schwarz that
\begin{align*}
\ve{f-g}_{\ga'}^2
&\le 2\pa{\ve{f-\sP_t f}_{\ga'}^2 + \ve{\sP_t f - g}_{\ga'}^2}\\
&\le 2\pa{\ve{f-\sP_t f}_{\ga'}^2 + 
\ve{\sP_t f - g}_{L^2(\ga)}^2
+
\ve{\sP_t f - g}_{L^4(\ga)}^2
\chi^2(\ga'\|\ga)^{1/2}
}.
\end{align*}
Doing the smoothing ensures that we have better control over the term with higher $p$-norm, $\ve{\sP_t f - g}_{L^4(\ga)}$.
Choosing $t=\ep$, $\sP_tf$ has exponential decay in coefficients with rate $\ep$, so we can approximate $\sP_tf$ with a degree-$\Te\prc{\ep}$ polynomial.
To bound 
$\ve{f-\sP_tf}_{\ga'}^2$, 
we bound its derivative:
\begin{align*}
\ddd t \ve{f-\sP_tf}_{\ga'}^2
&\le \int_{\R^n} 2(f-\sP_tf) (-\sL \sP_tf) \,d\ga'\\
&\le 2\ve{f-\sP_tf}_{\ga'} \ve{\sL \sP_tf}_{\ga'} \\
\implies 
\ddd t \ve{f-\sP_tf}_{\ga'}
&\le \ve{\sL \sP_tf}_{\ga'}.
\end{align*}
Focusing on the dependence of total error on $\ep$, we can bound this with a change-of-measure inequality and~\Cref{lem:csbound}, which, after integrating in $t$, gives error $O(\ep)$. 
In order to obtain poly-logarithmic degree, we need a higher-order version of this argument. In preparation for \Cref{s:multiple}, when we need to consider the norm with respect to a different measure, we state the following more generally.

\begin{lem}\label{l:poly-approx-mix}
Suppose $f$ is $(O(r),\sigma)$-Gaussian-noise-stable and $\ga'$ is a measure such that for all $a\ge 0$, $\ve{\dd{\ga'}{\ga}}_{L^{1+a}(\ga)} \le e^{\fc{ar^2}2}$ (for $\gamma \defeq \gamma_{\sigma^2}$) (e.g., from Lemma~\ref{l:chi-a}, $Q\convolve \cal N(0,I_n)$ where $Q$ is supported on $B_{r}(0)$), where $r\ge 1$.  
    There is a polynomial $g$ of degree at most 
    $O\pa{r^4\ln\prc\ep^4 \max\bc{r^2, \ln \prc\ep}}$
    such that 
\[
\ve{f-g}_{\ga'}^2\le \ep^2r^2\sigma^2 \quad \text{and} \quad \ve{g}_{\gamma}^2 \leq \ve{f}_{\gamma}^2.
\]
\end{lem}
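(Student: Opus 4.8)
The plan is to approximate, not $f$ directly, but a \emph{higher-order smoothing} of it, and then pass to a Hermite truncation. Fix an order $m=\Theta\pa{\ln\prc\ep}$ and a smoothing time $t$ with $s\defeq t/\sigma^2=\Theta\pf{1}{r^2m^{5/2}}$ (both tuned at the end), expand $f=\sum_\mk a_\mk h_\mk$ in the eigenbasis of $\sL=\sL_{\sigma^2}$, and set
\[
g_0\defeq\pa{I-(I-\sP_t)^m}f=\sum_{k=1}^m\binom mk(-1)^{k+1}\sP_{kt}f=\sum_\mk\pa{1-(1-e^{-|\mk|s})^m}a_\mk h_\mk .
\]
The multipliers $c_\mk\defeq 1-(1-e^{-|\mk|s})^m$ lie in $[0,1]$, so $\ve{g_0}_\ga\le\ve f_\ga$; set $g\defeq\sum_{|\mk|<d}c_\mk a_\mk h_\mk$, the Hermite truncation of $g_0$ to degree $<d$, so that $\ve g_\ga\le\ve{g_0}_\ga\le\ve f_\ga$ — this already gives the second claimed bound. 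For the first, Cauchy--Schwarz gives $\ve{f-g}_{\ga'}^2\le 2\ve{f-g_0}_{\ga'}^2+2\ve{g_0-g}_{\ga'}^2$, and we bound the two terms separately.

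For the \emph{smoothing error}, iterating $(I-\sP_t)h=-\int_0^t\sL\sP_uh\,du$ gives $f-g_0=(I-\sP_t)^mf=(-1)^m\int_{[0,t]^m}\sL^m\sP_{u_1+\cdots+u_m}f\,du$, whence Minkowski's integral inequality yields $\ve{f-g_0}_{\ga'}\le t^m\sup_{v\ge0}\ve{\sL^m\sP_vf}_{\ga'}$. Since $\sL$ and $\sP_v$ commute, $\sP_v$ is an $L^p(\ga)$-contraction, and by Hölder's inequality together with the hypothesis $\ve{\dd{\ga'}{\ga}}_{L^{1+1/q}(\ga)}\le e^{r^2/(2q)}$ (taken with $q=r^2$),
\[
\ve{\sL^m\sP_vf}_{\ga'}\le e^{1/4}\,\ve{\sL^mf}_{L^{2(1+r^2)}(\ga)}\le r\sigma\cdot O\pf{r^2m^{5/2}}{\sigma^2}^m ,
\]
the last step using the $(O(r),\sigma)$-Gaussian-noise-stability of $f$ (\Cref{lem:csbound}) and $\sqrt{2m(1+r^2)}\le 2r\sqrt m$. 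Thus $\ve{f-g_0}_{\ga'}\le r\sigma\cdot O(sr^2m^{5/2})^m$; choosing $s$ of the stated order makes the base of the exponential at most $\tfrac1{2e}$, and then this choice of $m$ forces $\ve{f-g_0}_{\ga'}^2\le\ep^2r^2\sigma^2$.

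For the \emph{truncation error}, $g_0-g=\sum_{|\mk|\ge d}c_\mk a_\mk h_\mk$ with $0\le c_\mk\le\min\{1,me^{-|\mk|s}\}$ (using $1-(1-x)^m\le mx$ on $[0,1]$). We split once more by Hölder and a change of measure, with an exponent $q_2$:
\[
\ve{g_0-g}_{\ga'}^2\le\ve{g_0-g}_{L^2(\ga)}^2+2\,\ve{g_0-g}_{L^{2(1+q_2)}(\ga)}^2\,e^{r^2/(2q_2)} .
\]
The first term equals $\sum_{|\mk|\ge d}\vn{a_\mk}^2=\ve{f-P^{<d}f}_{L^2(\ga)}^2$, which by \Cref{l:ns-ld} — applied with its order optimized against the noise-stability bound — is stretched-exponentially small in $d$, hence $\le\ep^2r^2\sigma^2$ once $d\gtrsim r^2\ln^2\prc\ep$. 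For the second term, the per-degree hypercontractive bound $\ve{W_jh}_{L^p(\ga)}\le(p-1)^{j/2}\ve{W_jh}_{L^2(\ga)}$ ($W_j$ = degree-$j$ Hermite projector) gives $\ve{g_0-g}_{L^{2(1+q_2)}(\ga)}\le m\sum_{j\ge d}\bigl((2q_2+1)^{1/2}e^{-s}\bigr)^j\ve{W_jf}_{L^2(\ga)}$; choosing $q_2$ with $2q_2+1=e^{s}$ (so $q_2=\Theta(s)$) collapses the bracket to $e^{-s/2}$, and summing the geometric series bounds this by $O\pf{mr\sigma}{s}e^{-ds/2}$. Then the second term is $O\pf{m^2r^2\sigma^2}{s^2}e^{-ds+r^2/s}$, which is $\le\ep^2r^2\sigma^2$ once $d\gtrsim\tfrac1s\pa{\ln\prc\ep+\tfrac{r^2}{s}+\ln\tfrac ms}$; with the above $s$ and $m$ this is a degree polynomial in $r$ times $\polylog\prc\ep$, and a more careful balancing of $m$, $q$ and $q_2$ gives exactly the advertised $O\bigl(r^4\ln^4\prc\ep\max\{r^2,\ln\prc\ep\}\bigr)$. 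Combining the two error terms (and rescaling $\ep$ by a constant) completes the proof.

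The \emph{main obstacle} is the tension between the two error terms. The smoothing error $\ve{f-g_0}_{\ga'}$ is small only if $t$ is small, but the truncation error must survive both the $e^{r^2/(2q_2)}$ change-of-measure blow-up and the per-degree hypercontractive factor $(2q_2+1)^{j/2}$ — which needs the smoothed Hermite coefficients $e^{-js}$ to decay fast, i.e.\ $t$ \emph{not} too small. Using the $m$-th order smoothing $I-(I-\sP_t)^m$ rather than plain $\sP_t$ is precisely what lets $t$ be only inverse-polynomial in $\ln\prc\ep$ instead of proportional to $\ep$; tying the Hölder exponent $q_2$ to $s$ is what neutralizes the hypercontractive factor. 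Threading both of these through while matching the claimed powers of $r$ and $\ln\prc\ep$ is the delicate part of the argument.
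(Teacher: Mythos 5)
Your proposal is correct and follows essentially the same route as the paper: the $m$-th order smoothing $I-(I-\sP_t)^m$, a bound on the smoothing error via $\ve{\sL^m\sP_v f}_{\ga'}$ using the noise-sensitivity hypothesis, H\"older, and the change-of-measure assumption, a Hermite truncation whose error is controlled by hypercontractivity plus change of measure, and the observation that the multipliers lie in $[0,1]$ for the $\ve{g}_\ga\le\ve f_\ga$ claim. The one place you correctly flag as needing "more careful balancing" is real but minor: with your fixed H\"older exponent $q=r^2$ the factor $\max\{r,\sqrt{mp}\}^m$ becomes $(r\sqrt m)^m$, forcing $s=\Te(1/(r^2m^{5/2}))$ and degree $\asymp r^2/s^2=r^6m^5$, a factor $\min\{r^2,m\}$ above the claim; taking $p=\Te(r^2/m)$ instead (so the change-of-measure cost is only $e^{O(m)}$) recovers $\max\{r,\sqrt m\}^m$, allows $s=\Te(1/(rm^2\max\{r,\sqrt m\}))$, and yields exactly the stated $O\pa{r^4\ln^4\prc\ep\max\{r^2,\ln\prc\ep\}}$.
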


\begin{proof}
To get a better bound, we use a more clever smoothing strategy. For some $\wt f$ to be defined, we will bound
\begin{align}
\label{e:smooth-triangle}
\ve{f-g}_{\ga'}^2 
\le 2\pa{\ve{f-\wt f}_{\ga'}^2 + \ve{\wt f-g}_{\ga'}^2},
\end{align}
where $g$ is a polynomial approximation of $\wt f$ obtained by truncating the Hermite expansion.

To define $\wt f$, we approximate $f$ with a numerical differentiation formula for a higher derivative. Define the finite difference by $\De_{x,h} g(x) = \De_{h}g(x) \defeq g(x+h)-g(x)$. We can write this as $\De_h g = T_h g-g$, where $T_hg(x)\defeq g(x+h)$. 
Suppose that $g\in C^m$.
By the Binomial Theorem on $T_h-\id$ and Taylor's Theorem,
\begin{align*}
\De_h^m g(0) &=
\sumz jm \binom mj (-1)^{m-j} \ba{\sumz i{m-1} g^{(i)}(0) \fc{(hj)^i}{i!} + \fc{g^{(m)}(\xi_j)}{m!} (hj)^m} \text{ for some }\xi_j\in [0,hj]\\
&= 
\sumz i{m-1} \fc{g^{(i)}(0)}{i!} \De_{x,h}^m (x^i)|_{x=0} + h^m \sumz jm \binom mj (-1)^{m-j} \fc{g^{(j)}(\xi_j)}{m!}j^m\\
&=
h^m \sumz jm \binom mj (-1)^{m-j} \fc{g^{(j)}(\xi_j)}{m!}j^m
\end{align*}
where in the last step we use the fact that finite differencing reduces the degree of a polynomial by 1, the $m$th finite difference of polynomials of degree $<m$ is 0. Hence
\[
|\De_h^m g(0)|\le
h^m \sumz jm \binom mj \pf{ej}{m}^m \max_{\xi\in [0,mh]} |f^{(m)}(\xi)|
\le 
h^m (2e)^m \max_{\xi\in [0,mh]}|f^{(m)}(\xi)|.
\]
Let $\wt f = \sum_{j=1}^m (-1)^{j+1} \binom m{j} \sP_{jh}f$. Note $f - \wt f = (-1)^m \De_{t,h}^m (\sP_tf)|_{t=0}$. 
We then have (noting that $\sP_s$ and $\sL$ commute)
\begin{align*}
    \ve{f - \wt f}_{\ga'}^2 
    &\le 
    (2he)^{2m} \int_{\R^n} \max_{t\in [0,mh]} \pa{
    \dd{{}^m}{t^m} \sP_tf(x) }^2\,d\ga'(x)\\
    &\le(2he)^{2m} \int_{\R^n} 
    \pa{\dd{{}^m}{t^m} \sP_tf(x)|_{t=0} + \int_0^{mh} \ab{\dd{{}^{m+1}}{s^{m+1}} \sP_sf(x)}\,ds}^2 \,d\ga'(x) \\
    &\le 2(2he)^{2m}
    \pa{
    \int_{\R^n} 
    |\sL^{m} f(x)|^2\,d\ga'(x) + 
    mh \int_0^{mh} \int_{\R^n} 
    | \sP_s\sL^{m+1}f(x)|^2\,d\ga'(x)\,ds}.
    \numberthis \label{e:f-smooth}
\end{align*}
First we bound the first term and the second term when $s=0$. We use the Gaussian noise sensitivity assumption, the assumption on $\ga'$, and H\"older's inequality to obtain
\begin{align*}
    \int_{\R^n} |\sL^l f|^2\,d\ga' &\le \ve{\sL^l f}_{L^{(p+1)}(\ga)}^2 \ve{\dd{\ga'}{\ga}}_{L^{\fc{p+1}p}(\ga)}\\
    &\le r^2 \si^2 O\pf{rl^2}{\si^2}^{2l} \max\{r^2, l(1+p)\}^l e^{\fc{r^2}{2p}}.
    \numberthis\label{e:Lf-bd}
\end{align*}
To get a bound for the second term when $s>0$, 
we bound the following derivative, again using the Gaussian noise sensitivity assumption and H\"older's inequality:
\begin{align*}
    \ddd s \int_{\R^n} |\sP_s\sL^l f|^2\,d\ga'
&\le\int_{\R^n} 2\an{ \sP_s \sL^lf,\sP_s \sL^{l+1} f}\dd{\ga'}{\ga}\,d\ga\\
&\le
2 \ve{\sP_s\sL^lf}_{L^{2(p+1)}(\ga)}
\ve{\sP_s\sL^{l+1}f}_{L^{2(p+1)}(\ga)}
\ve{\dd{\ga'}{\ga}}_{L^{\fc{p+1}p}(\ga)}\\
&\le 2 \ve{\sL^lf}_{L^{2(p+1)}(\ga)}
\ve{\sL^{l+1}f}_{L^{2(p+1)}(\ga)}
\ve{\dd{\ga'}{\ga}}_{L^{\fc{p+1}p}(\ga)}
\numberthis \label{e:mono-P}
\\
&\le r^2 \si^2 O\pf{rl^2}{\si^2}^{2l} \max\{r^2, l(1+p)\}^l e^{\fc{r^2}{2p}},
\numberthis \label{e:dPL-bd}
\end{align*}
where in \eqref{e:mono-P} we use 
the fact that for $q\ge 1$, $\ve{\sP_sg}_{L^q(\ga)}$ is monotonically decreasing in $s$.
In both~\eqref{e:Lf-bd} and \eqref{e:dPL-bd},
we optimize the bound by taking $p=\fc{r^2} l$ 
to obtain
\begin{align*}
\int_{\R^n} |\sL^l f|^2\,d\ga'\
&\le  r^2 \si^2 O\pf{rl^2}{\si^2}^{2l} \max\{r^2,l\}^l, & l=m, \,m+1
\\
    \ddd s \int_{\R^n} |\sL^l \sP_sf|^2\,d\ga'
    &\le 
     r^2 \si^2 O\pf{rl^2}{\si^2}^{2l} \max\{r^2,l\}^l, & l=m+1.
\end{align*}
Integrating the last inequality twice 
and substituting into \eqref{e:f-smooth} gives
\begin{align*}
    \ve{f-\wt f}_{\ga'}^2 
    &\lesssim r^2\si^2 O(h)^{2m} \pa{
    (1+(mh)^3) \pf{rm^2}{\si^2}^{2m} \max\{r^2,m\}^m
    }\\
    &\lesssim r^2\si^2 O\pa{\fc{hrm^2\max\{r,\sqrt m\}}{\si^2}}^{2m} 
    \label{e:f-wtf}\numberthis
\end{align*}
when $mh=O(1)$. 
 We choose $m\sim \ln \prc{\ep}$ and 
$h\le \fc{c\si^2}{rm^2 \max\{r,\sqrt m\}}$
for an appropriate constant $c$ to get
\begin{align*}
    \ve{f-\wt f}_{\ga'}^2 
    \lesssim \ep^2 r^2 \sigma^2.
\end{align*}

Write $f=\sum_{\mk\in \N_0^d} a_\mk h_\mk$ where $a_\mk\in \R^d$, and let $p_l = \sum_{|\mk|=l} a_\mk h_\mk$, so that $f=\sumz l{\iy} p_l$. 
Then
\begin{align}
\wt f = \sum_{\mk\in \N_0^n} b_\mk h_\mk \quad
\text{ for } \quad 
b_\mk = \sumo jm (-1)^{j+1} \binom mj e^{-jh|\mk|/\sigma^2} a_\mk
=  [1-(1-e^{-h|\mk|/\sigma^2})^m] a_{\mk}.\label{eq:akbk}
\end{align}
Let $q_l = \sum_{|\mk|=l} b_\mk h_\mk$. 
Note
\begin{align*}
    \ve{q_l}_\ga^2 = \sum_{|\mk|=l} \vn{b_\mk}^2 = 
    \sum_{|\mk|=l} [1-(1-e^{-h|\mk|/\sigma^2})^m]^2 \vn{a_\mk}^2
    \le m^2 e^{-2hl/\sigma^2} \ve{p_l}_{\ga}^2. 
\end{align*}
We approximate $\wt f$ with $g=\sum_{|\mk|<L} b_\mk h_\mk$ where $L\ge Ch^{-1}\ln \prc\ep$ for an appropriate constant $C$. 
We have by Cauchy-Schwarz and~\Cref{l:com-poly} that
\begin{align*}
    \ve{\wt f - g}_{\ga'}^2 
    &\le 
    \int_{\R^n} \ab{\sum_{l\ge L} q_l}^2 
     \,d\ga' \\
    &\le \sum_{l\ge L} e^{-hl}\cdot \sum_{l\ge L} e^{hl}\int_{\R^n}  |q_l|^2 \,d\ga'\\
    &\le 1\cdot \sum_{l\ge L} e^{hl}\ve{q_l}_\ga^2 e^{2\sqrt l r} \\
    &\le \sum_{l\ge L} e^{hl} m^2 e^{-2hl/\sigma^2}
    \ve{p_l}_\ga^2 e^{2\sqrt l r}\\ 
    &\le \max_{l\ge L} m^2e^{2\sqrt l r - hl/\sigma^2} \ve{f}_\ga^2\le \ep^2 \ve{f}_\ga^2
    \numberthis \label{e:change-ga}
\end{align*}
when we take 
$L\ge C\fc{\sigma^4 r^2}{h^2} \asymp r^4\ln\prc\ep^4 \max\bc{r^2, \ln \prc\ep}$
for an appropriate constant $C$. 
Note that the Gaussian-noise-sensitive property also implies $\ve{f}_\ga^2\le r^2\sigma^2$.
Plugging into~\eqref{e:smooth-triangle} the inequalities~\eqref{e:f-wtf} and~\eqref{e:change-ga},
\begin{align*}
    \ve{f-g}_{\ga'}^2 \lesssim 
\ve{f-\wt f}_{\ga'}^2 + 
\ve{\wt f - g}_{\ga'}^2 
\lesssim r^2\sigma^2 \ep^2.
\end{align*}
Choosing constants appropriately then gives the desired bound. Finally note that Equation~\eqref{eq:akbk} implies $|a_\mk| \leq |b_\mk|$ for all $\mk \in \N_0^n$. Hence
\begin{align*}
    \ve{g}_{\gamma}^2 \leq \ve{\wt f}_{\gamma}^2 = \sum_{\mk \in \N_0^n} \vn{b_\mk}^2 \leq \sum_{\mk \in \N_0^n} \vn{a_\mk}^2 = \ve{f}_{\gamma}^2.
\end{align*}
\end{proof}

Combining Lemmas~\ref{lem:csbound} and~\ref{l:poly-approx-mix} implies a low degree approximation of order $\text{poly}\pa{\frac{R}{\sigma}, \ln\pa{\frac{1}{\epsilon}}}$ for the single cluster case.

\section{From one to multiple clusters}
\label{s:multiple}
To handle the general case of multiple clusters, we approximate the score function using a piecewise low-degree polynomial over a set of warm-starts. Before applying Lemma~\ref{l:poly-approx-mix}, we locally approximate the score within the Voronoi cell corresponding to each warm-start $\hat \mu$ by $f_{\textup{loc}}$. We can then approximate $f_{\textup{loc}}$ using a low-degree polynomial by controlling the terms $\ve{\sL^m f_{\textup{loc}}(\cdot +\hat \mu)}_{L^p(\ga_{\si^2})}$, where  $f_{\textup{loc}}$ is shifted by the warm start $\hat \mu$. We start with the definition of the warm starts.

\begin{df}\label{d:ws}
Let $\cal C\sub \R^n$. 
We say $\cal C$ is a \vocab{complete set of $R$-warm starts} for $Q_0$ if 
\[Q_0\pa{\bigcup_{\wh \mu\in \cal C}B_R(\wh \mu)}=1.\]
\end{df}
In other words, for all $\mu$ in the support of $Q_0$, there exists $\wh \mu\in \cal C$ such that $\ve{\wh\mu-\mu}\le R$. 

\begin{df}
Let $\cal C = \{\wh \mu_1, \ldots, \wh\mu_{k'}\}$. The \vocab{Voronoi partition} $V_1,\ldots, V_{k'}$ corresponding to $\cal C$ is defined by $V_j = \set{x\in \R^n}{\vn{x-\wh \mu_j} = \min_{1\le j'\le k'} \vn{x-\wh \mu_{j'}}}$. 
\end{df}
Note that up to the boundaries (which are a measure 0 set), this induces a partition of $\R^n$.

To tackle the multiple cluster setting, we first suppose that we have a complete set of $R$-warm starts $\cal C=\{\wh \mu_1,\ldots, \wh \mu_{k'}\}$,
and let $V_1,\ldots, V_{k'}$ be the corresponding Voronoi partition. 
For a probability measure $P$ and set $S$, define the unnormalized restriction by $P_S(A) = P(A\cap S)$ and the normalized restriction by $P|_S(A) = \fc{P(A\cap S)}{P(S)}$. Define $P^S = (Q_0)_S \convolve \cal N(0,\si^2 I_n)$---that is, we do the restriction before the convolution. 
Define 
\[
f_{S, \si^2}(y) = 
\E_{\trow{\mu \sim Q_0|_S}{Y=\mu + \si\xi,\,\xi\sim \cal N(0,I_n)}}[\mu \mid Y=y]
= y + \si^2 \nb \ln p^S(y).
\]
When $\si$ is understood, we omit it from the subscript.

We need to show that we still have a good polynomial approximation for $f$ under the measure $P|_{V_i} = (Q_0\convolve\cal N(0,\si^2I_n))|_{V_i}$. 
By the analysis for one cluster, 
we have good approximation of $f_{V_i,\si^2}$ 
under 
$P^{V_i}$.
To obtain the result for multiple clusters, we need to show that this approximation is preserved even when we consider $f$ instead of $f_{V_i,\si^2}$ and $P|_{V_i}$ instead of $P^{V_i}$, i.e., deal with the leakage into $V_i$ from the other Voronoi cells after $Q_0$ is convolved with $\cal N(0,\si^2I)$, in both the score function and the measure which the norm is with respect to.
We do this in Sections~\ref{ss:nearby-score} and~\ref{ss:voronoi}, respectively.

\subsection{Estimation with the ``nearby" score}

\label{ss:nearby-score}

We will actually bound $\ve{f-f_{S_i}}_{L^2(P_{V_i})}$, for an expanded neighborhood $S_i$ of $\wh\mu_i$, allowing an extra ``buffer region" where mass is allowed to leak in.

\begin{lem}\label{l:nearby-score}
    Suppose that $\cal C=\{\wh \mu_1,\ldots, \wh \mu_{k'}\}$ is a complete set of 
    $R$-warm starts for $Q_0$, 
    and let $V_1,\ldots, V_{k'}$ be the corresponding Voronoi partition. Suppose $Q_0$ is supported on $B_{M}(0)$.
    Given $R'>R$, let $S_i = B_{R'}(\wh \mu_i)$ and define
    \begin{align*}
f_{\textup{loc},\si^2}^{\cal C, R'}(y) 
&= 
f_{S_i,\si^2}(y) \text{ when }y\in V_i,\\
\text{where }f_{S_i,\si^2}(y) &= 
\E_{\trow{\mu \sim Q_0|_{S_i}}{Y=\mu + \si\xi,\,\xi\sim \cal N(0,I_n)}}[\mu \mid Y=y] = 
y + \si^2 \nb \ln p^{S_i}(y).
    \end{align*}
We write $f_{\textup{loc},\si^2}$ when $\cal C, R'$ are clear. 
    Then for $R'=3R + 2\sqrt 2 \si \sqrt{\ln \pf{k'}{\ep}}$, 
    \[
\ve{f_{\si^2}-f_{\textup{loc}, \si^2}}_{L^2(P)}^2 \le \pa{32\sigma^2 + 6{R'}^2}\ep
\lesssim \pa{R^2 + \si^2 \ln \pf{k'}{\ep}}\ep.
    \]
\end{lem}

Note that 
\[
\ve{f_{\si^2}-f_{\textup{loc}, \si^2}}_{L^2(P)}^2
= \sum_{i=1}^{k'} \ve{f_{\si^2}-f_{S_i,\si^2}}_{L^2(P_{V_i})}^2,
\]
so this gives a bound for the $L^2$ norm within each Voronoi cell.

To prove this, we 
first show 
that with high probability, $Y=\mu+\si \xi$ does not stray too far from the Voronoi cell of $\mu$.

\begin{lem}\label{l:leave-V}
Let $\cal C=\{\wh \mu_1,\ldots, \wh \mu_{k'}\}$
    and $V_1,\ldots, V_{k'}$ be the corresponding Voronoi partition.
    Suppose that $\vn{\mu - \wh \mu_i}\le R$.
    Define $Y=\mu + \si \xi$ where $\xi\sim \cal N(0,I_n)$, and let
    $i'$ be such that $Y\in V_{i'}$. Then with probability $\ge 1-\ep$, 
\[
\vn{\mu - \wh \mu_{i'}} \le 
3R + 2\sqrt 2 \si \sqrt{\ln \pf{k'}{\ep}}.
\]
\end{lem}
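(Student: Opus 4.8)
The plan is to exploit a simple geometric observation that keeps the bound dimension-free: for $Y=\mu+\si\xi$ to land in the Voronoi cell $V_j$ of a warm start $\wh\mu_j$ that is far from $\mu$, the \emph{projection} of $\xi$ onto the fixed unit vector $v_j\defeq\tfrac{\wh\mu_i-\wh\mu_j}{\vn{\wh\mu_i-\wh\mu_j}}$ must be large. Since $v_j$ depends only on the (already fixed) warm-start set $\cal C$ and not on $\xi$, the quantity $\an{\xi,v_j}$ is a standard one-dimensional Gaussian, which has exponential tails; a union bound over the at most $k'$ relevant directions then finishes the argument.

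Concretely, I would first record that $Y\in V_j$ forces $Y$ onto the $\wh\mu_j$-side of the perpendicular bisector of $\wh\mu_i$ and $\wh\mu_j$, i.e.\ $\an{Y-\tfrac{\wh\mu_i+\wh\mu_j}{2},\,\wh\mu_i-\wh\mu_j}\le 0$ (the case $\wh\mu_j=\wh\mu_i$ is vacuous, since then $\vn{\mu-\wh\mu_j}\le R$). Substituting $Y=\mu+\si\xi$, using $\mu-\tfrac{\wh\mu_i+\wh\mu_j}{2}=(\mu-\wh\mu_i)+\tfrac12(\wh\mu_i-\wh\mu_j)$, and dividing through by $\vn{\wh\mu_i-\wh\mu_j}$, this becomes
\[
\si\,\an{\xi,v_j}\;\le\;-\an{\mu-\wh\mu_i,v_j}-\tfrac12\vn{\wh\mu_i-\wh\mu_j}\;\le\;R-\tfrac12\vn{\wh\mu_i-\wh\mu_j},
\]
where the last step uses Cauchy--Schwarz together with $\vn{\mu-\wh\mu_i}\le R$. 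Next, the triangle inequality gives $\vn{\mu-\wh\mu_j}\le R+\vn{\wh\mu_i-\wh\mu_j}$, so on the ``bad'' event $\{\vn{\mu-\wh\mu_j}>3R+2\sqrt2\,\si\sqrt{\ln(k'/\ep)}\}$ we have $\vn{\wh\mu_i-\wh\mu_j}>2R+2\sqrt2\,\si\sqrt{\ln(k'/\ep)}$; plugging this in shows that the joint event $\{Y\in V_j\}\cap\{\vn{\mu-\wh\mu_j}\text{ large}\}$ implies $-\an{\xi,v_j}>\sqrt{2\ln(k'/\ep)}$.

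Finally, since $\an{\xi,v_j}\sim\cal N(0,1)$, the Gaussian tail bound gives $\Pr[-\an{\xi,v_j}>\sqrt{2\ln(k'/\ep)}]\le e^{-\ln(k'/\ep)}=\ep/k'$. The index $i'$ of the cell containing $Y$ is never equal to $i$ on the bad event (since $\vn{\mu-\wh\mu_i}\le R$ is below the threshold), so if the conclusion of the lemma fails then one of the $\le k'$ joint events above must occur, and a union bound over $j$ bounds this probability by $k'\cdot\ep/k'=\ep$. There is no real obstacle here; the one point requiring care is precisely this reduction to a fixed one-dimensional direction, which is what removes a spurious $\sqrt n$ and legitimizes the union bound over the set $\{v_j\}_j$.
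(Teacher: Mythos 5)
Your proof is correct and follows essentially the same route as the paper's: project onto the fixed unit direction between $\wh\mu_i$ and the competing center, use the half-space (perpendicular-bisector) characterization of the Voronoi cell together with Cauchy--Schwarz and $\vn{\mu-\wh\mu_i}\le R$, and finish with a one-dimensional Gaussian tail bound union-bounded over the $k'$ centers. The only difference is cosmetic (you argue by contraposition from the bad event, while the paper bounds $\an{Y-\wh\mu_i,v_{ii'}}$ directly for all $i'$), so no further comment is needed.
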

In other words, with high probability, even if $Y=\mu + \si \xi \in V_{i'}$ for $i'\ne i$ (i.e., adding a Gaussian brings the point to a different Voronoi cell), $\mu$ will not be too far from the center of the new cell $\wh \mu_{i'}$. 
\begin{proof}
    Let $v_{ii'} =\fc{\wh \mu_{i'}-\wh \mu_i}{\vn{\wh \mu_{i'}-\wh \mu_i}}$ be the unit vector pointing from $\wh \mu_i$ toward $\wh\mu_{i'}$. Since $\mu\in V_i$, we have
    \[
\an{\mu - \wh \mu_i , v_{ii'}} \le \fc{\vn{\wh \mu_{i'} - \wh \mu_i}}{2}.
    \]
    By Gaussian tail bounds and a union bound, with probability $\ge 1-\ep$, we have that for all $i'$,
    \[
\an{Y - \wh \mu_i , v_{ii'}}
= 
\an{\mu + \si \xi - \wh \mu_i , v_{ii'}}
\le \an{\mu - \wh \mu_i , v_{ii'}}  + \sqrt2 \si \sqrt{\ln \pf{k'}{\ep}}
\le 
R + \sqrt2 \si \sqrt{\ln \pf{k'}{\ep}}
    \]
    Therefore, if $y\in V_{i'}$, then we have 
    \begin{align*}
    \fc{\vn{\wh \mu_{i'} - \wh \mu_i}}{2} &\le 
\an{Y - \wh \mu_i , v_{ii'}} \le 
R + \sqrt2 \si \sqrt{\ln \pf{k'}{\ep}}
    \end{align*}
    and
    \[
    \vn{\mu - \wh \mu_{i'}}
    \le 
    \vn{\mu -\wh \mu_i} + 
\vn{\wh \mu_{i} - \wh \mu_{i'}} 
\le 3R + 2\sqrt2 \si \sqrt{\ln \pf{k'}{\ep}}.
    \]
\end{proof}

\begin{proof}[Proof of \Cref{l:nearby-score}]
Given $\mu$, let $i$ be such that $y\in V_i$. 
We note 
\begin{align*}
    f_{\si^2}(y) &= \E[ \one_{S_i}(\mu) \mu \mid Y=y ] +\E[  \one_{S_i^c}(\mu)\mu  \mid Y=y]\\
    f_{\textup{loc}, \si^2}(y) &= 
    \Pj(\mu\in S_i \mid Y=y) f_{\textup{loc}, \si^2}(y)
    + \Pj(\mu\notin S_i \mid Y=y) f_{\textup{loc}, \si^2}(y)\\
    &= \Pj(\mu\in S_i \mid Y=y) \fc{\E\ba{\one_{S_i}(\mu) \mu\mid Y=y}}{\E\ba{\one_{S_i}(\mu) \mid Y=y}}
    + \Pj(\mu\notin S_i \mid Y=y) f_{\textup{loc}, \si^2}(y)\\
    &= \E[\one_{S_i}(\mu) \mu  \mid Y=y] + \E[ \one_{S_i^c}(\mu) \E_{{\mu' \sim Q_0|_{S_i}}}[\mu'\mid Y=y]\mid Y=y]
\end{align*}
    Hence, using the fact that $\mathbb P_{(\mu, y)}\pa{\mu \notin S_i} \leq \ep$, we have
    \begin{align*}
        \ve{f_{\si^2}-f_{\textup{loc}, \si^2}}_{L^2(P)}^2 &= 
        \E_y \vn{\E_{{\mu \sim Q_0}}\Big[\one_{S_i^c}(\mu)\pa{\mu - \E_{{\mu' \sim Q_0|_{S_i}}}[\mu' \mid Y=y]}\Big| Y=y\Big]}^2\\
        &\leq \E_{{(\mu,y)}}\Big[\one_{S_i^c}(\mu)\vn{\mu - \E_{{\mu' \sim Q_0|_{S_i}}}[\mu' \mid Y=y]}^2\Big]\\
        &\leq 2\E_{{(\mu,y)}}\Big[\one_{S_i^c}(\mu)\vn{\mu - \wh \mu_i}^2 + \one_{S_i^c}(\mu)\vn{\wh \mu_i - \E_{{\mu' \sim Q_0|_{S_i}}}[\mu' \mid Y=y]}^2\Big]\\
        &\leq 2\E_{{(\mu,y)}}\Big[\one_{S_i^c}(\mu)\vn{\mu - \wh \mu_i}^2\Big] + 2\ep R'^2\\
        &\leq 2\int_{0}^\infty\mathbb P_{(\mu, y)}\pa{\vn{\mu - \wh \mu_i}^2 \geq 2R'^2 + a}da + 6\ep{R'}^2\\
         &\leq 2\int_{0}^\infty\mathbb P_{(\mu, y)}\pa{\vn{\mu - \wh \mu_i} \geq R' + \sqrt{\frac{a}{2}}}da + 6\ep{R'}^2.\numberthis\label{eq:mainder}
    \end{align*}
    But using Lemma~\ref{l:leave-V}, we have
    \begin{align*}
        \mathbb P\pa{\vn{\mu - \wh \mu_i} \geq 3R + r} \leq k'e^{-\frac{r^2}{8\sigma^2}},
    \end{align*}
    which implies
    \begin{align}
        \Pj_{(\mu, y)}\pa{\vn{\mu - \wh \mu_i} \geq R' + \sqrt{\frac{a}{2}}} \leq k'e^{-\frac{\pa{2\sqrt 2 \si \sqrt{\ln \pf{k'}{\ep}} + \sqrt{\frac{a}{2}}}^2}{8\sigma^2}}
        \leq k'e^{-\frac{8\si^2 \ln \pf{k'}{\ep} + a/2}{8\sigma^2}} = \ep e^{-\frac{a}{16\sigma^2}}.\label{eq:keyint}
    \end{align}
    Plugging Equation~\eqref{eq:keyint} into Equation~\eqref{eq:mainder},
    \begin{align*}
        \ve{f_{\si^2}-f_{\textup{loc}, \si^2}}_{L^2(P)}^2
        \leq 2\int_{0}^\infty\ep e^{-\frac{a}{16\sigma^2}} \,da + 6\ep{R'}^2 = 32\ep\sigma^2 + 6\ep{R'}^2.
    \end{align*}
\end{proof}

\subsection{Approximation within a Voronoi cell}

\label{ss:voronoi}

In the multiple cluster case, in order to apply~\Cref{lem:csbound} for approximating the score function in the Voronoi cell of a warm-start $\wh \mu_i$ by a low-degree polynomial, we also need to bound the R\'enyi divergence of the mixture measure $P$ restricted to that Voronoi cell of $\wh \mu_i$, with respect to the Gaussian measure around $\wh \mu_i$.

\begin{lem}\label{l:chi-p-V}
Let $\cal C=\{\wh \mu_1,\ldots, \wh \mu_{k'}\}$
and $V_1,\ldots, V_{k'}$ be the corresponding Voronoi partition. Suppose that $\cal C$ is a complete set of $R$-warm starts.
Then
\begin{align}
\label{e:chi-a-V}
\int_{V_i}
\pa{\dd{P_{V_i}}{\ga_{\wh \mu_i, 1}}}^{p+1} \,d\ga_{\wh \mu_i, 1} \le 
\exp\pf{pR^2}2.
\end{align}
\end{lem}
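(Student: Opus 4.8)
The plan is to reduce this to the single–cluster estimate \Cref{l:chi-a} via the usual Jensen/Fubini trick, with one extra geometric ingredient supplied by the Voronoi structure. Unraveling the definitions, the density of the unnormalized restriction $P_{V_i}$ is $\one_{V_i}(y)\,p(y)$, where $p(y)=\int_{\R^n}\ga_{\mu,1}(y)\,dQ_0(\mu)$ is the density of $P=Q_0\convolve\cal N(0,I_n)$, so the left–hand side of \eqref{e:chi-a-V} equals $\int_{V_i}p(y)^{p+1}\ga_{\wh\mu_i,1}(y)^{-p}\,dy$. Since $Q_0$ is a probability measure and $t\mapsto t^{p+1}$ is convex, Jensen's inequality gives $p(y)^{p+1}\le\int_{\R^n}\ga_{\mu,1}(y)^{p+1}\,dQ_0(\mu)$, and then Tonelli yields
\[
\int_{V_i}\pa{\dd{P_{V_i}}{\ga_{\wh\mu_i,1}}}^{p+1}\,d\ga_{\wh\mu_i,1}
\ \le\ \int_{\R^n}\pa{\int_{V_i}\fc{\ga_{\mu,1}(y)^{p+1}}{\ga_{\wh\mu_i,1}(y)^{p}}\,dy}\,dQ_0(\mu).
\]

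The crucial step --- the only place the warm–start and Voronoi hypotheses are used --- is to trade the ``wrong'' center $\wh\mu_i$ in the denominator for a center within distance $R$ of $\mu$. By completeness of $\cal C$, for each $\mu$ in the support of $Q_0$ there is $\wh\mu_{j(\mu)}\in\cal C$ with $\vn{\mu-\wh\mu_{j(\mu)}}\le R$; and for $y\in V_i$ we have $\vn{y-\wh\mu_i}\le\vn{y-\wh\mu_{j(\mu)}}$, hence $\ga_{\wh\mu_i,1}(y)\ge\ga_{\wh\mu_{j(\mu)},1}(y)$ and therefore $\ga_{\wh\mu_i,1}(y)^{-p}\le\ga_{\wh\mu_{j(\mu)},1}(y)^{-p}$. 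Discarding the restriction to $V_i$ as well (the integrand is nonnegative), the inner integral is at most $\int_{\R^n}\ga_{\mu,1}(y)^{p+1}\ga_{\wh\mu_{j(\mu)},1}(y)^{-p}\,dy$, an elementary Gaussian integral: completing the square (exactly the computation behind \Cref{l:chi-a}) gives $\int_{\R^n}\ga_{\mu,1}(y)^{p+1}\ga_{\nu,1}(y)^{-p}\,dy=\exp\pa{\fc{p(p+1)}{2}\vn{\mu-\nu}^2}$ for any $\nu\in\R^n$. Taking $\nu=\wh\mu_{j(\mu)}$ and using $\vn{\mu-\wh\mu_{j(\mu)}}\le R$, the inner integral is at most $\exp\pa{\fc{p(p+1)}{2}R^2}$ uniformly in $\mu$; integrating this constant against the probability measure $dQ_0(\mu)$ gives the bound $\exp\pa{\fc{p(p+1)}{2}R^2}$ on the integral, equivalently $\ve{\dd{P_{V_i}}{\ga_{\wh\mu_i,1}}}_{L^{p+1}(\ga_{\wh\mu_i,1})}\le\exp\pa{\fc{pR^2}{2}}$, which is \eqref{e:chi-a-V}.

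The genuinely load–bearing point is the domination $\ga_{\wh\mu_i,1}\ge\ga_{\wh\mu_j,1}$ on $V_i$. Without it, the best one can say pointwise is $p(y)/\ga_{\wh\mu_i,1}(y)\lesssim e^{R\vn{y-\wh\mu_i}}$ on $V_i$, which after integration against $\ga_{\wh\mu_i,1}$ produces a factor $\E_{z\sim\cal N(0,I_n)}e^{O(R)\vn{z}}$ that is exponential in $\sqrt n$ --- useless here, since the target bound is dimension–free. The Voronoi inequality is precisely what lets us replace the unnormalized center $\wh\mu_i$, which may be arbitrarily far from the mass located near $\mu$, by a center within $R$ of $\mu$, after which the estimate is the same dimension–free Gaussian computation as in the one–cluster case. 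Everything else (Jensen, Tonelli, the single Gaussian integral) is routine.
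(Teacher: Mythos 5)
Your proof is correct and follows essentially the same route as the paper's: reduce by convexity/Jensen to a single Gaussian component $\gamma_{\mu,1}$, use the Voronoi domination $\gamma_{\wh\mu_i,1}\ge\gamma_{\wh\mu_{j(\mu)},1}$ on $V_i$ to swap the reference center to a warm start within $R$ of $\mu$, and finish with the Gaussian integral underlying \Cref{l:chi-a}. Your reading of \eqref{e:chi-a-V} as an $L^{p+1}$-norm bound (the raw $(p+1)$-power integral is only $\le e^{p(p+1)R^2/2}$) is the right one; it matches both the paper's own computation and how the lemma is later invoked.
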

\begin{proof}
We can write 
\[
P_{V_i} = Q_0(V_i) \fc{P^{V_i}}{Q_0(V_i)} + Q_0(V_i^c) \fc{P^{V_i^c}}{Q_0(V_i^c)};
\]
note that $Q_0(V_i)$ and $Q_0(V_i^c)$ are the normalizing constants for the respective probability measures. By convexity, it suffices to show that~\eqref{e:chi-a-V} holds for $P^{V_i}$ and $P^{V_i^c}$ in place of $P_{V_i}$. 

For $P^{V_i}$,~\eqref{e:chi-a-V} follows directly from~\Cref{l:chi-a}.
For $P^{V_i^c}$, noting that it is a convex mixture of $\cal N(\mu_j, I_n)$ for $\mu_j\nin V_i$, 
again by convexity it suffices to show that~\eqref{e:chi-a-V} holds for one such $\cal N(\mu_j, I_n)$. Suppose that $\mu_j\in V_j$, $j\ne i$. 
Note that by definition of the Voronoi cell, 
$\ab{\mu_j - \wh \mu_i} \ge \ab{\mu_j - \wh\mu_j}$ so
$\dd{\ga_{\wh \mu_j, 1}}{\ga_{\wh \mu_i,1}}\le 1$ on $V_i$. Then 
\begin{align*}
    \int_{V_i}
\pa{\dd{\ga_{\mu_j, 1}}{\ga_{\wh \mu_i, 1}}}^{p+1}
d\ga_{\wh \mu_i, 1}
&=
\int_{V_i}
\pa{\dd{\ga_{\mu_j, 1}}{\ga_{\wh \mu_i, 1}}}^{p} 
d\ga_{\mu_j, 1}\\
&\le 
\int_{V_i}
\pa{\dd{\ga_{\mu_j, 1}}{\ga_{\wh \mu_j, 1}}}^{p} 
d\ga_{\mu_j, 1}\\
&= \int_{V_i}
\pa{\dd{\ga_{\mu_j, 1}}{\ga_{\wh \mu_j, 1}}}^{p+1} 
d\ga_{\wh \mu_j, 1}
\le e^{\fc{aR^2}2}\numberthis\label{eq:keyeq}
\end{align*}
where the last step follows from \Cref{l:chi-a}, as $\ve{\mu_j-\wh\mu_j}\le R$.
\end{proof}

\begin{lem}[Existence of low-degree polynomial]\label{lem:lowdegree}
Suppose that $f=f_{\si^2}$ as in \eqref{e:score-Q2}, $Q_0$ is supported on $B_\Di$, 
$\cal C=\{\wh \mu_1,\ldots, \wh \mu_{k'}\}$ is a complete set of $R$-warm starts for $Q_0$, 
and let $V_1,\ldots, V_{k'}$ be the corresponding Voronoi partition. 
For each $i$, there is a polynomial $g_i$ of degree at most $O\pa{\pa{\frac{R}{\sigma} + \sqrt{\ln \pf{k'}{\ep}}}^6 \ln \prc{\ep}^4}$
such that 
\[
\ve{f-g_i}_{P_{V_i}}^2\lesssim \ep^2\pa{R^2+\sigma^2 \ln \pf {k'}{\ep}} \quad \text{and} \quad \ve{g_i}_{\gamma_{\wh \mu_i, \sigma^2}} \leq \Di.
\]
\end{lem}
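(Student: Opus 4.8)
The plan is to route through the \emph{local} score $f_{S_i,\si^2}$ via the triangle inequality
\[
\ve{f-g_i}_{P_{V_i}}^2 \le 2\ve{f-f_{S_i,\si^2}}_{P_{V_i}}^2 + 2\ve{f_{S_i,\si^2}-g_i}_{P_{V_i}}^2 ,
\]
bounding the first term by the ``leakage'' estimate \Cref{l:nearby-score} and the second by the single-cluster polynomial approximation \Cref{l:poly-approx-mix}. Concretely, take $S_i=B_{R'}(\wh\mu_i)$ with $R'=3R+2\sqrt 2\,\si\sqrt{\ln(k'/\ep^2)}$, which is $O\pa{R+\si\sqrt{\ln(k'/\ep)}}$ and $\ge\si$ (for $\ep$ small). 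Applying \Cref{l:nearby-score} with $\ep^2$ in place of $\ep$ gives $\sum_i\ve{f-f_{S_i,\si^2}}_{P_{V_i}}^2=\ve{f-f_{\textup{loc},\si^2}}_{L^2(P)}^2\lesssim\ep^2\pa{R^2+\si^2\ln(k'/\ep)}$, so in particular each term $\ve{f-f_{S_i,\si^2}}_{P_{V_i}}^2\lesssim\ep^2\pa{R^2+\si^2\ln(k'/\ep)}$.

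For the second term I would recenter at $\wh\mu_i$. Let $\wt Q$ be the pushforward of $Q_0|_{S_i}$ under $x\mapsto x-\wh\mu_i$; it is supported on $B_{R'}(0)$, and writing $\wt f_{\si^2}$ for the posterior-mean score of $\wt Q\convolve\cal N(0,\si^2I_n)$ we have the identity $f_{S_i,\si^2}(y)=\wh\mu_i+\wt f_{\si^2}(y-\wh\mu_i)$, and by \Cref{lem:csbound} the function $\wt f_{\si^2}$ is $(O(R'/\si),\si)$-Gaussian-noise-sensitive. Let $\ga'$ be the pushforward of the unnormalized restriction $P_{V_i}$ under $x\mapsto x-\wh\mu_i$; by \Cref{l:chi-p-V} (rescaled from variance $1$ to variance $\si^2$, under which $R$-warm starts become $(R/\si)$-warm starts) we get $\ve{d\ga'/d\ga_{\si^2}}_{L^{1+a}(\ga_{\si^2})}\le e^{a(R/\si)^2/2}$ for every $a\ge 0$. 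Now set $r\defeq C\pa{R/\si+\sqrt{\ln(k'/\ep)}}$ for a large enough absolute constant $C$; then $r\ge 1$, $r\ge R/\si$, $r$ dominates the noise-sensitivity parameter $O(R'/\si)$, and $r^2\ge\ln(1/\ep)$. Hence \Cref{l:poly-approx-mix} applies with this $r$ (and $\ga\defeq\ga_{\si^2}$) and produces a polynomial $\wt g$ with $\ve{\wt f_{\si^2}-\wt g}_{\ga'}^2\le\ep^2r^2\si^2$, with $\ve{\wt g}_{\ga_{\si^2}}^2\le\ve{\wt f_{\si^2}}_{\ga_{\si^2}}^2$, and of degree $O\pa{r^4\ln(1/\ep)^4\max\{r^2,\ln(1/\ep)\}}=O(r^6\ln(1/\ep)^4)$. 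Setting $g_i(y)\defeq\wh\mu_i+\wt g(y-\wh\mu_i)$ (a polynomial of the same degree, namely $O\pa{(R/\si+\sqrt{\ln(k'/\ep)})^6\ln(1/\ep)^4}$) and undoing the translation, $\ve{f_{S_i,\si^2}-g_i}_{P_{V_i}}^2=\ve{\wt f_{\si^2}-\wt g}_{\ga'}^2\le\ep^2r^2\si^2\lesssim\ep^2\pa{R^2+\si^2\ln(k'/\ep)}$; combining with the first term yields the stated error bound.

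It remains to verify $\ve{g_i}_{\ga_{\wh\mu_i,\si^2}}\le\Di$. Translating, $\ve{g_i}_{\ga_{\wh\mu_i,\si^2}}^2=\ve{\wh\mu_i+\wt g}_{\ga_{\si^2}}^2$, and adding the constant vector $\wh\mu_i$ only alters the constant Hermite coefficient. From the explicit relation $b_\mk=[1-(1-e^{-h|\mk|/\si^2})^m]a_\mk$ established inside the proof of \Cref{l:poly-approx-mix}, the Hermite coefficients $(b_\mk)$ of $\wt g$ satisfy $\vn{b_\mk}\le\vn{a_\mk}$ for the coefficients $(a_\mk)$ of $\wt f_{\si^2}$, with equality when $|\mk|=0$. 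Hence, by Parseval in $L^2(\ga_{\si^2})$,
\begin{align*}
\ve{g_i}_{\ga_{\wh\mu_i,\si^2}}^2
&= \ve{\wh\mu_i+\wt g}_{\ga_{\si^2}}^2
\le \ve{\wh\mu_i+\wt f_{\si^2}}_{\ga_{\si^2}}^2 \\
&= \E_{z\sim\ga_{\si^2}}\vn{f_{S_i,\si^2}(z+\wh\mu_i)}^2 \le \Di^2,
\end{align*}
the last inequality because $f_{S_i,\si^2}(\cdot)=\E[\mu\mid Y=\cdot]$ is a posterior mean of a prior whose support lies in the convex set $B_{\Di}$, so it is $B_{\Di}$-valued.

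The main obstacle I anticipate is the bookkeeping in the middle step: reconciling the origin-centered, variance-$1$ change-of-measure estimate \Cref{l:chi-p-V} with the variance-$\si^2$ hypotheses of \Cref{lem:csbound} and \Cref{l:poly-approx-mix} through a single translation-plus-rescaling, and checking that the radius $R'$ produced by \Cref{l:nearby-score} (after the harmless replacement $\ep\mapsto\ep^2$) is comparable up to absolute constants to the single quantity $R/\si+\sqrt{\ln(k'/\ep)}$ that simultaneously governs the noise-sensitivity parameter, the approximation error, and the degree. A minor but convenient point is that the Hermite truncation inside \Cref{l:poly-approx-mix} leaves the constant term untouched, which is exactly what lets $\ve{g_i}_{\ga_{\wh\mu_i,\si^2}}\le\Di$ hold with no triangle-inequality loss.
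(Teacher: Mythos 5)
Your proof is correct and follows essentially the same route as the paper's: the same triangle-inequality split through $f_{S_i,\si^2}$, with \Cref{l:nearby-score} (applied at accuracy $\ep^2$) controlling the leakage term, \Cref{lem:csbound} and \Cref{l:chi-p-V} verifying the hypotheses of \Cref{l:poly-approx-mix}, and the coefficient domination $\vn{b_\mk}\le\vn{a_\mk}$ giving the norm bound. Your handling of the recentering and of the constant Hermite coefficient in the $\ve{g_i}_{\ga_{\wh\mu_i,\si^2}}\le\Di$ step is in fact slightly more careful than the paper's.
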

\begin{proof}
    Let $R'=3R + 2\sqrt 2 \si \sqrt{\ln \pf{k'}{\ep}}$.
    First, note that from Lemma~\ref{l:nearby-score},
    \begin{align} \label{e:ld1}
    \sum_{i=1}^{k'} \ve{f_{\si^2}-f_{S_i,\si^2}}_{L^2(P_{V_i})}^2=
\ve{f_{\si^2}-f_{\textup{loc}, \si^2}}_{L^2(P)}^2 \lesssim \pa{R^2 + \si^2\ln \pf{k'}{\ep}}\ep^2,
    \end{align}
    where 
    $S_i$ is the expanded neighborhood in \Cref{l:nearby-score}. By \Cref{l:poly-approx-mix} applied to $P_{V_i}$, $f_{\textup{loc}, \si^2}$ is $(O(R'/\sigma),\si)$-Gaussian-noise-sensitive. By \Cref{l:chi-p-V}, for each $p>0$, $\ve{\dd{P_{V_i}}{\ga_{\wh \mu_i, \si^2}}}_{L^{1+p}(\ga_{\wh \mu_i, \si^2)}} \le e^{\fc{a(R/\si)^2}2}$. Hence $P_{V_i}$ satisfies the conditions of \Cref{l:poly-approx-mix}, and 
    there exists a polynomial $g_i$ of degree at most 
    $$O\pa{\pa{{\frac{R'}{\sigma}}}^4 \ln \prc\ep^4\max\bc{\pa{\frac{R'}{\sigma}}^2, \ln \prc \ep}} = O\pa{\pa{\frac{R'}{\sigma}}^6 \ln \prc\ep^4}$$ 
    such that 
    \begin{align}\label{e:ld2}
    \ve{f_{\textup{loc}, \si^2} - g_i}_{P_{V_i}}^2 = 
\ve{f^{S_i}-g_i}_{P_{V_i}}^2 \le \ep^2{R'}^2.
    \end{align}
    Combining~\eqref{e:ld1} and~\eqref{e:ld2} gives the result.
    Finally, from~\Cref{l:poly-approx-mix},
    \begin{align*}
        \ve{g_i}_{\gamma_{\wh \mu_i, \sigma^2}} \leq \ve{f_{\textup{loc}, \si^2}}_{P_{V_i}} \leq \Di.
    \end{align*}
\end{proof}

\subsection{Piecewise polynomial regression}
\label{ss:ppoly}
Thus far, we have demonstrated the existence of low-degree polynomials in the Voronoi cells corresponding to the warm starts, that provide a good approximation of the score function with respect to the mixture measure in each cell. Now we investigate how many samples we need, so that applying polynomial regression separately in each Voronoi cell enables us to recover the coefficients of these low-degree polynomials accurately.
For $(x_i,y_i)$ generated according to~\eqref{e:gen}, by~\eqref{e:score-P2} we wish to find the least-squares solution to 
\[
y_i + t s_t(y_i) \approx x_i.
\]
where by~\eqref{e:score-Q2} we parameterize
\[
f_{\si^2}(y) = y + \si^2 s_t(y) = \sum_{|\mk|\le d} b_\mk h_\mk(y - \wh \mu),
\]
where $\wh \mu$ is a warm start. 
Equivalently, we wish to find the least-squares solution to
\[
\sum_{|\mk|\le d}b_\mk h_\mk(y_i - \wh \mu) \approx y_i + \fc{\si^2}{t} (x_i-y_i) = 
\pa{1-\fc{\si^2}t}y_i + \fc{\si^2}t x_i.
\]
We can restrict to $(b_{\mk})_{|\mk|\le d}\in B_R$ (in $\R^{n\binom{[n]}{\le d}}$), so we solve
\[
\amin_{(b_{\mk})_{|\mk|\le d}\in B_R}
\wh L((b_{\mk})_{|\mk|\le d})\quad \text{ where }\quad 
\wh L((b_{\mk})_{|\mk|\le d})= 
\rc{\samp} \sumo i{\samp} \ab{\sum_{|\mk|\le d}b_\mk h_\mk(y_i-\wh \mu) - \pa{\pa{1-\fc{\si^2}t}y_i + \fc{\si^2}{t} x_i}}^2.
\]
Let $\ol x_i = \E[X|Y=y_i] = y_i + t\nb V_t(y_i)$ and $x_i= \ol x_i+\ze_i$, so that conditioned on $y_i$, $\ze_i$ is mean-zero noise. Let $\eta_i = \fc{\si^2}t\ze_i$. 
Let $z_i = \E[\mu|Y=y_i] = \pa{1-\fc{\si^2}t} y_i + \fc{\si^2}t \ol x_i$. Then, for each Voronoi cell $V_j$, we calculate the empirical loss for samples where $y_i$ falls into $V_j$,
\[
\wh L^{(j)}((b_{\mk})_{|\mk|\le d})= 
\rc {\samp} \sum_{i: y_i \in V_j} 
\ab{\sum_{|\mk|\le d} b_\mk h_\mk(y_i-\wh \mu_j) - (z_i + \eta_i)}^2,
\]
and the empirical risk minimizer (ERM) for the $j$th Voronoi cell as
\begin{align}
    (\wh b^{(j)}_{\mk})_{|\mk|\le d} \defeq \amin_{(b_{\mk})_{|\mk|\le d}\in B_\Rn}
\wh L^{(j)}((b_{\mk})_{|\mk|\le d})\quad.\label{eq:ermdef}
\end{align}
We then combine these ERM solutions for all Voronoi cells and define the following piecewise polynomial function to approximate the score:
\begin{align}
    \wh g(y) =\sumo j{k'} \one_{V_j}(y) \sum_{|\mk|\le d} \wh b_\mk^{(j)} h_\mk(y-\wh \mu_j),\label{eq:piecewisedef}
\end{align}
where $k'$ is the total number of Voronoi cells. To show that $\wh g$ is a decent approximation for the score function $f_{\sigma^2}$, we need to bound the generalization error of these ERM solutions~\eqref{eq:ermdef}. To do this, we derive moment bounds for the features and the noise. 
Let 
\[
L^{(j)}((b_{\mk})_{|\mk|\le d})= 
\E 
\ab{\sum_{|\mk|\le d} \one_{V_j}(Y)\pa{b_\mk h_\mk(Y-\wh \mu_j) - (Z + \eta)}}^2
\]
denote the population risk in $V_j$. We further define the shifted error of the empirical loss with respect to the population loss as
\begin{align*}
\err^{(j)}((b_{\mk})_{|\mk|\le d}) &\defeq
    (\wh L^{(j)} - L^{(j)}) ((b_{\mk})_{|\mk|\le d})
    + \rc {\samp}
    \sum_{i: y_i \in V_j}\an{\sum_{|\mk|\le d} z_i, \eta_i}
    - \rc {\samp} \sum_{i: y_i \in V_j} \pa{|\eta_i|^2 - \E |\eta_i|^2}\\
    & = 
    \rc {\samp}
    \sum_{i: y_i \in V_j}\an{\sum_{|\mk|\le d} b_\mk h_\mk(y_i-\wh \mu_j), \eta_i}\\
    &\quad + \rc {\samp}
    \sum_{i: y_i \in V_j}\pa{\ab{\sum_{|\mk|\le d} b_\mk h_\mk(y_i-\wh \mu_j) - z_i}^2 - \E \ab{\sum_{|\mk|\le d} b_\mk h_\mk(y_i-\wh \mu_j) - z_i}^2}.\numberthis\label{eq:ggap}
\end{align*}
The following lemma is standard and relates the generalization gap of ERM to $\err^{(j)}((b_{\mk})_{|\mk|\le d})$. As we see shortly, the terms in the definition~\eqref{eq:ggap} that do not depend on $(b_{\mk})_{|\mk|\le d}$ cancel out in the calculation of the generalization gap. 

\begin{lemma}[Uniform convergence $\Rightarrow$ Generalization gap]\label{lem:classicunif}
    The generalization gap for the ERM solution can be bounded as
    \begin{align*}
        L^{(j)}\pa{(\wh b^{(j)}_{\mk})_{|\mk|\le d}} - \min_{(b_{\mk})_{|\mk|\le d} \in B_\Rn} L^{(j)}\pa{(b_{\mk})_{|\mk|\le d}} \leq 2\max_{(b_{\mk})_{|\mk|\le d}\in B_\Rn}\ab{\err((b_{\mk})_{|\mk|\le d})}.
    \end{align*}
\end{lemma}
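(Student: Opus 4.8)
The plan is to run the standard three-term empirical-risk-minimization decomposition, exploiting the fact --- already built into the definition~\eqref{eq:ggap} of $\err^{(j)}$ --- that the pieces of $\wh L^{(j)}-L^{(j)}$ that do not depend on $(b_\mk)_{|\mk|\le d}$ have been folded in, so that $(\wh L^{(j)}-L^{(j)})((b_\mk)_{|\mk|\le d})$ and $\err^{(j)}((b_\mk)_{|\mk|\le d})$ differ only by an additive constant independent of $(b_\mk)_{|\mk|\le d}$.

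First I would fix a population minimizer $(b^\star_\mk)_{|\mk|\le d}\in\amin_{(b_\mk)_{|\mk|\le d}\in B_\Rn}L^{(j)}((b_\mk)_{|\mk|\le d})$ and split
\[
L^{(j)}((\wh b^{(j)}_\mk)) - L^{(j)}((b^\star_\mk))
= \underbrace{\left[L^{(j)}((\wh b^{(j)}_\mk)) - \wh L^{(j)}((\wh b^{(j)}_\mk))\right]}_{(\mathrm{I})}
+ \underbrace{\left[\wh L^{(j)}((\wh b^{(j)}_\mk)) - \wh L^{(j)}((b^\star_\mk))\right]}_{(\mathrm{II})}
+ \underbrace{\left[\wh L^{(j)}((b^\star_\mk)) - L^{(j)}((b^\star_\mk))\right]}_{(\mathrm{III})}.
\]
Since $(\wh b^{(j)}_\mk)$ is defined in~\eqref{eq:ermdef} as the minimizer of $\wh L^{(j)}$ over $B_\Rn$ and $(b^\star_\mk)\in B_\Rn$, term $(\mathrm{II})\le 0$.

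Next I would read off from~\eqref{eq:ggap} that $(\wh L^{(j)}-L^{(j)})((b_\mk)) = \err^{(j)}((b_\mk)) + C$, where
\[
C = -\rc\samp\sum_{i:\,y_i\in V_j}\an{\sum_{|\mk|\le d} z_i,\ \eta_i} + \rc\samp\sum_{i:\,y_i\in V_j}\left(|\eta_i|^2 - \E|\eta_i|^2\right)
\]
is independent of $(b_\mk)$ (immediate from the explicit form of the last two sums in~\eqref{eq:ggap}). Hence $(\mathrm{I}) = -\err^{(j)}((\wh b^{(j)}_\mk)) - C$ and $(\mathrm{III}) = \err^{(j)}((b^\star_\mk)) + C$, the constant cancels, and
\[
(\mathrm{I}) + (\mathrm{III}) = \err^{(j)}((b^\star_\mk)) - \err^{(j)}((\wh b^{(j)}_\mk)) \le 2\max_{(b_\mk)_{|\mk|\le d}\in B_\Rn}\left|\err^{(j)}((b_\mk))\right|.
\]
Adding $(\mathrm{II})\le 0$ gives the claimed bound.

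There is essentially no obstacle here: this is a bookkeeping lemma whose only content is the cancellation of the $(b_\mk)$-free terms, which is guaranteed by the way $\err^{(j)}$ was set up in~\eqref{eq:ggap}. The substantive estimate --- controlling $\max_{(b_\mk)_{|\mk|\le d}\in B_\Rn}|\err^{(j)}((b_\mk))|$ by a moment/uniform-convergence argument over the Hermite features $h_\mk(y_i-\wh\mu_j)$ and the denoising noise $\eta_i$ --- is precisely what \Cref{lem:gengap} supplies and is not needed for this lemma.
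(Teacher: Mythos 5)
Your proof is correct and follows essentially the same route as the paper's: the standard three-term decomposition, nonpositivity of the middle term by definition of the ERM, and the observation that the $(b_\mk)$-independent pieces of $\wh L^{(j)}-L^{(j)}$ cancel so that the remaining two terms equal a difference of $\err^{(j)}$ values. The only cosmetic difference is that you compare against the population minimizer while the paper compares against an arbitrary point of the ball; both yield the stated bound.
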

\begin{proof}
    To bound the generalization gap of the ERM solution $(\wh b_\mk^{(j)})_{|\mk|\le d_\ell, 1 \leq j \leq n}$ output by Algorithm~\ref{alg:mainalgo}, we compare its loss with the loss of an arbitrary point $(\wt b_\mk)_{|\mk|\le d}$ in $B_L(0)$: 
    \begin{align*}
    L^{(j)}\pa{ (\wh b^{(j)}_{\mk})_{|\mk|\le d}} - 
    L^{(j)}\pa{ (\wt b_{\mk})_{|\mk|\le d}}
&=
L^{(j)}\pa{ (\wh b^{(j)}_{\mk})_{|\mk|\le d}} - 
    \wh L^{(j)}\pa{ (\wh b^{(j)}_{\mk})_{|\mk|\le d}}
    \\
    &\quad + 
    \wh L^{(j)}\pa{ (\wh b^{(j)}_{\mk})_{|\mk|\le d}} - 
    \wh L^{(j)}\pa{ (\wt b_{\mk})_{|\mk|\le d}}
    \\
    &\quad +
    \wh L^{(j)}\pa{ (\wt b_{\mk})_{|\mk|\le d}} - 
     L^{(j)}\pa{ (\wt b_{\mk})_{|\mk|\le d}}.\numberthis\label{eq:expansionzero}
    \end{align*}
    Note that $\wh L^{(j)}\pa{ (\wh b^{(j)}_{\mk})_{|\mk|\le d}} - 
    \wh L^{(j)}\pa{ (\wt b_{\mk})_{|\mk|\le d}} \leq 0$ by the definition of $(\wh b^{(j)}_{\mk})_{|\mk|\le d}$ as a minimizer of $\wh L^{(j)}$.
    Furthermore, by~\eqref{eq:ggap},
    \begin{align*}
    &L^{(j)}\pa{ (\wh b^{(j)}_{\mk})_{|\mk|\le d}} - 
    \wh L^{(j)}\pa{ (\wh b^{(j)}_{\mk})_{|\mk|\le d}}
    +
    \wh L^{(j)}\pa{ (\wt b_{\mk})_{|\mk|\le d}} - 
     L\pa{ (\wt b_{\mk})_{|\mk|\le d}}\\
     &= 
     \err^{(j)}((\wt b_{\mk})_{|\mk|\le d}) - \err^{(j)}((\wh b^{(j)}_{\mk})_{|\mk|\le d})
     \leq 2\max_{(b_{\mk})_{|\mk|\le d} \in B_\Rn(0)}\ab{\err^{(j)}((b_{\mk})_{|\mk|\le d})}.\numberthis\label{eq:expansion}
    \end{align*}
\end{proof}

Based on Lemma~\ref{lem:classicunif} and Equation~\eqref{eq:ggap}, to bound the generalization gap it suffices to bound
\begin{multline}
    \max_{(b_{\mk})_{|\mk|\le d}\in B_\Rn}\ab{\err((b_{\mk})_{|\mk|\le d})}
    \leq 
    \max_{(b_{\mk})_{|\mk|\le d}\in B_\Rn} \rc {\samp}
    \sum_{i: y_i \in V_j}\an{\sum_{|\mk|\le d} b_\mk h_\mk(y_i-\wh \mu_j), \eta_i}\\
    + \max_{(b_{\mk})_{|\mk|\le d}\in B_\Rn} \rc {\samp}
    \sum_{i: y_i \in V_j}\pa{\ab{\sum_{|\mk|\le d} b_\mk h_\mk(y_i-\wh \mu_j) - z_i}^2 - \E \ab{\sum_{|\mk|\le d} b_\mk h_\mk(y_i-\wh \mu_j) - z_i}^2}.\label{eq:errexpansion}
\end{multline}

Let $h(y)\in \R^{\binom{[n]}{\le d}}$ be given by $h(y)_{\mk } = h_\mk (y)$. Let $B \in \R^{n\times \binom{[n]}{\le d}}$ with columns $b_\mk$. Then $\ab{(b_\mk)_{|\mk|\le d}}=\ve{B}_F$ so 
\begin{align}
\nonumber
\max_{(b_\mk)_{|\mk|\le d}\in B_\Rn} 
\sum_{i: y_i \in V_j} \sum_{|\mk|\le d} \an{b_\mk h_\mk(y_i-\wh \mu_j), \eta_i}
&= \max_{\ve{B}_F\le \Rn}
\Tr\pa{B\sum_{i: y_i \in V_j} h(y_i-\wh \mu_j)\eta_i^\top}\\
&= \Rn \ve{\sum_{i: y_i \in V_j} h(y_i-\wh \mu_j)\eta_i^\top}_F.\label{eq:firsttermbound}
\end{align}

We carefully bound the moments of each of the terms in Equation~\eqref{eq:errexpansion} in Lemmas~\ref{lem:momentboundone} and~\ref{lem:secondmomentbound}, 
leading to high-probability bounds given in Lemmas~\ref{lem:highprobfirstterm} and~\ref{lem:highprobsecondterm}, respectively. From this, we obtain the following high probability bound on the generalization gap. 
\begin{lem}[Generalization gap]\label{lem:gengap}
Suppose $Q_0$ is supported on $B_\Di(0)$ and $\cal C$ is a complete set of $R$-warm starts for $Q_0$ with $|\cal C'|=k'$ and $d\geq (R/\sigma)^2$. 
    Given $\samp$ pairs of samples $\pa{\mu_i, y_i}$ distributed as $y_i = \mu_i + \xi_i$, for $\mu_i\sim Q_0$ and $\xi_i \sim \cal N(0,\si^2I_n)$ with 
    \begin{align*}
        \samp = \Omega\pa{\frac{n\pa{k'\pa{1+\Rn+\Di}^2}^2 \pa{\ln(k/\delta)\sqrt{n/d + 1}}^{4(d+1)}}{\ep^4}},
    \end{align*}
    we have with probability at least $1-\delta$ that 
    \begin{align*}
        \max_{(b_\mk)_{|\mk|\le d}\in B_\Rn(0)}  \ab{\err^{(j)}((b_{\mk})_{|\mk|\le d})} \leq \frac{\ep^2}{k'}.
    \end{align*}
    Furthermore, the piecewise polynomial approximation of the score defined in Equation~\eqref{eq:piecewisedef} satisfies
    \begin{align*}
        \ve{\wh g - f}_{L^2(P)}^2 - 
    \ve{\wt g - f}_{L^2(P)}^2 \leq \ep^2.
    \end{align*}
    for any other piece-wise polynomial $\wt g$ on the Voronoi cells whose coefficients  satisfy $(\wt b_{\mk}^{(j)})_{|\mk|\le d}\in B_\Rn$ for each cell $j$.
\end{lem}
We defer the proof to \Cref{s:gengap}.

\subsubsection{Moment and high-probability bounds}

For the following lemmas, we assume the following: $Q_0$ is supported on $B_\Di(0)$, $\cal C$ is a complete set of $R$-warm starts for $Q_0$, and $V_1,\ldots, V_{k'}$ is the corresponding Voronoi partition. We are given iid samples $(\mu_i, y_i), i=1,\dots,\samp$, where $y_i = \mu_i + \xi_i$, $\mu_i\sim Q_0$, and $\xi_i \sim \cal N(0,\si^2I_n)$. Recall that $h_\mk$ are the $\si$-rescaled Hermite polynomials and $h(y) = (h_\mk(y))_{|\mk|\le d}$. We moreover fix the Voronoi cell $V_j$ corresponding to $\wh \mu_j$, and (to simplify notation) suppose that $\wh \mu_j=0$.

\begin{lem}[Moment bound for first term]\label{lem:momentboundone}
We have the following uniform convergence bound for $p$ even: 
    \[
\ve{\ve{\sumo i{\samp} \one_{V_j}(y_i) h(y_i) \eta_i^\top}_F}_{2p} = 
O\pa{\Di n^{1/2}\pa{\frac{n}{d} + 1}^{d/2} \sqrt p\pa{e(p-1)}^d\sqrt {\samp}}.
    \]
\end{lem}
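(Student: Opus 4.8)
The plan is to exploit that the summands $Z_i\defeq \one_{V_j}(y_i)\,h(y_i)\,\eta_i^\top$ (recall $\wh\mu_j=0$) are i.i.d.\ and \emph{conditionally mean-zero given $y_i$}: the regression noise obeys $\E[\eta_i\mid y_i]=0$, so $\E[Z_i\mid y_i]=0$ and $\sum_i Z_i$ is a sum of $\samp$ i.i.d.\ mean-zero random matrices. For the $2p$-th moment of $\ve{\sum_i Z_i}_F$ I would use the even-moment expansion $\ve{\sum_i Z_i}_F^{2p}=\big(\sum_{i,i'}\an{Z_i,Z_{i'}}_F\big)^p$, expanded over index tuples $(i_1,i_1',\dots,i_p,i_p')$. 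After the conditional expectation over the $\eta$'s, every tuple in which some index occurs exactly once vanishes, so only tuples in which each distinct index has multiplicity $\ge2$ survive; these have at most $p$ distinct indices (so $\le\samp^p$ index sets) and there are $\le(2p-1)!!\le(2p)^p$ ways to form them. Each surviving tuple factors, over its distinct indices $i$, into single-index expectations $\E\big[\one_{V_j}(y_i)\prod_\ell h_{\mk_\ell}(y_i)\cdot(\text{a contraction of the }\eta_i\text{'s})\big]$; the perfect-pairing tuples dominate (contributing $\E\ve{Z_1}_F^2$ per index), the collision tuples are lower order in $\samp$, and both are controlled by the per-summand estimates below. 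This book-keeping produces the $\sqrt p\,\sqrt{\samp}$ scaling (equivalently one may invoke a Rosenthal/Marcinkiewicz--Zygmund inequality for sums of i.i.d.\ mean-zero random matrices).

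It then remains to bound moments of $\ve{Z_1}_F^{2q}=\one_{V_j}(y_1)\,\ve{h(y_1)}^{2q}\,\vn{\eta_1}^{2q}$ for $q\le p$. Conditioning on $y_1$ factors out $\E[\vn{\eta_1}^{2q}\mid y_1]$, which I would bound by $O(\Di^2 n)^q$ using that $Q_0$ is supported on $B_\Di(0)$ together with sub-Gaussian concentration of the conditional law of $\eta_1$ given $y_1$ (an $n$-dimensional mean-zero vector of scale $O(\Di)$); this is the origin of the $\Di$ and $n^{1/2}$ factors. The remaining expectation $\E_P\big[\one_{V_j}(y_1)\prod_\ell h_{\mk_\ell}(y_1)\big]$ is where the real difficulty lies: $P=Q_0\convolve\cal N(0,\si^2 I_n)$ is a mixture and is generically far from the single Gaussian $\ga_{\si^2}$ under which the $h_\mk$ are orthonormal, so a naive change of measure would cost a factor exponential in $d$ times $\ep$-dependent quantities --- exactly the obstruction noted in the proof overview. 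The purpose of the indicator $\one_{V_j}$ is to restrict to the Voronoi cell of the warm start $\wh\mu_j=0$, which lets me invoke \Cref{l:chi-p-V}: $\ve{\tfrac{dP_{V_j}}{d\ga_{\si^2}}}_{L^{1+a}(\ga_{\si^2})}\le e^{a(R/\si)^2/2}$, so, since $d\ge(R/\si)^2$, transferring to $\ga_{\si^2}$ by H\"older costs only $e^{O(d)}$, which gets absorbed into the $(\cdot)^d$ factors. Under $\ga_{\si^2}$ the relevant quantity is $\ve{h(y)}^2=\sum_{|\mk|\le d}h_\mk(y)^2$, and Gaussian hypercontractivity gives $\ve{h_\mk}_{L^{2p}(\ga_{\si^2})}\le(2p-1)^{|\mk|/2}\le(e(p-1))^{|\mk|}$ for $|\mk|\le d$ and $p$ even; together with $\binom{n+d}{d}=O\big((e(1+n/d))^d\big)$ multi-indices this yields $\E\ve{Z_1}_F^{2q}\lesssim\big(\Di^2 n\,(1+n/d)^{d}(e(p-1))^{2d}\big)^q$ up to $e^{O(qd)}$.

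Combining: the index book-keeping contributes $\sqrt p\,\sqrt{\samp}$, taking the square root of the $q=1$ per-summand bound contributes $\Di\,n^{1/2}(1+n/d)^{d/2}(e(p-1))^{d}$, the collision terms obey the same bound once $\samp$ is as large as assumed in \Cref{lem:gengap}, and multiplying yields the claimed $O\big(\Di\,n^{1/2}(n/d+1)^{d/2}\sqrt p\,(e(p-1))^d\sqrt{\samp}\big)$. I expect the main obstacle to be precisely this change-of-measure/restriction step: one must make the hypercontractive Hermite estimates, the $\chi^2$-type bound of \Cref{l:chi-p-V}, and the posterior noise moments fit together at mutually compatible H\"older exponents, while keeping every auxiliary factor either of the advertised form or only $e^{O(d)}$, so that nothing spoils the quasi-polynomial final bound.
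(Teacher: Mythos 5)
Your proposal is correct in substance and reaches the stated bound, but it takes a genuinely different route to the concentration step than the paper. The paper first \emph{symmetrizes}: it replaces $\eta_i=\mu_i-\E[\mu_i\mid Y=y_i]$ by $\mu_i^{(2)}-\mu_i$ (an independent posterior copy minus the original), conditions on the $y_i$'s, and observes that $\ve{\sum_i h(y_i)(\mu_i^{(2)}-\mu_i)^\top}_F^2$ is then a quadratic form in bounded independent vectors; it assembles sub-exponential parameters coordinate-by-coordinate and multi-index-by-multi-index (paying the factor $n\binom{n+d}{d}$ for the non-independent sum over $\mk$, via \Cref{p:sum-subexp}), and extracts the $2p$-th moment from the sub-exponential tail, getting $\bigl(cp\,\Di^2 n\binom{n+d}{d}\sum_i\vn{h(y_i)}^2\bigr)^p$ conditionally on the $y$'s. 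You instead use the conditional centering $\E[\eta_i\mid y_i]=0$ directly and run a combinatorial even-moment expansion (equivalently Rosenthal/Marcinkiewicz--Zygmund for i.i.d.\ mean-zero Hilbert--Schmidt summands), reducing everything to per-summand moments $\E\ve{Z_1}_F^{2q}$. Both routes then converge: the $y$-expectation of powers of $\vn{h}^2$ is handled, exactly as in the paper, by restricting to the Voronoi cell, changing measure via \Cref{l:chi-p-V}/\Cref{l:com-poly} at cost $e^{O(\sqrt d\,R/\si)}=e^{O(d)}$, and applying hypercontractivity (\Cref{l:hc-poly}) to the degree-$\le 2d$ polynomials, which is where $(e(p-1))^d$ and $(n/d+1)^{d/2}$ come from. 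Two small remarks: (i) the noise is bounded deterministically, $\vn{\eta_1}\le 2\Di$, since both $\mu_1$ and its posterior mean lie in $B_\Di$ --- no sub-Gaussian argument is needed there, and in the paper the $n^{1/2}$ factor actually originates from the dependent-sum penalty over coordinates and multi-indices rather than from the noise moments; your overestimate $\E[\vn{\eta_1}^{2q}\mid y_1]\le O(\Di^2 n)^q$ is harmless but mislocates that factor. (ii) In your route the non-pairing (collision) terms are dominated by the perfect pairings only when $\samp$ exceeds a polynomial in $p$ times $(ep)^{O(d)}$; the paper's MGF-based bound yields the $\sqrt{p\samp}$ scaling uniformly in $\samp$. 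Since the lemma is applied with $\samp$ astronomically larger than this threshold, the caveat is immaterial, but if you want the lemma as an unconditional moment bound you should either keep the Rosenthal second term explicitly or adopt the paper's sub-exponential argument.
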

We will use properties of sub-exponential random variables; see \Cref{sec:subexponential} for background.
\begin{proof}
    In the following, the expectations without index refer to the mixture measure. 
    Let $\mu_i^{(2)}$ be independent and identically distributed to $\mu_i | Y= y_i$. Without loss of generality suppose the samples in which $y_i$ fall into $V_j$ are exactly $y_1,\dots,y_{\samp_j}$. Using the symmetrization technique with Jensen's inequality:
    \begin{align}
\E \ve{\sumo i{\samp_j} h(y_i) \eta_i^\top}_F^{2p}
&= \E \ve{\sumo i{\samp_j} h(y_i) (\mu_i - \E[\mu_i| Y=y_i])^\top}_F^{2p}\\
&= \E \ve{\sumo i{\samp_j} h(y_i) (\mu^{(2)}_i - \E[\mu_i| Y=y_i])^\top}_F^{2p}\\
&= \E \ve{\sumo i{\samp_j} h(y_i) (\E_{\mu_i}[(\mu^{(2)}_i - \mu_i)| Y=y_i])^\top}_F^{2p}\\
&\leq \E \ve{\sumo i{\samp_j} h(y_i) (\mu^{(2)}_i - \mu_i)^\top}_F^{2p}.
    \end{align}
Note that conditioned on any fixed value of $y_i$, the variable  $\mu^{(2)}_i - \mu_i$ is the difference between two iid random variables, each of whose norm is bounded by $\Di$. Therefore, given a fixed $\{y_i\}_{i=1}^{\samp_j}$, the variable $\ve{\sumo i{\samp_j} h(y_i) (\mu^{(2)}_i - \mu_i)^\top}_F^2$ is a quadratic form of a sub-Gaussian vector. From the properties of sub-Gaussian variables, for every $1\leq k\leq n$ and $|\mk| \leq d$,
\begin{align*}
   \pa{\sumo i{\samp_j} h_{\mk}(y_i) (\mu^{(2)}_i - \mu_i)_k}^2
\end{align*}
is sub-exponential with parameter $\pa{O\pa{\pa{\sumo i{\samp_j} h_\mk(y_i)^2 \Di^2}^2}, O\pa{\sumo i{\samp_j} h_\mk(y_i)^2 \Di^2}}$. Therefore, noting we do not have independence when we enumerate the variables over $\mk$, by \Cref{p:sum-subexp}, $\sum_{\mk}(\sumo i{\samp_j} h_{\mk}(y_i) (\mu^{(2)}_i - \mu_i)_k)^2$ is sub-exponential with parameters 
\begin{align*}
    \pa{\binom{n+d}{d}O\pa{\sum_{|\mk| \leq d}\pa{\sumo i{\samp_j} h_\mk(y_i)^2 \Di^2}^2}, \binom{n+d}{d}O\pa{\max_{|\mk| \leq d}\sumo i{\samp_j} h_\mk(y_i)^2 \Di^2}}.
\end{align*}
Finally summing over $1\leq k\leq n$, from the independence of the coordinates of $(\mu^{(2)} - \mu)$, by \Cref{p:sum-subexp-ind}, the variable
\begin{align*}
    X \defeq \ve{\sumo i{\samp_j} h(y_i) (\mu^{(2)}_i - \mu_i)^\top}_F^2 
    = \sumo kn \sum_{|\mk| \leq d} \pa{\sumo i{N_j} h_{\mk}(y_i) ((\mu^{(2)}_i - \mu_i)^\top)_k}^2
\end{align*}
is sub-exponential with parameters $$(v^2,\alpha) \defeq \pa{n\binom{n+d}{d}O\pa{\sum_{|\mk| \leq d}\pa{\sumo i{\samp_j} h_\mk(y_i)^2 \Di^2}^2}, n\binom{n+d}{d}O\pa{\max_{|\mk| \leq d}\sumo i{\samp_j} h_\mk(y_i)^2 \Di^2}}$$ and
 has expectation
\begin{align*}
    \E \ve{\sumo i{\samp_j} h(y_i) (\mu^{(2)}_i - \mu_i)^\top}_F^2 = \sumo i{N_j} \vn{h(y_i)}^2 \E\vn{\mu_i - \mu^{(2)}_i}^2
    = 
    4\Di^2 \sumo i{\samp_j}\vn{h(y_i)}^2.
\end{align*}
But this implies 
\begin{align*}
    \mathbb P\Big(X - \E[X]\geq t\Big) \leq e^{-t/\kappa}, 
\end{align*}
for parameter
\begin{align*}
    \kappa = O\pa{\max\pa{v,\alpha}} = O\pa{\Di^2 n\binom{n+d}{d}\sumo i{\samp_j} \vn{h(y_i)}^2}.
\end{align*}
Thus, using the second property for sub-exponential variables in~\Cref{fact:subexp}, for some constant $c_1$ we have
\begin{align*}
    \E \ba{\ve{\sumo i{\samp_j} h(y_i) (\mu^{(2)}_i - \mu_i)^\top}_F^{2p} 
    \ \Big| \{y_i\}_{i=1}^{\samp_j}}
    \leq  O(p\kappa + \E X)^p = \pa{c_1 p\Di^2 n\binom{n+d}{d}\sumo i{\samp_j} \vn{h(y_i)}^2}^p.\numberthis\label{eq:combined}
\end{align*}
Then by Lemma~\ref{l:com-poly},
\begin{align*}
    \E \ve{\sumo i{\samp_j} h(y_i) (\mu^{(2)}_i - \mu_i)^\top}_F^{2p}
    &= \E\ba{\E\ba{{\ve{\sumo i{\samp_j} h(y_i) (\mu^{(2)}_i - \mu_i)^\top}_F^{2p}} \Big| \{y_i\}_{i=1}^{\samp_j}}}\\
    &\leq \E \pa{ c_1 p\Di^2 n\binom{n+d}{d}\sumo i{\samp_j} \vn{h(y_i)}^2}^p
\end{align*}
Therefore, given $\gamma = \gamma_{\wh \mu_j, \sigma^2}$ is the Gaussian around the warm start point $\wh \mu_j$, and \Cref{l:com-poly} applied to $|h|^p$ (which is a polynomial as $p$ is even) and \Cref{l:hc-poly},
\begin{align*}
    \ve{\ve{\sumo i{\samp} \one_{V_j}(y_i) h(y_i) \eta_i^\top}_F}_{2p}
    &\leq \sqrt c_1 \sqrt p\Di \pa{n\binom{n+d}{d}}^{1/2}\sqrt{\ve{\sumo i{\samp_j} \vn{h(y_i)}^2}_p}\\
    &\leq \sqrt c_1 \sqrt p\Di \pa{n\binom{n+d}{d}}^{1/2}\sqrt{\sumo i{\samp_j} \ve{\vn{h(y_i)}^2}_p}\\
    &\leq \sqrt c_1 \sqrt p\Di \pa{n\binom{n+d}{d}}^{1/2}\sqrt{\sumo i{\samp_j} e^{2\sqrt{d/p} (R/\sigma)}\ve{\vn{h(y_i)}^2}_{L^p(\gamma)}}\\
    &\leq \sqrt c_1 \sqrt p\Di \pa{n\binom{n+d}{d}}^{1/2}\sqrt{\sumo i{\samp_j} e^{2\sqrt{d/p} (R/\sigma)}\pa{p-1}^{2d}\ve{\vn{h(y_i)}^2}_{L^2(\gamma)}}\\
    &\leq  c_2\sqrt p\Di n^{1/2}\pa{\frac{n}{d} + 1}^{d/2} \pa{e(p-1)}^d\sqrt {\samp},
\end{align*}
for some universal constant $c_2$, 
and we used the upper bound $\binom{n}k \le \rc e\pf{en}{k}^k$. 

\end{proof} 

\begin{lem}[High probability bound for first term]\label{lem:highprobfirstterm}
We have
    \begin{align*}
        \mathbb P\pa{\ve{\sumo i\samp \one_{V_j}(y_i) h(y_i) \eta_i^\top}_F \geq t} \leq 
        \exp\pa{-\Om\pa{\pa{\frac{n}{d} + 1}^{-1/2}\pa{\frac{t}{\Di \sqrt{\samp}n^{1/2} }}^{1/(d+1)}}}.
    \end{align*}
\end{lem}
\begin{proof}
    By Lemma~\ref{lem:momentboundone}
    and Markov's inequality,
    \begin{align*}
        \mathbb P\pa{\ve{\sumo i\samp \one_{V_j}(y_i)h(y_i) \eta_i^\top}_F \geq t}
        &= \mathbb P\pa{\ve{\sumo i\samp \one_{V_j}(y_i)h(y_i) \eta_i^\top}_F^{2p} \geq t^{2p}}\\
        &\leq \frac{\E\ve{\sumo i\samp \one_{V_j}(y_i) h(y_i) \eta_i^\top}_F^{2p}}{t^{2p}}\\
        &\leq \pa{\frac{\Di n^{1/2}\pa{\frac{n}{d} + 1}^{d/2} \pa{e(p-1)}^{d+1}\sqrt \samp}{t}}^{2p}.
    \end{align*}
    Picking 
    \begin{align*}
        p-1 = \Te\pa{\frac{\pa{t/(\Di\sqrt \samp n^{1/2})}^{1/(d+1)}}{\pa{\frac{n}{d} + 1}^{1/2}}}
    \end{align*}
    with appropriate constant,
    we have
    \begin{align*}
        \mathbb P\pa{\ve{\sumo i\samp \one_{V_j}(y_i)h(y_i) \eta_i^\top}_F \geq t} \leq \exp\pa{-\Om\pa{\pa{\frac{n}{d} + 1}^{-1/2}\pa{\frac{t}{\Di \sqrt{\samp}n^{1/2} }}^{1/(d+1)}}}.
    \end{align*}
\end{proof}

\begin{lem}[Moment bound for second term]\label{lem:secondmomentbound}
    Assume $d\ge (R/\si)^2$. 
    We have the following uniform convergence bound:
    \begin{align*}
    &\ve{\max_{(b_\mk)_{|\mk|\le d}\in B_{\Rn}}\sumo i\samp \one_{V_j}(y_i)\pa{\ab{\sum_{|\mk|\le d} b_\mk h_\mk(y_i) - z_i}^2 - \E \ab{\sum_{|\mk|\le d} b_\mk h_\mk(y_i) - z_i}^2}}_{2p} \\
    &= 
    O\pa{\Rn(\Rn+\Di)n^{1/2}e^d\pa{\frac{n}{d} + 1}^{d} p \pa{4(4p-1)}^{2d} \sqrt \samp + \sqrt{p\samp}\Di^2
    }.
    \end{align*}
\end{lem}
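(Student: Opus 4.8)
The plan is to expand $\ab{\sum_{|\mk|\le d}b_\mk h_\mk(y_i)-z_i}^2$ into a part quadratic in $b$, a part linear in $b$, and a part not involving $b$; to take the supremum over $b\in B_{\Rn}$ of the centered empirical sum of each part separately; and to reduce each such supremum to the Frobenius norm of a centered empirical sum of i.i.d.\ matrices, which is then controlled by a Rosenthal-type moment inequality together with Hermite hypercontractivity and the change-of-measure bound of \Cref{l:chi-p-V}. Concretely, write $B\in\R^{n\times\binom{[n]}{\le d}}$ for the matrix with columns $b_\mk$ and $h(y)=(h_\mk(y))_{|\mk|\le d}$, so that $\sum_{|\mk|\le d}b_\mk h_\mk(y_i)=Bh(y_i)$ and
\begin{align*}
\ab{Bh(y_i)-z_i}^2-\E\ab{Bh(y_i)-z_i}^2
&= \Tr\!\big(B^\top B\,(h(y_i)h(y_i)^\top-\E[\cdots])\big)\\
&\quad -2\,\Tr\!\big(B\,(h(y_i)z_i^\top-\E[\cdots])\big)
+\big(\vn{z_i}^2-\E\vn{z_i}^2\big),
\end{align*}
where $\E[\cdots]$ denotes the expectation of the preceding random matrix. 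Using $\Tr(B^\top B M)\le \ve{B^\top B}_F\ve{M}_F\le\ve{B}_F^2\ve{M}_F$ and $\max_{\ve{B}_F\le\Rn}\Tr(BW)=\Rn\ve{W}_F$, the supremum over $B_{\Rn}$ of $\sum_{i:y_i\in V_j}$ of the above is at most
\[
\Rn^2\,\Big\|\textstyle\sum_{i:y_i\in V_j}(h(y_i)h(y_i)^\top-\E[\cdots])\Big\|_F
+2\Rn\,\Big\|\textstyle\sum_{i:y_i\in V_j}(h(y_i)z_i^\top-\E[\cdots])\Big\|_F
+\Big|\textstyle\sum_{i:y_i\in V_j}(\vn{z_i}^2-\E\vn{z_i}^2)\Big|.
\]

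For the third, $b$-independent term: since $z_i=\E[\mu\mid Y=y_i]$ is a posterior average of points of $B_\Di(0)$, we have $\vn{z_i}\le\Di$, so the summands are i.i.d.\ centered scalars bounded by $\Di^2$ and their $L^{2p}$ norm is $O(\sqrt{p\samp}\,\Di^2)$; this is the $\sqrt{p\samp}\,\Di^2$ contribution. For the two matrix terms I would apply a Rosenthal/Marcinkiewicz--Zygmund moment inequality in the Hilbert space of matrices under $\ve{\cdot}_F$: the leading contribution is $\sqrt{p\samp}$ times the $L^2$ norm of a single summand, with lower-order terms of the form $p\cdot\samp^{1/(2p)}$ times higher moments of a single summand. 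Since $\ve{h(y_i)h(y_i)^\top-\E[\cdots]}_F\lesssim \vn{h(y_i)}^2$ and $\ve{h(y_i)z_i^\top-\E[\cdots]}_F\lesssim \Di\vn{h(y_i)}$ (again using $\vn{z_i}\le\Di$), all of this reduces to bounding $L^q(P_{V_j})$ norms of $\vn{h(y_i)}^2$ for $q\asymp p$, producing the $\Rn^2(\cdots)$ and $\Rn\Di(\cdots)$ pieces which combine into the factor $\Rn(\Rn+\Di)$.

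It then remains to bound these polynomial moments. On the Gaussian $\gamma\defeq\gamma_{\wh\mu_j,\sigma^2}$ we use $\E_\gamma h_\mk^2=1$ and Gaussian hypercontractivity for the degree-$d$ Hermite polynomials $h_\mk$, exactly as in the proof of \Cref{lem:momentboundone} via \Cref{l:com-poly} and \Cref{l:hc-poly}; this yields the counting factor $\binom{n+d}{d}\le\frac1e e^d(n/d+1)^d$ (first power, since $\vn{h}^2$ is a sum over $\mk$) and factors of the form $(4(4p-1))^{\Theta(d)}$ from passing to the $q$-th moment. To transfer from $\gamma$ to $P_{V_j}$ I would use H\"older with the bound $\ve{\tfrac{dP_{V_j}}{d\gamma}}_{L^{1+a}(\gamma)}\le e^{a(R/\sigma)^2/2}$ from \Cref{l:chi-p-V}; taking $a\asymp\max\{1,(R/\sigma)^2/d\}$ and invoking the hypothesis $d\ge(R/\sigma)^2$ makes the price $e^{(R/\sigma)^2/(2a)}$ only $e^{O(d)}$, absorbed into the $e^d$ factor. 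Assembling the three pieces by the triangle inequality in $L^{2p}$ gives the stated bound.

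I expect the main obstacle to be carrying out the supremum over $b\in B_{\Rn}$ without incurring a factor exponential in the ambient dimension $n\binom{[n]}{\le d}$: one cannot afford an $\epsilon$-net of that ball, which is precisely why the $b$-dependence must be collapsed algebraically into Frobenius norms of fixed random matrices \emph{before} any concentration step is invoked. The secondary difficulty is bookkeeping—threading the hypercontractive constants through \Cref{l:chi-p-V} and through the Rosenthal inequality so that $p$ and $d$ enter with exactly the stated exponents—where the assumption $d\ge(R/\sigma)^2$ is what keeps the change-of-measure cost from dominating.
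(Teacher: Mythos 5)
Your proposal is correct and follows essentially the same route as the paper's proof: split the loss difference into its quadratic, linear, and constant parts in $B$, collapse the supremum over $\ve{B}_F\le\Rn$ into Frobenius norms of fixed random matrices, bound the $B$-free term as a sum of bounded centered scalars, and control the two matrix terms via Rosenthal's inequality combined with Gaussian hypercontractivity (\Cref{l:hc-poly}, \Cref{l:com-poly}) and the change of measure from \Cref{l:chi-p-V} under $d\ge(R/\si)^2$. The only cosmetic differences are that the paper first symmetrizes with an independent copy $(y_i^{(2)},z_i^{(2)})$ rather than centering by the expectation, and applies scalar Rosenthal entrywise before summing over the $\binom{n+d}{d}$ (resp.\ $\binom{n+d}{d}^2$) coordinates rather than invoking a Hilbert-space-valued Rosenthal inequality; neither changes the substance.
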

\begin{proof}
    Without loss of generality, suppose the first $\samp_j$ samples $y_1,\dots, y_{\samp_j}$ are in $V_j$. Again using symmetrization for the $y_i$'s,
    \begin{align*}
        &\ve{\max_{(b_\mk)_{|\mk|\le d}\in B_{\Rn}}\sumo i{\samp_j}\pa{\ab{\sum_{|\mk|\le d} b_\mk h_\mk(y_i) - z_i}^2 - \E \ab{\sum_{|\mk|\le d} b_\mk h_\mk(y_i) - z_i}^2}}_{2p}.
        \\
        &\leq 
        \ve{\max_{(b_\mk)_{|\mk|\le d}\in B_{\Rn}}\sumo i{\samp_j}\pa{\ab{\sum_{|\mk|\le d} b_\mk h_\mk(y^{(2)}_i) - z^{(2)}_i}^2 - \E \ab{\sum_{|\mk|\le d} b_\mk h_\mk(y_i) - z_i}^2}}_{2p}
        \\
        &\leq 
        \ve{\max_{\ve{B}_F \leq \Rn}\tr\pa{B \pa{\sumo i{\samp_j} h(y_i) h(y_i)^\top - \sumo i{\samp_j} h(y^{(2)}_i) h(y^{(2)}_i)^\top} B^\top} }_{2p}\\
        &\quad+  \ve{\max_{\ve{B}_F \leq \Rn}\Big\{\sumo i{\samp_j} z_i^\top B h(y_i) - \sumo i{\samp_j} {z_i^{(2)}}^\top B h(y^{(2)}_i)\Big\} }_{2p}+\ve{\sumo i{\samp_j} \pa{\ab{z^{(2)}_i}^2 - \vn{z_i}^2}}_{2p}\\
        &\leq 
         \ve{\Rn^2 \ve{\sumo i{\samp_j} h(y_i) h(y_i)^\top - \sumo i{\samp_j} h(y^{(2)}_i) h(y^{(2)}_i)^\top}_{\textup{op}} }_{2p}\\
        &\quad+  \ve{\max_{\ve{B}_F \leq \Rn} \an{B,\sumo i{\samp_j} z_i h(y_i)^\top - \sumo i{\samp_j} z^{(2)}_i h(y^{(2)}_i)^\top}}_{2p}+ \ve{\sumo i{\samp_j} \pa{\ab{z^{(2)}_i}^2 - \vn{z_i}^2}}_{2p}\\
        &\leq 
        \Rn^2 \ve{ \ve{\sumo i{\samp_j} h(y_i) h(y_i)^\top - \sumo i{\samp_j} h(y^{(2)}_i) h(y^{(2)}_i)^\top}_{F} }_{2p}\\
        &\quad+ \Rn \ve{\ve{\sumo i{\samp_j} z_i h(y_i)^\top - \sumo i{\samp_j} z^{(2)}_i h(y^{(2)}_i)^\top}_F }_{2p} + \ve{\sumo i{\samp_j} \pa{\ab{z^{(2)}_i}^2 - \vn{z_i}^2}}_{2p}.
    \end{align*}

For the third term, note that for each $1 \leq i \leq \samp$:
\begin{align*}
    \Big|\ab{z^{(2)}_i}^2 - \vn{z_i}^2\Big| \leq 2\Di^2,
\end{align*} 
and this has mean zero. Hence the third term is sub-Gaussian with parameter $O(\Di^2\sqrt \samp)$. Therefore
\begin{align}
    \ve{\sumo i{\samp_j} \pa{\ab{z^{(2)}_i}^2 - \vn{z_i}^2} }_{2p} \le O\pa{\sqrt{p\samp} \Di^2}.
    \label{eq:firstsubterm}
\end{align}
For the second term, note that for all $1\leq i\leq {\samp_j}$, $1\leq k\leq n$, and $|\mk| \leq \binom{n}{d}$, given $\gamma = \gamma_{\wh \mu_j, \sigma^2}$ is the Gaussian around the warm start point $\wh \mu_j$: by Cauchy-Schwarz, \Cref{l:com-poly} applied to $h_\mk^p$, \Cref{l:hc-poly}, and the assumption that $d\ge (R/\si)^2$, 
\begin{align*}
    \ve{{z_i}_k h_{\mk}(y_i) - {z_i}_k^{(2)} h_{\mk}(y^{(2)}_i)}_p
    &\leq 2\ve{{z_i}_k h_{\mk}(y_i)}_p\\
    &\leq 2\ve{{z_i}_k}_{2p}\ve{h_{\mk}(y_i)}_{2p}\\
    &\leq 2\Di \cdot 2^{\sqrt{d/(2p)} (R/\sigma)}\ve{h_{\mk}(y_i)}_{L^{2p}(\gamma)}\\
    &\leq 2\Di \cdot 2^{\sqrt{d/(2p)} (R/\sigma)}(2p-1)^d \ve{h_{\mk}(y_i)}_{L^2(\gamma)} \\
    &\le 2\Di \cdot \pa{2(2p-1)}^d.
\end{align*}
Therefore, from the Rosenthal inequality~\cite{ibragimov2001best} with constant given by Pinelis~\cite{nagaev1978some}, there is a universal constant $c$ such that
\begin{align*}
    &\E \ab{\sumo i{\samp_j} {z_i}_k h_{\mk}(y_i) - {z_i}_k^{(2)} h_{\mk}(y^{(2)}_i)}^p\\ 
    &\leq (cp)^p \max \bc{\sumo i{\samp_j} \E \ab{{z_i}_k h_{\mk}(y_i) - {z_i}_k^{(2)} h_{\mk}(y^{(2)}_i)}^p, \pa{\sumo i{\samp_j} \E \ab{{z_i}_k h_{\mk}(y_i) - {z_i}_k^{(2)} h_{\mk}(y^{(2)}_i)}^2}^{p/2}}\\
    &\leq (cp)^p \max \bc{{\samp_j} 2^p\Di^p(2(2p-1))^{pd}, \pa{4{\samp_j} \Di^2 6^{2d}}^{p/2}}\\
    &\leq\pa{2cp\pa{6(2p-1)}^d \sqrt \samp\Di}^p,
\end{align*}
which according to the norm property of $p$th norm $\ve{\cdot}_p$ implies
\begin{align*}
    \ve{\sum_{\ab{\mk} \leq d}\sum_{k=1}^n\pa{\sumo i{\samp_j} {z_i}_k h_{\mk}(y_i) - {z_i}_k^{(2)} h_{\mk}^{(2)}}^{2}}_p
    &\leq \sum_{\ab{\mk} \leq d}\sum_{k=1}^n
    \ve{\pa{\sumo i{\samp_j} {z_i}_k h_{\mk}(y_i) - {z_i}_k^{(2)} h_{\mk}^{(2)}}^2}_p
    \\
    &\leq n\binom{n + d}{d}\pa{4cp\pa{6(4p-1)}^d \sqrt \samp\Di}^{2}.
\end{align*}
Therefore, using $\binom{n}k \le \rc e\pf{en}{k}^k$,
\begin{align}
    \ve{\ve{\sumo i{\samp_j} z_i h(y_i)^\top - \sumo i{\samp_j} z^{(2)}_i h(y^{(2)}_i)^\top}_F }_{2p}\leq 
     n^{1/2} \pa{e\pa{\frac{n}{d} + 1}}^{d/2} 4cp \pa{6(4p-1)}^{d} \sqrt \samp \Di.\label{eq:secondsubterm}
\end{align}

For the first term, for all $|\mk_1|, |\mk_2| \leq d$, again by Cauchy-Schwarz, \Cref{l:com-poly} applied to $h_\mk^p$, \Cref{l:hc-poly}, and the assumption that $d\ge (R/\si)^2$, 
\begin{align*}
    \ve{h_{\mk_1}(y_i) h_{\mk_2}(y_i) - h_{\mk_1}(y^{(2)}_i) h_{\mk_2}(y^{(2)}_i)}_p
    &\leq 2\ve{h_{\mk_1}(y_i) h_{\mk_2}(y_i)}_p\\
    &\leq 2\ve{h_{\mk_1}(y_i)}_{2p}\ve{h_{\mk_2}(y_i)}_{2p}\\
    &\leq  2^{2\sqrt{d/(2p)} (R/\sigma) + 1}\ve{h_{\mk_1}(y_i)}_{L^{2p}(\gamma)}\ve{h_{\mk_2}(y_i)}_{L^{2p}(\gamma)}\\
    &\leq 2^{2\sqrt{d/(2p)} (R/\sigma) + 1} (2p-1)^{2d} \ve{h_{\mk_1}(y_i)}_{L^2(\gamma)}\ve{h_{\mk_2}(y_i)}_{L^2(\gamma)} \\
    &\leq \pa{4(2p-1)}^{2d}.
\end{align*}
Therefore, again using Rosenthal's inequality,
\begin{align*}
    &\E \ab{\sumo i{\samp_j} \pa{h_{\mk_1}(y_i) h_{\mk_2}(y_i) - h_{\mk_1}(y^{(2)}_i) h_{\mk_2}(y^{(2)}_i)}}^p\\ 
    &\leq (cp)^p \max \bc{\sumo i{\samp_j} \E \ab{h_{\mk_1}(y_i) h_{\mk_2}(y_i) - h_{\mk_1}(y^{(2)}_i) h_{\mk_2}(y^{(2)}_i)}^p, \pa{\sumo i{\samp_j} \E \ab{h_{\mk_1}(y_i) h_{\mk_2}(y_i) - h_{\mk_1}(y^{(2)}_i) h_{\mk_2}(y^{(2)}_i)}^2}^{p/2}}\\
    &\leq \pa{cp\pa{4(2p-1)}^{2d} \sqrt \samp}^p,
\end{align*}
which implies
\begin{align*}
    &\ve{\sum_{\ab{\mk_1},\ab{\mk_2} \leq d}\pa{\sumo i{\samp_j} \pa{h_{\mk_1}(y_i) h_{\mk_2}(y_i) - h_{\mk_1}(y^{(2)}_i) h_{\mk_2}(y^{(2)}_i)}}^{2}}_p\\
    &\leq \sum_{\ab{\mk_1},\ab{\mk_2} \leq d}
    \ve{\pa{\sumo i{\samp_j} \pa{h_{\mk_1}(y_i) h_{\mk_2}(y_i) - h_{\mk_1}(y^{(2)}_i) h_{\mk_2}(y^{(2)}_i)}}^2}_p
    \\
    &\leq \binom{n+d}{d}^2\pa{2cp\pa{4(4p-1)}^{2d} \sqrt \samp}^2.
\end{align*}
Therefore, using $\binom{n}k \le \rc e\pf{en}{k}^k$,
\begin{align}
    \ve{\ve{\sumo i{\samp_j} \pa{h(y_i) h(y_i)^\top - h(y^{(2)}_i) h(y^{(2)}_i)^\top}}_F }_{2p}\leq 
     \pa{e\pa{\frac{n}{d} + 1}}^{d} 2cp \pa{4(4p-1)}^{2d} \sqrt \samp.\label{eq:thirdsubterm}
\end{align}

Combining Equations~\eqref{eq:thirdsubterm},~\eqref{eq:secondsubterm}, and~\eqref{eq:firstsubterm} completes the proof.
\end{proof}

\begin{lem}[High-probability bound for second term]\label{lem:highprobsecondterm}
    Assume $d \geq (R/\sigma)^2$. We have
    \begin{align*}
        &\mathbb P\pa{\max_{(b_\mk)_{|\mk|\le d}\in B_{\Rn}(0)}\sumo i\samp\pa{\ab{\sum_{|\mk|\le d} b_\mk h_\mk(y_i) - z_i}^2 - \E \ab{\sum_{|\mk|\le d} b_\mk h_\mk(y_i) - z_i}^2} \geq t} \\
        &\leq \exp\pa{-\Om\pa{\pa{\frac{n}{d} + 1}^{-1/2}\pa{\frac{t}{n^{1/2}\Rn(\Rn+\Di) \sqrt \samp}}^{1/(2(d+1))}
        \wedge \pa{\frac{t}{\Di^2\sqrt \samp}}^2
        }}.
    \end{align*}
\end{lem}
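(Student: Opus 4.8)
The plan is to repeat, essentially verbatim, the argument used for Lemma~\ref{lem:highprobfirstterm}: feed the moment bound of Lemma~\ref{lem:secondmomentbound} into Markov's inequality applied to a high power $2p$, and then optimize over $p$. The one new wrinkle is that the moment bound is a sum of two terms growing at very different rates in $p$ (roughly $p^{2d+1}$ versus $p^{1/2}$), so beating each term forces a different upper bound on $p$, and the resulting tail exponent is the minimum of the two --- which is exactly the $\wedge$ appearing in the statement. The hypothesis $d\ge (R/\si)^2$ is used only to license the application of Lemma~\ref{lem:secondmomentbound}.

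First I would put the moment bound in a convenient shape. Let $X$ denote the random variable inside the probability (the maximum over $(b_\mk)_{|\mk|\le d}\in B_\Rn(0)$, the sum being over those $i$ with $y_i\in V_j$, as in Lemma~\ref{lem:secondmomentbound}). Using $\pa{4(4p-1)}^{2d}\le (16p)^{2d}$ and $p\cdot p^{2d}\le p^{2(d+1)}$ for $p\ge 1$, Lemma~\ref{lem:secondmomentbound} yields, for a universal constant $C\ge 1$,
\[
\ve{X}_{2p}\le A_p+B_p,\qquad A_p:=C^d\,\Rn(\Rn+\Di)\,n^{1/2}\pa{\tfrac{n}{d}+1}^{d}\sqrt{\samp}\;p^{2(d+1)},\quad B_p:=C\,\Di^2\sqrt{p\,\samp}.
\]
Markov applied to $X^{2p}$ gives $\mathbb P(X\ge t)\le \pa{(A_p+B_p)/t}^{2p}$, so it suffices to choose $p$ as large as possible subject to $A_p\le t/4$ and $B_p\le t/4$; then $\mathbb P(X\ge t)\le 2^{-2p}=e^{-\Om(p)}$.

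Next I would solve the two constraints. The constraint $A_p\le t/4$ is equivalent to $p\le \pa{t/(4C^d\Rn(\Rn+\Di)n^{1/2}(\tfrac{n}{d}+1)^{d}\sqrt{\samp})}^{1/(2(d+1))}$; since $C\ge1$ we have $4C^d\le(4C)^{d+1}$ so $(4C^d)^{1/(2(d+1))}=O(1)$, and since $\tfrac{n}{d}+1\ge1$ and $\tfrac{d}{2(d+1)}<\tfrac12$ we have $(\tfrac{n}{d}+1)^{d/(2(d+1))}\le(\tfrac{n}{d}+1)^{1/2}$, so this constraint is implied by $p\le c\,p_1$ with
\[
p_1:=\pa{\tfrac{n}{d}+1}^{-1/2}\pf{t}{n^{1/2}\Rn(\Rn+\Di)\sqrt{\samp}}^{1/(2(d+1))}
\]
and $c$ a universal constant. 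Likewise $B_p\le t/4$ is implied by $p\le c\,p_2$ with $p_2:=\pf{t}{\Di^2\sqrt{\samp}}^{2}$. Taking $p$ to be a suitable constant multiple of $\min\{p_1,p_2\}$, rounded down to a positive integer (if this is below an absolute constant the claim is vacuous, probabilities being $\le 1$), makes both constraints hold, so
\[
\mathbb P(X\ge t)\le 2^{-2p}=\exp\pa{-\Om\pa{p_1\wedge p_2}},
\]
which is the assertion.

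The only genuinely delicate point is the bookkeeping in the previous paragraph: one must check that taking the $(2(d+1))$-th root of the $d$-dependent prefactor $C^d(\tfrac{n}{d}+1)^{d}$ inside $A_p$ collapses it --- up to a universal constant --- to exactly the factor $(\tfrac{n}{d}+1)^{-1/2}$ that appears in the statement. This is precisely why it matters to bound the $p$-dependence of the first moment term by $p^{2(d+1)}$ rather than by $p^{2d}$: with $p^{2d}$ one would get the exponent $1/(2d)$ and an incompatible power of $(\tfrac{n}{d}+1)$. Apart from this, the argument is identical to the proof of Lemma~\ref{lem:highprobfirstterm}, so I expect no further obstacles.
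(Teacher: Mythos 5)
Your proposal is correct and follows essentially the same route as the paper: Markov's inequality applied to the $2p$-th moment from Lemma~\ref{lem:secondmomentbound}, followed by choosing $p$ as the minimum of the two thresholds forced by the two terms of the moment bound, which produces the $\wedge$ in the exponent. The bookkeeping you spell out (absorbing the $d$-dependent prefactors under the $(2(d+1))$-th root) is exactly what the paper leaves implicit in its choice of $p$.
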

\begin{proof}
We have
    \begin{align*}
        &\mathbb P\pa{\max_{(b_\mk)_{|\mk|\le d}\in B_{\Rn}(0)}\sumo i\samp\pa{\ab{\sum_{|\mk|\le d} b_\mk h_\mk(y_i) - z_i}^2 - \E \ab{\sum_{|\mk|\le d} b_\mk h_\mk(y_i) - z_i}^2} \geq t}\\
        &= \frac{\E \ab{\max_{(b_\mk)_{|\mk|\le d}\in B_{\Rn}(0)}\sumo i\samp\pa{\ab{\sum_{|\mk|\le d} b_\mk h_\mk(y_i) - z_i}^2 - \E \ab{\sum_{|\mk|\le d} b_\mk h_\mk(y_i) - z_i}^2}}^{2p}}{t^{2p}}\\
        &\leq \pa{\frac{n^{1/2}\Rn(\Rn+\Di)e^d\pa{\frac{n}{d} + 1}^{d} p \pa{4(4p-1)}^{2d} \sqrt \samp + 
        O(\sqrt{p\samp}\Di^2)}{t}}^{2p}.
    \end{align*}
    Picking 
    \begin{align*}
        p = O\pa{\frac{\pa{c't/(n^{1/2}\Rn(\Rn+\Di)\sqrt \samp)}^{1/(2(d+1))}}{\pa{\frac{n}{d} + 1}^{1/2}} \wedge \pa{\frac{c't}{\Di^2\sqrt \samp}}^2
        },
    \end{align*}
    for universal constant $c'$ small enough, the proof is complete.
\end{proof}

\subsubsection{Generalization gap}
\label{s:gengap}

\begin{proof}[Proof of \Cref{lem:gengap}]
    Note that from Equation~\eqref{eq:ggap}, for every fixed Voronoi cell $V_j$:
    \begin{multline}
        \max_{(b_\mk)_{|\mk|\le d}\in B_\Rn} \ab{\err^{(j)}((b_{\mk})_{|\mk|\le d})}  
        \leq \frac{1}\samp\ve{\sum_{i: y_i \in V_j} h(y_i-\wh \mu_j)\eta_i^\top}_F \\
        + 
        \max_{(b_{\mk})_{|\mk|\le d} \in B_\Rn} \rc \samp
    \sum_{i: y_i \in V_j}\pa{\ab{\sum_{|\mk|\le d} b_\mk h_\mk(y_i-\wh \mu_j) - z_i}^2 - \E \ab{\sum_{|\mk|\le d} b_\mk h_\mk(y_i-\wh \mu_j) - z_i}^2}.\label{eq:divisionn}
    \end{multline}
    But from Lemma~\ref{lem:highprobfirstterm}, given 
    \begin{align*}
        \samp = \Omega\pa{\frac{k'^2 \ln^{2(d+1)}(k'/\delta) \Di^2 n\pa{n/d + 1}^{(d+1)}}{\ep^4}},
    \end{align*}
    samples, with probability at least $1-\delta/(2k')$ the absolute value of the first term in Equation~\eqref{eq:divisionn} is at most $\ep^2/(2k')$. 
    Furthermore, based on Lemma~\ref{lem:highprobsecondterm}, given
    \begin{align*}
        \samp = \Omega\pa{\frac{k'^2\ln^{4(d+1)}(k'/\delta) n(\Rn+\Di)^4 (n/d + 1)^{2(d+1)}}{\ep^4}}
    \end{align*}
    with probability at least $1-\delta/(2k')$ the second term is bounded by $\ep^2/(2k')$. Applying a union bound, the sum of the first and second terms is bounded by $\ep^2$ with probability at least $1-\delta/k'$. This shows the first claim. To show the second claim, note that
    \begin{align*}
        L^{(j)}((\wh b_{\mk})_{|\mk|\le d})&= 
    \E_{(\mu, Y)} 
\ab{\one_{V_j}(Y)\pa{\wh g(Y) - (Z + \eta)}}^2\\
&= \E_{(\mu, Y)} 
\ab{ \one_{V_j}(Y)\pa{\wh g(Y) - Z}}^2 + \E[\one_{V_j}(Y) \ab{\eta}^2],
    \end{align*}
where $\wh g$ is defined as in \eqref{eq:piecewisedef}
and similarly
\begin{align*}
        L^{(j)}((\wt b_{\mk})_{|\mk|\le d})= \E_{(\mu, Y)} 
\ab{ \one_{V_j}(Y)\pa{\wt g(Y) - Z}}^2 + \E[\one_{V_j}(Y) \ab{\eta}^2],
    \end{align*}
    where $\wt g$ is defined analogously.
    Therefore
    \begin{align}
        L^{(j)}((\wh b_{\mk})_{|\mk|\le d}) - L^{(j)}((\wt b_{\mk})_{|\mk|\le d}) = 
        \E_{(\mu, Y)} 
\ab{ \one_{V_j}(Y)\pa{\wh g(Y) - Z}}^2 - \E_{(\mu, Y)} 
\ab{ \one_{V_j}(Y)\pa{\wt g(Y) - Z}}^2.\label{eq:middlepoint}
    \end{align}
    Summing Equation~\eqref{eq:middlepoint} for $1\leq j\leq k'$ implies
    \begin{align*}
    \sumo j{k'} L^{(j)}((\wh b_{\mk})_{|\mk|\le d}) - L^{(j)}((\wt b_{\mk})_{|\mk|\le d}) = 
        \ve{\wh g - f}_{L^2(P)}^2 - 
    \ve{\wt g - f}_{L^2(P)}^2.
    \end{align*}
    But from the previous claim and union bound, we know with probability at least $1-\delta$ each $L^{(j)}((\wh b_{\mk})_{|\mk|\le d}) - L^{(j)}((\wt b_{\mk})_{|\mk|\le d})$ is bounded by $\ep^2/k'$. This completes the proof.
\end{proof}

\subsection{Maintaining warm starts}
\label{s:warm}

We would like to maintain warm starts to the centers of all Gaussians (of non-negligible mass) as we decrease the noise level. By choosing the highest noise level large enough, $\mathbf 0$ will be a warm start. 
The key observation is that with high probability, the score function points in a direction close to a mean; this remains true with the estimated score when the error is small.

\begin{algorithm}[h!]
\caption{Picking set of warm starts}
\begin{algorithmic}[1]
    \INPUT Score estimate $s$, noise level $\si$, data points $y_1,\ldots, y_\samp\sim Q_{\si^2}$ ($\samp= \Om\pf{\ln (1/\de) k}{\al_{\min}}$), min weight $\al_{\min}$, 
    failure probability $\de$.
    \State For each $i$, let $\wh \mu_i = y_i + \si^2 s(y_i)$.
    \State Let $U = \{\wh \mu_1,\ldots, \wh \mu_\samp\}$. \algorithmiccomment{Uncovered means}
    \State Let $\wt R = C\pa{R_0+\si\sqrt{\ln \pf{1}{\al_{\min}}}}$ for an appropriate constant $C$.
    \For{$t=1$ to $C'k\ln \prc{\al_{\min}}$}{}\algorithmiccomment{Greedy set cover}
        \State Let $\wh \mu = \amax_{\mu \in U} 
        |B_\mu(\wt R) \cap U|$.
        \State Let $D =
        B_{\wh \mu}(\wt R) \cap U
        $.
        \State $\wh{\cal C}\leftarrow \wh{\cal C}\cup \{\wh \mu\}$, 
        $U\leftarrow U\bs D$. 
    \EndFor{}
    \OUTPUT{Set of warm starts $\wh{\cal C}$}
\end{algorithmic}
\label{a:warm-starts}
\end{algorithm}

We first show that we do not lose too much in the score estimate if we only consider the means that are close to the mean that a data point came from.

\begin{lem}[Good score estimation $\implies$ Warm starts]\label{lem:scorewarmstart}
    Consider a mixture of Gaussians satisfying Assumption~\ref{a:mog} with $\si_0^2=1$. There is a universal constant such that the following holds.
    Let $f_{\si^2}(y) = y + \si^2 \nb \ln q_{\si^2}(y)$.  
    Suppose we are given a function $g$ satisfying
    \begin{align}
        \ve{f-g}_{L^2(Q_{\si^2})}^2 
\le (R_0 + \si)^2 \al_{\min}.
\label{e:good-score-est}
    \end{align}
    Suppose we have $\samp=\Om\pf{\ln (1/\de) k}{\al_{\min}}$ samples sampled from $q_{\si^2}$. Let $\cal C$ be the output of~\Cref{a:warm-starts}. Then with probability $\ge 1-\de$, for radius $\wt R= C\pa{R_0 + \si \sqrt{\ln \prc{\al_{\min}}}}$ with universal constant $C$, the support of $Q_0$ is contained in $\bigcup_{\wh\mu\in \cal C} B_{\wt R}(\wh \mu)$.
\end{lem}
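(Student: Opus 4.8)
The plan is to exploit the fact that the exact score direction $f_{\si^2}(y) = \E[\mu \mid Y=y]$ is the posterior mean of the cluster center, hence (by the min-weight assumption and Gaussian concentration) it lies within $O(R_0 + \si\sqrt{\ln(1/\al_{\min})})$ of the true center $\mu$ generating $y$ with high probability over a single draw. I would first make this precise: fix a center $\mu$ in the support of $Q_0$; conditioning on $Y = \mu + \si\xi$, the posterior over cluster centers puts weight $\ge \al_{\min}$ on the ball $B_{R_0}(\mu)$ (Assumption~\ref{a:mog}.1), and the likelihood ratio between a far center $\mu'$ with $\ve{\mu'-\mu} \gg R_0 + \si\sqrt{\ln(1/\al_{\min})}$ and $\mu$ is exponentially small in $\ve{\mu'-\mu}^2/\si^2$ once $\ve{y-\mu} = O(\si\sqrt n)$-ish is controlled — more carefully, one shows $\Pr_{(\mu,Y)}\big[\ve{f_{\si^2}(Y) - \mu} > c(R_0 + \si\sqrt{\ln(1/\al_{\min})})\big] \le \al_{\min}/\text{poly}$, by a union bound over the (at most $k$) clusters contributing mass and a Gaussian tail bound exactly as in Lemma~\ref{l:leave-V}. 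Call the good radius $\wt R_0 := c(R_0 + \si\sqrt{\ln(1/\al_{\min})})$; so a fresh sample $y$ has $\ve{\wh\mu(y) - \mu(y)} \le \wt R_0$ except with probability $\le \al_{\min}/10$ (say), where $\wh\mu(y) := y + \si^2 s(y)$ would be using the exact score; we handle the estimated score next.

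Next I would transfer from the exact score $f$ to the estimate $g$ using hypothesis~\eqref{e:good-score-est}: since $\ve{f-g}_{L^2(Q_{\si^2})}^2 \le (R_0+\si)^2\al_{\min}$, Markov's inequality gives $\Pr_{Y\sim Q_{\si^2}}\big[\ve{f(Y)-g(Y)} > (R_0+\si)\big] \le \al_{\min}$ — i.e. on all but an $\al_{\min}$-fraction of the mass, the estimated denoised point $\wh\mu_i = y_i + \si^2 s(y_i)$ is within $(R_0+\si) \le \wt R_0$ of $f(y_i)$, hence within $2\wt R_0$ of the true center $\mu_i$. Combining with the previous paragraph via a union bound, with probability $\ge 1 - 3\al_{\min}$ over a single draw $y\sim Q_{\si^2}$, the point $\wh\mu(y)$ lies in $B_{2\wt R_0}(\mu)$ for the center $\mu$ that (in the generative coupling) produced $y$. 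Now draw $\samp = \Omega(\al_{\min}^{-1}\ln(k/\de))$ samples. For each true center $\mu^{(j)}$, $j=1,\dots,k$ (with weight $\ge \al_{\min}$ in the sense that $Q_0$ assigns $\ge \al_{\min}$ mass near it — more precisely using Assumption~\ref{a:mog}.1,.2 to get a cover by $k$ balls each of $Q_0$-mass contribution), a Chernoff bound shows that with probability $\ge 1 - \de/2$, at least one sample $y_i$ satisfies both: it was generated from a center in $B_{R_0}(\cmean^{(j)})$, \emph{and} it is "good" in the above sense; hence its $\wh\mu_i$ lies in $B_{3\wt R_0}(\cmean^{(j)})$. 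Simultaneously, by the same single-sample bound and a union bound over the $\samp$ samples (using $\samp \cdot 3\al_{\min} \ll$ the count bounds), at most an $O(\al_{\min}\samp)$-fraction, and with high probability a genuinely small number, of the $\wh\mu_i$ are "bad" (far from every center).

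Finally I would analyze the greedy set-cover step of Algorithm~\ref{a:warm-starts}. With $\wt R = C(R_0 + \si\sqrt{\ln(1/\al_{\min})}) \ge 10\wt R_0$, every "good" $\wh\mu_i$ lies within $\wt R/2$ of some true center, so the good points cluster into $\le k$ groups each of diameter $\le \wt R$; greedy picks a point $\wh\mu$ maximizing $|B_{\wt R}(\wh\mu)\cap U|$, which must capture an entire remaining good cluster (any ball of radius $\wt R$ around a good point contains that point's whole cluster), so after $\le k$ rounds all good points are covered, and the $O(k\ln(1/\al_{\min}))$ rounds in the algorithm (the extra factor absorbs the few bad points, each of which eats at least one round but there are few) suffice to cover \emph{all} sampled $\wh\mu_i$; in particular every true center $\cmean^{(j)}$ has a good sample within $\wt R$ of the chosen warm start, so $\cal C$ is within $2\wt R$ of every center, and hence — re-absorbing the constant $C$ — the support of $Q_0$ (contained in $\bigcup_j B_{R_0}(\cmean^{(j)})$) lies in $\bigcup_{\wh\mu\in\cal C} B_{\wt R}(\wh\mu)$. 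The main obstacle I anticipate is bookkeeping the constants and probabilities cleanly: precisely quantifying "good sample" so that a single $\al_{\min}^{-1}\ln(k/\de)$ sample budget simultaneously (i) hits every true cluster, (ii) keeps the bad-point count controlled so greedy's $O(k\ln(1/\al_{\min}))$ rounds don't run out, and (iii) makes the posterior-mean concentration and the $L^2$-to-pointwise conversion line up with the \emph{same} radius $\wt R$; the probabilistic content is routine (Gaussian tails + Chernoff + Markov), but threading one radius and one sample count through all three requires care.
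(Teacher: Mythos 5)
Your overall strategy matches the paper's: concentrate the posterior mean $f_{\si^2}(Y)$ near the generating center (the paper does this via $f_{\textup{loc},\si^2}$ from \Cref{l:nearby-score} plus Chebyshev, rather than a pointwise tail bound, but these are interchangeable here), convert the $L^2$ hypothesis~\eqref{e:good-score-est} to a pointwise bound on an all-but-$O(\al_{\min})$ fraction of the mass via Markov, use Chernoff to ensure every ball $B_{R_0}(\cmean_i)$ is hit by many good samples, and finish with the greedy cover.

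The genuine gap is in your analysis of the greedy step. You assert that the greedy maximizer of $|B_{\wt R}(\wh\mu)\cap U|$ ``must capture an entire remaining good cluster,'' and hence that $\le k$ rounds cover all good points and $O(k\ln(1/\al_{\min}))$ rounds cover \emph{all} sampled points. Neither claim holds: greedy maximizes cardinality, so it can pick a (possibly bad) point whose ball straddles two clusters and covers, say, $60\%$ of each --- beating any single-cluster choice while completing neither --- and this can recur, so exact coverage of the good points in $k$ rounds fails; this is precisely why set cover needs the logarithmic approximation factor. Your fallback that the extra rounds ``absorb the few bad points'' also does not close: the expected number of bad points is $O(\al_{\min}\samp)=O(k\ln(k/\de))$, which exceeds the algorithm's $C'k\ln(1/\al_{\min})$ budget for small $\de$, and in any case covering \emph{all} points is not achievable in that many rounds. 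The paper's resolution (\Cref{l:set-cover}) is to prove only \emph{fractional} coverage: since $k$ balls cover a $1-\al_{\min}/4$ fraction of the $g(y_j)$, greedy with $O(k\ln(1/\al_{\min}))$ rounds covers a $1-\al_{\min}/2$ fraction; because each true cluster contains at least a $\tfrac{3\al_{\min}}{4}$ fraction of the samples, the uncovered $\al_{\min}/2$ fraction cannot swallow a whole cluster, so every $\cmean_i$ has some covered representative and hence a warm start within $O(R_0+\si\sqrt{\ln(1/\al_{\min})})$. Your argument needs to be rerouted through this fractional-coverage statement; as written, the set-cover step fails.
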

\begin{proof}
    By~\eqref{e:good-score-est} and Chebyshev's inequality,
    \[
Q_{\si^2} (
\vn{f-g} \ge 4(R_0+\si) 
) = 
Q_{\si^2} (
\vn{f-g}^2 \ge 16(R_0+\si)^2 
) \le \fc{\ve{f-g}_{L^2(q_\si^2)}^2}{16(R_0+\si)^2} \le \fc{\al_{\min}}{16}.
    \]
    Consider drawing $\mu \sim Q_0$, $\xi\sim \cal N(0,I_n)$, and $Y=\mu+\si\xi$. 
    Let $\cal C =\{\ol \mu_1,\ldots, \ol \mu_k\}$ be as in \Cref{a:mog}. 
    Let $f_{\textup{loc}, \si^2} = f_{\textup{loc}, \si^2}^{\cal C, R'}$ be as in \Cref{l:nearby-score}, where $R'=3R_0 + 2\sqrt 2 \si \sqrt{\ln \pf{k}{\ep'}}$. Then 
\[
\E
\vn{f_{\si^2}(y) - f_{\textup{loc}, \si^2}(y)}^2 \lesssim 
\pa{R_0^2 + \si^2 \ln \pf k{\ep'}}
\ep' .
\]
By choosing $\ep' = \fc{c\al_{\min}}{\ln\prc{\al_{\min}}}$ for a small enough constant $c$ and noting $\al_{\min}\le \rc k$, we obtain that 
$\E
\vn{f_{\si^2}(y) - f_{\textup{loc}, \si^2}(y)}^2\le (R_0+\si)^2 \al_{\min}$.
Again by Chebyshev's inequality,
\[
\Pj \pa{\vn{f(y) - f_{\textup{loc}, \si^2}(y)} 
\ge 4(R_0+\si)
}\le\fc{\al_{\min}}{16}.
\]
Hence 
\begin{align}\label{e:g-floc}
\Pj\pa{\vn{g(y) - f_{\textup{loc}, \si^2}(y)}\ge 8(R_0+\si)} \le \fc{\al_{\min}}8.
\end{align}
Let $V_1,\ldots, V_k$ be the Voronoi partition corresponding to $\cal C$.
Letting $i$ be such that $\mu\in V_i$, we have that $\vn{f_{\textup{loc}, \si^2}(y) - \cmean_i} \le R'$.
Hence, under the event in~\eqref{e:g-floc}, by the triangle inequality,
\[
\vn{g(y) - \ol \mu_i} \le R'+ 8(R_0+\si). 
\]
By assumption, $Q_0(B_{R_0
}(\cmean_i))\ge \al_{\min}$, so letting $R''=R' + 8(R_0+\si) $, 
\begin{align*}
Q_{\si^2} (g(y) \in B_{R''}(\cmean_i))&\ge \fc{7\al_{\min}}8\\
Q_{\si^2} \pa{g(y) \in \bigcup_{i=1}^k B_{R''}(\cmean_i))}&\ge 1-\fc{\al_{\min}}8.
\end{align*}

By the Chernoff bounds, for independent $Z_1,\ldots, Z_\samp\sim \mathsf{Bernoulli}(p)$, for $c\ge 0$, 
\begin{align*}
\Pj \pa{\fc{Z_1+\cdots +Z_\samp}{\samp}\le (1-c)p} 
&\le e^{-\fc{c^2p\samp}{2}}\\
\Pj \pa{\fc{Z_1+\cdots +Z_{\samp}}\samp\ge (1+c)p} 
&\le e^{-\fc{c^2p\samp}{2+c}}.
\end{align*}
For fixed $c$, it suffices to have $\samp=\Om\pf{\ln (1/\de)}{p}$ to make this $\le \de$. 
Applying this to $p=\fc{7\al_{\min}}8$, $(1-c)p=\fc{3\al_{\min}}4$
and $p=\fc{\al_{\min}}8$, $(1+c)p=\fc{\al_{\min}}4$ respectively, by a union bound, 
given $\samp=\Om\pf{\ln(k/\de)}{\al_{\min}} $ iid draws $\mu_1,\ldots, \mu_m \sim Q_{\si^2}$, we have
\[
\Pj\coltwo{\forall 1\le i\le k, \, 
\ab{\set{j}{g(y_j) \in B_{R''}(\cmean_i)}}\ge \fc{3\al_{\min}}4 \samp}{ \text{ and }
\ab{\set{j}{g(y_j) \in \bigcup_{i=1}^k B_{R''}(\cmean_i)}}\ge \pa{1-\fc{\al_{\min}}{4}}\samp
}\ge 1-\de.
\]
Suppose this event holds. 
Consider the sets $B_{R''+R_0}(g(y_j))$. Choose $C$ such that \\
$\wt R = C\pa{R_0+\si\sqrt{\ln \pf{1}{\al_{\min}}}}\ge R''+R_0$.
For each $i$, choose $j(i)$ such that $g(y_{j(i)})\in B_{R''}(\cmean_i)$. Then $B_{R_0
}(\cmean_i)\subeq B_{R''+R_0
}(y_{j(i)})$ so by the second event, these $k$ sets cover $1-\fc{\al_{\min}}4$ proportion of the $g(y_j)$, $1\le j\le \samp$. To finish, apply \Cref{l:set-cover} to obtain that the output of the algorithm covers $1-\fc{\al_{\min}}2$ proportion of the $g(y_j)$. In light of the first event, for each $1\le i\le k$, it must contain some $g(y_j)\in B_{R''}(\cmean_i)$. This finishes the proof.
\end{proof}

\begin{lem}\label{l:set-cover}
    Let $\cal S$ be a set of subsets of $X$. 
    Let $k$ be the minimum number of sets in $\cal S$ required to cover $(1-\ep)|X|$ elements of $X$. 
    Consider the greedy algorithm where at each step, we take the set containing the most uncovered elements, as in \Cref{a:warm-starts}.
    Then the greedy algorithm finds $O\pa{k\ln\prc{\ep}}$ sets covering $(1-2\ep)|X|$ elements of $X$.
\end{lem}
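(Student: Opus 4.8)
The plan is to run the standard greedy set cover analysis, but tracking the residual uncovered mass relative to the $(1-\ep)|X|$ elements that the optimal $k$ sets cover, rather than all of $X$. First I would fix an optimal collection $\cal T = \{T_1,\dots,T_k\} \subseteq \cal S$ with $|\bigcup_i T_i| \ge (1-\ep)|X|$, and write $G = \bigcup_i T_i$ for this ``good'' set, so $|G| \ge (1-\ep)|X|$. At any point in the greedy algorithm, let $U$ be the current set of uncovered elements; then $U \cap G$ is covered by the $k$ sets of $\cal T$, so by averaging some $T_i$ satisfies $|T_i \cap U| \ge |U \cap G|/k \ge (|U| - \ep|X|)/k$. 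Since the greedy step picks the set maximizing the number of newly covered elements, it covers at least $(|U| - \ep|X|)/k$ new elements as well.

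Next I would set up the recursion on the ``excess'' uncovered count $u_t \defeq |U_t| - \ep|X|$ after $t$ greedy steps, where $U_t$ is the uncovered set at that time (and I stop the argument once $|U_t| \le 2\ep|X|$, i.e. $u_t \le \ep|X|$). The bound above gives $|U_{t+1}| \le |U_t| - (|U_t| - \ep|X|)/k$, hence $u_{t+1} \le u_t(1 - 1/k)$. Therefore $u_t \le u_0 (1-1/k)^t \le |X|\, e^{-t/k}$. Choosing $t = \lceil k \ln(1/\ep)\rceil = O(k\ln(1/\ep))$ forces $u_t \le \ep|X|$, i.e. $|U_t| \le 2\ep|X|$, which means the greedy algorithm has covered at least $(1-2\ep)|X|$ elements using $O(k\ln(1/\ep))$ sets, as claimed. (If at some earlier step $|U_t|$ already drops to $\le 2\ep|X|$ we simply stop sooner, which only helps.)

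There is no serious obstacle here — this is the textbook greedy cover argument with the mild twist that the optimum is only a $(1-\ep)$-cover, so the recursion contracts the *excess* mass $|U| - \ep|X|$ rather than $|U|$ itself, and one must be slightly careful that the per-step progress guarantee $|T_i \cap U| \ge (|U|-\ep|X|)/k$ stays valid (it does, since $U \cap G \subseteq \bigcup_i (T_i \cap U)$). The only place to be attentive is the stopping condition: once $|U| \le 2\ep|X|$ the guaranteed progress $(|U| - \ep|X|)/k$ could be as small as $\ep|X|/k$, so we must terminate the analysis at that threshold rather than trying to push the uncovered fraction below $2\ep$, which matches exactly what the lemma statement asks for.
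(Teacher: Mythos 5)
Your proof is correct. It rests on the same core comparison as the paper's: at each step the optimal $k$-set partial cover still covers all but $\ep|X|$ of the currently uncovered elements, so some set (hence the greedy choice) covers at least $(|U|-\ep|X|)/k$ new elements. The only difference is bookkeeping: you track the excess $u_t=|U_t|-\ep|X|$ and show it decays geometrically as $(1-1/k)^t$, giving $O(k\ln(1/\ep))$ steps directly, whereas the paper uses the classic cost-charging argument (assigning each covered element cost $1/|S_i\setminus U_{i-1}|$ and summing a harmonic series). Both are standard presentations of the greedy set-cover analysis and yield the same bound; your potential-function version is arguably slightly cleaner here, and your attention to the stopping threshold $|U|\le 2\ep|X|$ (below which the per-step progress guarantee degenerates) is exactly the right place to be careful.
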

The proof is based on the classic proof of the approximation ratio for set cover~\cite{johnson1973approximation}.
\begin{proof}
    Let $S_1,S_2,\ldots$ be the sets chosen by the greedy algorithm, let $U_i= \bigcup_{i'=1}^{i} S_{i'}$, 
    and let $x_1,x_2,\ldots$ be the covered elements in order. 
    Define the cost of an element $x_j$ as follows: Let $S_i$ be the first set where $x_j$ appears, and set
    \[
c_j = \rc{\ab{S_i\bs U_{i-1}}}.
    \]
    Now suppose that $U_{i-1} = \{x_1,\ldots, x_{j-1}\}$, and consider the cost of $x_j$. We claim that 
    \[
    c_j\le \fc{k}{(1-\ep)n - (j-1)}. 
    \]
    To see this, let $A_1,\ldots, A_k$ be an optimal cover of $(1-\ep)|X|$ elements of $X$. Then they must cover $(1-\ep)|X| - (j-1)$ elements of $X\bs U_{i-1}$, so 
    \[
    \sum_{\ell =1}^k |A_\ell \cap (X\bs U_{i-1})| \ge (1-\ep)n - (j-1)
    \]
    By optimality of $S_i$, $|S_i\cap (X\bs U_{i-1})|\ge \rc k ((1-\ep)n - (j-1))$, which shows the claim. Then the number of sets required to cover the first $(1-2\ep)|X|$ elements is given by the sum of costs of those elements. It is the ceiling of
    \[
\sumo j{\ce{(1-2\ep)n}} c_j 
\le \sumo j{\ce{(1-2\ep)n}} \fc{k}{(1-\ep)n - j+1}
\le k \sumz j{(1-2\ep)n} \rc{\ep n+j} = O\pa{k\ln \prc\ep}. 
    \]
\end{proof}

\subsection{Proof of~\Cref{t:main} \&~\Cref{c:manifold}}
Combining all the pieces, we prove~\Cref{t:main} and~\Cref{c:manifold}.

\begin{proof}[Proof of \Cref{t:main}]
    Here we show that Algorithm~\ref{alg:mainalgo} successfully samples from $P_0 = Q_0 \convolve \cal N(0,\sigma_0^2)$ with the claimed time and sample complexity.
    Let $M_2 = \E_{x\sim P_0} \vn{x}^2$.
    Let $\wh P_{t_1}$ be the output of Algorithm~\ref{alg:mainalgo}. 
    By the triangle and Pinsker's inequality,
     \[\TV(P_0, \wh P_{t_1}) \leq \TV(P_0, P_{t_1}) + \TV(P_{t_1}, \wh P_{t_1}) = \TV(P_0, P_{t_1}) + \sqrt{\rc 2\KL(P_{t_1} \| \wh P_{t_1})}.\] 
    Choose the starting time $t_1 = \frac{\ep^2 \sigma_0^2}{2\sqrt n}$, so that by \Cref{lem:tvguarantee} we have $\TV(P_0, P_{t_1}) \leq \fc{\ep^2}{2}$.
    Hence it is sufficient to prove $\KL(P_{t_1}\|\wh P_{t_1}) \leq \fc{\ep^2}2$; actually it suffices to find parameters to make $\KL(P_{t_1}\| \wh P_{t_1}) = O(\ep^2)$ as the exact constant can be adjusted by rescaling $\ep$ by a constant. 
    We implement Algorithm~\ref{alg:mainalgo} with the step size schedule obtained recursively by the equality version of condition 3 in Theorem~\ref{t:dm-kl}, i.e. $t_k + 1 = (t_{k+1} + 1)\max\{e^{-2\kappa}, (t_{k+1} + 1)^{-\kappa}\}$ for $\kappa = \frac{\ep^2}{M_2 + n\ln(T+1)}$, with ending time $t_{\Nstep} = T = \frac{M_2 + d}{\ep^2}$, and number of iterations $\Nstep = O\pa{\frac{1}{\kappa}\ln\pa{\frac{T+1}{t_1}}}$. 
Then, given that we pick $ \ep_\ell^2= \fc{\ep^2}{\ln (T+1)}$, 
Theorem~\ref{t:dm-kl} tells us that  $\KL(P_{t_1}\| \wh P_{t_1})= O(\ep^2)$, as needed.

Hence the problem reduces to giving sufficiently accurate estimates of the score function, to guarantee an accuracy of $\ep_\ell^2= \fc{\ep^2(\si_\ell^2+1)}{\ln (T+1)}$ for all time steps $t_1,\dots, t_{\Nstep}$.  
 Obtaining such accurate scores is our main contribution and the proof consists of showing by backwards induction on $\ell=\Nstep,\dots,1$ that the following hold:
 \begin{enumerate}
     \item The set $\cal C_\ell$ is a complete set of $R_\ell$-warm starts for $Q_0$ (see \Cref{d:ws}), where 
 $$R_\ell \defeq C\pa{R_0 + 2\sigma_{\ell} \sqrt{\ln\pa{\pa{\frac{R_0}{\sigma_\ell} + 1}\frac{ k}{\alpha_{\min}}}}}$$ for a large enough universal constant $C$. Moreover, $|\cal C_\ell|\le k' = O\pa{k\ln \prc{\al_{\min}}}$. 
 \label{i:ws}
    \item The score estimate $s_{t_\ell}$ is $\ep_\ell$-accurate in $L^2(P_{t_\ell})$, for $\ep_\ell^2 = \frac{\ep^2(\si_\ell^2+1)}{\ln(T+1)}$.
    \label{i:s}
 \end{enumerate}
 More precisely, we will show that \ref{i:ws} holds for $\ell=\Nstep, \ldots, \Nstep-\ell'$ and \ref{i:s} holds for $\ell=\Nstep,\ldots, \Nstep-\ell'+1$
 with probability at least $1-\fc{\ell' \de}{\Nstep}$.
 
    The base case of the induction ($\ell = \Nstep, \,\ell'=0$) is to show that $\cal C_{\Nstep} = \{0\}$ is a complete set of $R_{\Nstep}$-warm-starts. But note that the assumption $\ep^2 \leq \frac{M_2 + n}{\Di^2}$ implies the variance at step $\Nstep$ satisfies $\sigma_{\Nstep}^2 = T + \sigma_0^2 \geq T = \frac{M_2 + n}{\ep^2} \geq \Di^2$, which given that we pick $C \geq 1$ means $R_{\Nstep} \geq \sigma_{\Nstep} \geq \Di$. Therefore, from the definition of $\Di$, we have that $Q_0$ is supported on $B_{\Di} \subseteq B_{R_{\Nstep}}$.
    
    Next, we show the induction step; here, the hypothesis of induction for step $\ell$ is that the set $\cal C_\ell$ is a complete set of $R_\ell$-warm starts for $Q_0$. Then, we show that \ref{i:s} for $\ell$ and \ref{i:ws} for $\ell-1$ are satisfied, after excluding an event of probability at most $\fc{\de}{\Nstep}$. 
    First we handle \ref{i:s}.
Using~\Cref{lem:lowdegree} with \[\wt \ep^2 \defeq \frac{\ep^2}{\pa{\pf{R_\ell}{\si_\ell}^2 + \ln \pf{k'R_\ell(M_2+n)}{\si_\ell\ep}} \ln (T+1)},\]
there exists a piece-wise 
polynomial $\wt g_\ell$ on the Voronoi partition of $\cal C_\ell$
that approximates the score function with the desired accuracy,
\begin{align}\label{eq:avalin}
    \ve{\wt g_\ell - f_{\sigma_\ell^2}}^2_{P_{t_\ell}} \lesssim 
    \wt \ep^2\pa{R_\ell^2 + \si_\ell^2 \ln \pf{k'}{\wt \ep}}
    \lesssim \fc{\ep^2(\si_\ell^2+1)}{\ln(T+1)}.
\end{align}
We note 
\begin{align*}
    \ln \prc{\wt \ep} = 
    O\pa{\ln \fc{\pf{R_{\ell}}{\si_{\ell}}(M_2+n)}{\ep}} = O\pa{\ln \prc{\ep}}
\end{align*}
assuming that $\ep\le \min\bc{ \rc 2, \fc{\si_0}{R_0}, \rc{D} , \rc n , \al_{\min}}$ and noting $\si_0\le \si_\ell$, $\al_{\min}\le \rc k$. 
The degree of $\wt g_\ell$ is at most 
    \begin{align}
    \nonumber
    d_{\ell} &= 
    O\pa{\pa{\fc{R_\ell}{\si_\ell} + \sqrt{\ln \pf{k'}{\wt \ep}}}^6\ln \prc{\wt \ep}^4} \\
    \nonumber
    &= 
    O\pa{\pa{\frac{R_0}{\sigma_\ell} + \sqrt{\ln\pa{\pa{1 + \frac{R_0}{\sigma_\ell}}\frac{k}{ \alpha_{\min}}}} + \sqrt{\ln \pf{1}{\ep}}}^6 \ln \prc{\ep}^4} \\
    &= 
    O\pa{\pa{\ln \pf{1}{\ep}^3 + \pa{\frac{R_0}{\sigma_0}}^6}\ln\pa{\frac{1}{\ep}}^4}.\label{eq:dbound}
    \end{align}
    Now for an arbitrary warm start point $\wh \mu^{(\ell)}_i\in \cal C_\ell$,
    by \Cref{lem:lowdegree}, there exists a polynomial $\wt g_\ell = \sum_{|\mk|\le d_\ell} \wt b_\mk^{(i)} h_\mk (y-\wh \mu_i^{(\ell)})$ where $h_\mk = h_{\mk,\si_\ell^2}$ are the Hermite polynomials with variance $\si^2$, and  
    the coefficients satisfy
    \begin{align*}
        \vn{(\wt b_\mk^{(i)})_{|\mk|\le d_\ell}}
         = \ve{\wt g_\ell}_{\gamma_{\wh \mu_i,\sigma_\ell^2}} \leq \Di.
    \end{align*}

    But letting $d_\ell$ be the RHS of~\eqref{eq:dbound} with appropriate constants, the condition $d_\ell \geq (R_\ell/\sigma_\ell)^2$ is satisfied, and hence we can use~\Cref{lem:gengap} with parameters $\Di, \Rn$ both set to $\Di$; the implication is that given 
    \begin{align*}
        \samp = 
        {\ep'}^{-4}n\pa{k (\Di + 1)^2\ln \prc{\al_{\min}}}^2 
        \pa{n \ln \pf{k'\Nstep}{\de}}^{cd_\ell}
    \end{align*}
    samples for ${\ep'}^2 = \fc{\ep^2(\si_\ell+1)}{\ln (T+1)}$ and some universal constant $c$, we have the guarantee 
    \begin{align*}
    \ve{\wh g_\ell - f}_{L^2(P_{t_\ell})}^2 - 
    \ve{\wt g_\ell - f}_{L^2(P_{t_\ell})}^2 \leq  {\ep'}^2
\end{align*}
after excluding an event of probability at most $\fc{\de}{2\Nstep}$. 
Combining this with Equation~\eqref{eq:avalin}, we obtain
\begin{align*}
    \ve{\wh g_\ell - f}_{L^2(P_{t_\ell})}^2 \leq O({\ep'}^2) 
\end{align*}
as desired. 

Next, we handle \ref{i:ws}, i.e. show that $\cal C_{\ell-1}$ is a complete set of $R_{\ell-1}$-warm starts for $Q_0$. Note that when $\si_\ell$ changes by a constant factor, \ref{i:ws} is still satisfied with a modified constant, so we only have to update $\cal C_\ell$ each time $t_\ell+1$ halves (as done in \Cref{alg:mainalgo}). 
According to Lemma~\ref{lem:scorewarmstart}, it is sufficient to find an estimate $g$ for the score which satisfies 
$\ve{f-g}_{L^2(P_{t_\ell})}^2 
\le (R_0+\si_\ell)^2 \al_{\min}.$ 
But from our assumption we have 
${\ep'}^2 = \fc{\ep^2(\si_\ell+1)}{\ln (T+1)} \le (R_0+\si_\ell)^2\al_{\min}$,
 which means the polynomial $\wt g_\ell$ that we obtained for proving part \ref{i:s} in the induction already satisfies the required accuracy for updating the warm starts as well; hence we can use the same degree of polynomial for our regression here. Again we exclude an event of probability at most $\fc{\de}{2\Nstep}$. Note that the sample size required by \Cref{a:warm-starts}  is $O\pf{\ln (\Nstep/\de) k}{\al_{\min}}$, which is negligible compared to the number of samples needed for regression. This completes the induction step.

 Multiplying $N$ by the number of steps $\Nstep$ and dropping lower-order terms (recalling the assumption on $\ep$), we get that the total sample complexity is 
 \[
\samp^{\textup{total}} = 
\pa{n \ln \prc{\de}}^{O(d_\ell)}
= \pa{n \ln \prc{\de}}^{O\pa{\pa{\ln \pf{1}{\ep}^3 + \pa{\frac{R_0}{\sigma_0}}^6}\ln\pa{\frac{1}{\ep}}^4}}.
 \]  

Note that the regression problems that we solve to obtain score estimates in Algorithm~\ref{alg:mainalgo} can still use the same batch of samples, 
 because we apply a union bound. 
 However, we do need to use fresh samples each time for updating the warm starts, separate from the ones that we use for score estimates. This is because our uniform convergence bounds in~\Cref{lem:gengap} require the condition that the samples are independent from the randomness used in the construction of the Voronoi partition. 
\end{proof}

 \begin{proof}[Proof of \Cref{c:manifold}]
 This follows from \Cref{t:main} with $\al_{\min} = \fc{\ep}{C^l}$ and substituting\\
 $\min\bc{\fc{\ep}{C^l}, \rc2, \rc{R_0}, \rc n, \rc D}$ for $\ep$ in \Cref{t:main}, noting $\ln \pf{C^l}{\ep} = O\pa{l + \ln \prc{\ep}}$.
 \end{proof}

\section*{Acknowledgements}

The authors would like to thank Sitan Chen and Allen Liu for helpful discussions.
Jonathan Kelner's work on this project was partially supported by NSF Medium CCF-1955217 and NSF TRIPODS 1740751.

\bibliographystyle{apalike}
\bibliography{main.bib}

\begin{thebibliography}{}

\bibitem[Acharya et~al., 2017]{acharya2017sample}
Acharya, J., Diakonikolas, I., Li, J., and Schmidt, L. (2017).
\newblock Sample-optimal density estimation in nearly-linear time.
\newblock In {\em Proceedings of the Twenty-Eighth Annual ACM-SIAM Symposium on Discrete Algorithms}, pages 1278--1289. SIAM.

\bibitem[Anderson, 1982]{anderson1982reverse}
Anderson, B.~D. (1982).
\newblock Reverse-time diffusion equation models.
\newblock {\em Stochastic Processes and their Applications}, 12(3):313--326.

\bibitem[Arora and Kannan, 2005]{arora2005learning}
Arora, S. and Kannan, R. (2005).
\newblock Learning mixtures of separated nonspherical gaussians.
\newblock {\em Annals of Applied Probability}, pages 69--92.

\bibitem[Ashtiani et~al., 2018]{ashtiani2018nearly}
Ashtiani, H., Ben-David, S., Harvey, N., Liaw, C., Mehrabian, A., and Plan, Y. (2018).
\newblock Nearly tight sample complexity bounds for learning mixtures of gaussians via sample compression schemes.
\newblock {\em Advances in Neural Information Processing Systems}, 31.

\bibitem[Bakshi et~al., 2022]{bakshi2022robustly}
Bakshi, A., Diakonikolas, I., Jia, H., Kane, D.~M., Kothari, P.~K., and Vempala, S.~S. (2022).
\newblock Robustly learning mixtures of k arbitrary gaussians.
\newblock In {\em Proceedings of the 54th Annual ACM SIGACT Symposium on Theory of Computing}, pages 1234--1247.

\bibitem[Benjamini et~al., 1999]{benjamini1999noise}
Benjamini, I., Kalai, G., and Schramm, O. (1999).
\newblock Noise sensitivity of boolean functions and applications to percolation.
\newblock {\em Publications Math{\'e}matiques de l'Institut des Hautes {\'E}tudes Scientifiques}, 90:5--43.

\bibitem[Benton et~al., 2023]{benton2023linear}
Benton, J., De~Bortoli, V., Doucet, A., and Deligiannidis, G. (2023).
\newblock Linear convergence bounds for diffusion models via stochastic localization.
\newblock {\em arXiv preprint arXiv:2308.03686}.

\bibitem[Bietti et~al., 2023]{bietti2023learning}
Bietti, A., Bruna, J., and Pillaud-Vivien, L. (2023).
\newblock On learning gaussian multi-index models with gradient flow.
\newblock {\em arXiv preprint arXiv:2310.19793}.

\bibitem[Biroli and M{\'e}zard, 2023]{biroli2023generative}
Biroli, G. and M{\'e}zard, M. (2023).
\newblock Generative diffusion in very large dimensions.
\newblock {\em Journal of Statistical Mechanics: Theory and Experiment}, 2023(9):093402.

\bibitem[Chen et~al., 2023a]{chen2023improved}
Chen, H., Lee, H., and Lu, J. (2023a).
\newblock Improved analysis of score-based generative modeling: User-friendly bounds under minimal smoothness assumptions.
\newblock In {\em International Conference on Machine Learning}, pages 4735--4763. PMLR.

\bibitem[Chen et~al., 2023b]{chen2023score}
Chen, M., Huang, K., Zhao, T., and Wang, M. (2023b).
\newblock Score approximation, estimation and distribution recovery of diffusion models on low-dimensional data.
\newblock {\em arXiv preprint arXiv:2302.07194}.

\bibitem[Chen et~al., 2024]{chen2024learning}
Chen, S., Kontonis, V., and Shah, K. (2024).
\newblock Learning general gaussian mixtures with efficient score matching.

\bibitem[Chen and Eldan, 2022]{chen2022localization}
Chen, Y. and Eldan, R. (2022).
\newblock Localization schemes: A framework for proving mixing bounds for markov chains.
\newblock In {\em 2022 IEEE 63rd Annual Symposium on Foundations of Computer Science (FOCS)}, pages 110--122. IEEE.

\bibitem[Cole and Lu, 2024]{cole2024score}
Cole, F. and Lu, Y. (2024).
\newblock Score-based generative models break the curse of dimensionality in learning a family of sub-gaussian probability distributions.
\newblock {\em arXiv preprint arXiv:2402.08082}.

\bibitem[Cui et~al., 2023]{cui2023analysis}
Cui, H., Krzakala, F., Vanden-Eijnden, E., and Zdeborov{\'a}, L. (2023).
\newblock Analysis of learning a flow-based generative model from limited sample complexity.
\newblock {\em arXiv preprint arXiv:2310.03575}.

\bibitem[Dasgupta and Schulman, 2000]{dasgupta2000two}
Dasgupta, S. and Schulman, L.~J. (2000).
\newblock A two-round variant of em for gaussian mixtures.
\newblock In {\em Proceedings of the Sixteenth conference on Uncertainty in artificial intelligence}, pages 152--159.

\bibitem[De~Bortoli, 2022]{de2022convergence}
De~Bortoli, V. (2022).
\newblock Convergence of denoising diffusion models under the manifold hypothesis.
\newblock {\em arXiv preprint arXiv:2208.05314}.

\bibitem[Diakonikolas and Kane, 2020]{diakonikolas2020small}
Diakonikolas, I. and Kane, D.~M. (2020).
\newblock Small covers for near-zero sets of polynomials and learning latent variable models.
\newblock In {\em 2020 IEEE 61st Annual Symposium on Foundations of Computer Science (FOCS)}, pages 184--195. IEEE.

\bibitem[Diakonikolas et~al., 2017]{diakonikolas2017statistical}
Diakonikolas, I., Kane, D.~M., and Stewart, A. (2017).
\newblock Statistical query lower bounds for robust estimation of high-dimensional gaussians and gaussian mixtures.
\newblock In {\em 2017 IEEE 58th Annual Symposium on Foundations of Computer Science (FOCS)}, pages 73--84. IEEE.

\bibitem[Doss et~al., 2020]{doss2020optimal}
Doss, N., Wu, Y., Yang, P., and Zhou, H.~H. (2020).
\newblock Optimal estimation of high-dimensional location gaussian mixtures.
\newblock {\em arXiv preprint arXiv:2002.05818}.

\bibitem[El~Alaoui et~al., 2022]{el2022sampling}
El~Alaoui, A., Montanari, A., and Sellke, M. (2022).
\newblock Sampling from the sherrington-kirkpatrick gibbs measure via algorithmic stochastic localization.
\newblock In {\em 2022 IEEE 63rd Annual Symposium on Foundations of Computer Science (FOCS)}, pages 323--334. IEEE.

\bibitem[Eldan, 2013]{eldan2013thin}
Eldan, R. (2013).
\newblock Thin shell implies spectral gap up to polylog via a stochastic localization scheme.
\newblock {\em Geometric and Functional Analysis}, 23(2):532--569.

\bibitem[Gupte et~al., 2022]{gupte2022continuous}
Gupte, A., Vafa, N., and Vaikuntanathan, V. (2022).
\newblock Continuous lwe is as hard as lwe \& applications to learning gaussian mixtures.
\newblock In {\em 2022 IEEE 63rd Annual Symposium on Foundations of Computer Science (FOCS)}, pages 1162--1173. IEEE.

\bibitem[Hopkins and Li, 2018]{hopkins2018mixture}
Hopkins, S.~B. and Li, J. (2018).
\newblock Mixture models, robustness, and sum of squares proofs.
\newblock In {\em Proceedings of the 50th Annual ACM SIGACT Symposium on Theory of Computing}, pages 1021--1034.

\bibitem[Ibragimov and Sharakhmetov, 2001]{ibragimov2001best}
Ibragimov, R. and Sharakhmetov, S. (2001).
\newblock The best constant in the {R}osenthal inequality for nonnegative random variables.
\newblock {\em Statistics \& probability letters}, 55(4):367--376.

\bibitem[Johnson, 1973]{johnson1973approximation}
Johnson, D.~S. (1973).
\newblock Approximation algorithms for combinatorial problems.
\newblock In {\em Proceedings of the fifth annual ACM symposium on Theory of computing}, pages 38--49.

\bibitem[Kalai et~al., 2008]{kalai2008agnostically}
Kalai, A.~T., Klivans, A.~R., Mansour, Y., and Servedio, R.~A. (2008).
\newblock Agnostically learning halfspaces.
\newblock {\em SIAM Journal on Computing}, 37(6):1777--1805.

\bibitem[Kim and Guntuboyina, 2022]{kim2022minimax}
Kim, A.~K. and Guntuboyina, A. (2022).
\newblock Minimax bounds for estimating multivariate gaussian location mixtures.
\newblock {\em Electronic Journal of Statistics}, 16(1):1461--1484.

\bibitem[Klivans et~al., 2004]{klivans2004learning}
Klivans, A.~R., O'Donnell, R., and Servedio, R.~A. (2004).
\newblock Learning intersections and thresholds of halfspaces.
\newblock {\em Journal of Computer and System Sciences}, 68(4):808--840.

\bibitem[Klivans et~al., 2008]{klivans2008learning}
Klivans, A.~R., O'Donnell, R., and Servedio, R.~A. (2008).
\newblock Learning geometric concepts via gaussian surface area.
\newblock In {\em 2008 49th Annual IEEE Symposium on Foundations of Computer Science}, pages 541--550. IEEE.

\bibitem[Koehler et~al., 2022]{koehler2022statistical}
Koehler, F., Heckett, A., and Risteski, A. (2022).
\newblock Statistical efficiency of score matching: The view from isoperimetry.
\newblock {\em arXiv preprint arXiv:2210.00726}.

\bibitem[Kothari et~al., 2018]{kothari2018robust}
Kothari, P.~K., Steinhardt, J., and Steurer, D. (2018).
\newblock Robust moment estimation and improved clustering via sum of squares.
\newblock In {\em Proceedings of the 50th Annual ACM SIGACT Symposium on Theory of Computing}, pages 1035--1046.

\bibitem[Lee et~al., 2023]{lee2023convergence}
Lee, H., Lu, J., and Tan, Y. (2023).
\newblock Convergence of score-based generative modeling for general data distributions.
\newblock In {\em International Conference on Algorithmic Learning Theory}, pages 946--985. PMLR.

\bibitem[Li and Chen, 2024]{li2024critical}
Li, M. and Chen, S. (2024).
\newblock Critical windows: non-asymptotic theory for feature emergence in diffusion models.
\newblock {\em arXiv preprint arXiv:2403.01633}.

\bibitem[Liu and Li, 2022]{liu2022clustering}
Liu, A. and Li, J. (2022).
\newblock Clustering mixtures with almost optimal separation in polynomial time.
\newblock In {\em Proceedings of the 54th Annual ACM SIGACT Symposium on Theory of Computing}, pages 1248--1261.

\bibitem[Liu and Moitra, 2021]{liu2021settling}
Liu, A. and Moitra, A. (2021).
\newblock Settling the robust learnability of mixtures of gaussians.
\newblock In {\em Proceedings of the 53rd Annual ACM SIGACT Symposium on Theory of Computing}, pages 518--531.

\bibitem[Mei and Wu, 2023]{mei2023deep}
Mei, S. and Wu, Y. (2023).
\newblock Deep networks as denoising algorithms: Sample-efficient learning of diffusion models in high-dimensional graphical models.
\newblock {\em arXiv preprint arXiv:2309.11420}.

\bibitem[Moitra and Valiant, 2010]{moitra2010settling}
Moitra, A. and Valiant, G. (2010).
\newblock Settling the polynomial learnability of mixtures of gaussians.
\newblock In {\em 2010 IEEE 51st Annual Symposium on Foundations of Computer Science}, pages 93--102. IEEE.

\bibitem[Montanari, 2023]{montanari2023sampling}
Montanari, A. (2023).
\newblock Sampling, diffusions, and stochastic localization.
\newblock {\em arXiv preprint arXiv:2305.10690}.

\bibitem[Nagaev and Pinelis, 1978]{nagaev1978some}
Nagaev, S. and Pinelis, I. (1978).
\newblock Some inequalities for the distribution of sums of independent random variables.
\newblock {\em Theory of Probability \& Its Applications}, 22(2):248--256.

\bibitem[Nelson, 1967]{nelson1967dynamical}
Nelson, E. (1967).
\newblock {\em Dynamical theories of Brownian motion}, volume 101.
\newblock Princeton university press.

\bibitem[Oko et~al., 2023]{oko2023diffusion}
Oko, K., Akiyama, S., and Suzuki, T. (2023).
\newblock Diffusion models are minimax optimal distribution estimators.
\newblock {\em arXiv preprint arXiv:2303.01861}.

\bibitem[Pabbaraju et~al., 2024]{pabbaraju2024provable}
Pabbaraju, C., Rohatgi, D., Sevekari, A.~P., Lee, H., Moitra, A., and Risteski, A. (2024).
\newblock Provable benefits of score matching.
\newblock {\em Advances in Neural Information Processing Systems}, 36.

\bibitem[Qin and Risteski, 2023]{qin2023fit}
Qin, Y. and Risteski, A. (2023).
\newblock Fit like you sample: Sample-efficient generalized score matching from fast mixing markov chains.
\newblock {\em arXiv preprint arXiv:2306.09332}.

\bibitem[Regev and Vijayaraghavan, 2017]{regev2017learning}
Regev, O. and Vijayaraghavan, A. (2017).
\newblock On learning mixtures of well-separated gaussians.
\newblock In {\em 2017 IEEE 58th Annual Symposium on Foundations of Computer Science (FOCS)}, pages 85--96. IEEE.

\bibitem[Robbins, 1992]{robbins1992empirical}
Robbins, H.~E. (1992).
\newblock An empirical bayes approach to statistics.
\newblock In {\em Breakthroughs in Statistics: Foundations and basic theory}, pages 388--394. Springer.

\bibitem[Saha and Guntuboyina, 2020]{saha2020nonparametric}
Saha, S. and Guntuboyina, A. (2020).
\newblock On the nonparametric maximum likelihood estimator for gaussian location mixture densities with application to gaussian denoising.
\newblock {\em The Annals of Statistics}, 48(2):738--762.

\bibitem[Schwab and Zech, 2023]{schwab2023deep}
Schwab, C. and Zech, J. (2023).
\newblock Deep learning in high dimension: Neural network expression rates for analytic functions in ${L}^2(\mathbb{R}^d, \gamma_d)$.
\newblock {\em SIAM/ASA Journal on Uncertainty Quantification}, 11(1):199--234.

\bibitem[Shah et~al., 2023]{shah2023learning}
Shah, K., Chen, S., and Klivans, A. (2023).
\newblock Learning mixtures of gaussians using the ddpm objective.
\newblock {\em arXiv preprint arXiv:2307.01178}.

\bibitem[Sohl-Dickstein et~al., 2015]{sohl2015deep}
Sohl-Dickstein, J., Weiss, E., Maheswaranathan, N., and Ganguli, S. (2015).
\newblock Deep unsupervised learning using nonequilibrium thermodynamics.
\newblock In {\em International Conference on Machine Learning}, pages 2256--2265. PMLR.

\bibitem[Song and Ermon, 2019]{song2019generative}
Song, Y. and Ermon, S. (2019).
\newblock Generative modeling by estimating gradients of the data distribution.
\newblock In {\em Proceedings of the 33rd Annual Conference on Neural Information Processing Systems}.

\bibitem[Song et~al., 2020]{song2020score}
Song, Y., Sohl-Dickstein, J., Kingma, D.~P., Kumar, A., Ermon, S., and Poole, B. (2020).
\newblock Score-based generative modeling through stochastic differential equations.
\newblock In {\em International Conference on Learning Representations}.

\bibitem[Talagrand, 2010]{talagrand2010mean}
Talagrand, M. (2010).
\newblock {\em Mean field models for spin glasses: Volume I: Basic examples}, volume~54.
\newblock Springer Science \& Business Media.

\bibitem[Tang and Zhao, 2024]{tang2024score}
Tang, W. and Zhao, H. (2024).
\newblock Score-based diffusion models via stochastic differential equations--a technical tutorial.
\newblock {\em arXiv preprint arXiv:2402.07487}.

\bibitem[Titterington et~al., 1985]{titterington1985statistical}
Titterington, D.~M., Smith, A.~F., and Makov, U.~E. (1985).
\newblock Statistical analysis of finite mixture distributions.
\newblock {\em Chichester-New York: J. Willey \& Sons}, 646.

\bibitem[Vempala and Wang, 2004]{vempala2004spectral}
Vempala, S. and Wang, G. (2004).
\newblock A spectral algorithm for learning mixture models.
\newblock {\em Journal of Computer and System Sciences}, 68(4):841--860.

\bibitem[Vershynin, 2020]{vershynin2020high}
Vershynin, R. (2020).
\newblock High-dimensional probability.
\newblock {\em University of California, Irvine}, 10:11.

\bibitem[Vincent, 2011]{vincent2011connection}
Vincent, P. (2011).
\newblock A connection between score matching and denoising autoencoders.
\newblock {\em Neural computation}, 23(7):1661--1674.

\bibitem[Wibisono et~al., 2024]{wibisono2024optimal}
Wibisono, A., Wu, Y., and Yang, K.~Y. (2024).
\newblock Optimal score estimation via empirical bayes smoothing.
\newblock {\em arXiv preprint arXiv:2402.07747}.

\bibitem[Wu et~al., 2024]{wu2024theoretical}
Wu, Y., Chen, M., Li, Z., Wang, M., and Wei, Y. (2024).
\newblock Theoretical insights for diffusion guidance: A case study for gaussian mixture models.
\newblock {\em arXiv preprint arXiv:2403.01639}.

\end{thebibliography}


\appendix
\section{Proof of~\Cref{t:dm-kl}}
\label{s:dm-appendix}
We conclude the proof of~\Cref{t:dm-kl} from the analogous guarantee for the VP process.
\begin{thm}[Reverse KL guarantee for variance-preserving diffusion models {\cite[Theorem 2]{benton2023linear}}]\label{t:dm-kl-orig}
Let $0<t_1<t_2<\ldots < t_{\Nstep}=T$ and $\zeta_k = t_{k+1}-t_k$. Suppose $T\ge 1$.  
Let $P_t^{\textup{OU}}$ denote the distribution of the Ornstein-Uhlenbeck process run for time $t$ initialized from $P_0$, that is, the distribution of $e^{-t}X+ \sqrt{1-e^{-2t}}\cdot \xi$ where $X\sim P_0$ and $\xi\sim \cal N(0,I_d)$. 
Assume the following.
\begin{enumerate}
    \item 
    We have a score function estimate $s_t(x)$ for each $t=t_k$ such that 
    \[
    \E_{P_{t_k}^{\textup{OU}}}\vn{\nb \ln  p_{t_k}^{\textup{OU}}(x) - s_{t_k}(x)}^2 \le \ep_k^2.
    \]
    \item The data distribution has bounded second moment $M_2 = \E_{P_0}\vn{x}^2$.
    \item The 
    step size schedule satisfies
    $
\zeta_k \le \ka \min\{1,t_k\}
    $ (i.e. $\zeta_k\le \min\bc{\ka, \fc{\ka}{\ka+1} t_{k+1}}$)
\end{enumerate}
Let $\wh p_{t}$ denote the distribution of the exponential integrator discretization of the reverse process when initialized at $\wh P_T = \cal N(0,I_n)$, given by
\[
        \wh x_{t_k} = e^{\zeta_k} \wh x_{t_{k+1}} + 2(e^{\zeta_k}-1) s_{t_{k+1}}^x(\wh x_{t_{k+1}}) + \sqrt{e^{2\zeta_{k}}-1} \cdot \xi_{t_k}, \quad \xi_{t_k}\sim \cal N(0,I_n).  
\]
Then
\begin{align}
\label{e:dm-kl-orig}
\KL(p_{t_1}\|\wh p_{t_1}) 
\lesssim (n+M_2)e^{-2T} + \sumo k{\Nstep} \zeta_k \ep_k^2 + 
\ka n T + \ka^2 n\Nstep + \ka M_2.
\end{align}
\end{thm}
Note that as opposed to \cite{benton2023linear}, we index time in the forward direction.

\begin{proof}[Proof of Theorem~\ref{t:dm-kl}]
    Consider the following two sets of continuous processes,
    \begin{align*}
        dx_t &= -x_t \,dt + \sqrt 2 \,dW_t &
        dy_t &= dW_t
    \end{align*}
    with same initial distribution $x_0,y_0\sim P$. Let $P_t^x$ and $P_t^y$ be the distribution of the two processes at time $t$. 
    We claim that 
    \begin{align*}
        x_t &= e^{-t} y_{e^{2t}-1}.
    \end{align*}
    To see this, we check that $e^{-t} y_{e^{2t}-1}$ satisfies the equation for $x_t$:
    \begin{align*}
        d(e^{-t}y_{e^{2t}-1})
        &= -e^{-t} y_{e^{2t}-1} \,dt 
        + e^{-t} \,dy_{e^{2t}-1} \\
        &= -(e^{-t}y_{e^{2t}-1})\,dt
        + e^{-t} \sqrt{2e^{2t}} \,dW_t\\
        &= -(e^{-t}y_{e^{2t}-1})\,dt
        + \sqrt 2\,dW_t.
    \end{align*}
    By the change-of-variables formula,
    \begin{align*}
        p_t^x(x) &= e^{tn} p_{e^{2t}-1}(e^t x)\\
        \implies\nb \ln p_t^x(x) &= e^t \nb \ln p_{e^{2t}-1}(e^tx).
    \end{align*}
    Write $s_t^y=s_t$ for clarity. Defining $s_t^x$ to satisfy
    \begin{align*}
        s_t^x(x) &= e^t s_{e^{2t}-1}^y(e^tx),
    \end{align*}
    we have
    \begin{align*}
        \E_{P_t^x}\vn{\nb \ln p_t^x(x) - s_t^x(x)}^2 &= e^{2t} \E_{P_t^y} \vn{\nb \ln p_{e^{2t}}^y (y) - s_{e^{2t}-1}^y(y)}^2.
    \end{align*}
    Now define
    \begin{align*}
    t_k^x &= \fc{\ln (t_k+1)}2, &
    \zeta_k^x &= t_{k+1}^x - t_k^x,
    \end{align*}
    and consider the discrete processes defined backwards in time, 
    \begin{align*}
        \wh x_{t} &= e^\zeta \wh x_{t+h} + 2(e^\zeta-1) s_{t+\zeta}^x(\wh x_{t+\zeta}) + \sqrt{e^{2\zeta}-1} \cdot \xi_t^x  & (t,\zeta) &= (t_k^x,h_k^x)\\
        \wh y_{t} &= \wh y_{t+\zeta} + 
        2\ba{(t+\zeta-1) - \sqrt{(t-1)(t+\zeta-1)}}
        s_{t+\zeta}^y(\wh y_{t+\zeta}) + \sqrt{\zeta}\cdot  \xi_t& (t,\zeta) &= (t_k,\zeta_k),
    \end{align*}
    where $x_{t_{\Nstep}^x}\sim P_{t_{\Nstep}^x}^x$ and $y_{t_{\Nstep}} = e^{t_{\Nstep}^x} x_{t_{\Nstep}^x}$ (so that $y_{t_{\Nstep}}\sim P_{t_{\Nstep}}^y$), and we couple $\xi_{t_k^x}^x = \xi_{t_k}$.
    We can inductively check that 
    \begin{align*}
    \wh x_{t_k} &= e^{-t_k^x} \wh y_{t_k}.
    \end{align*}
    Indeed, if this holds for $k+1$, then
\begin{align*}
    \wh x_{t_k^x} &= 
    e^{\zeta_k^x} \wh x_{t_{k+1}^x} + 2(e^{\zeta_k^x}-1) s_{t_{k+1}^x} (\wh x_{t_{k+1}^x}) + \sqrt{e^{2\zeta_k^x}-1} \cdot \xi_t^x\\
    &= e^{\zeta_k^x} (e^{-t_{k+1}^x} \wh y_{t_{k+1}})
    + 2(e^{\zeta_k^x}-1) e^{t_{k+1}^x} s_{t_{k+1}}^y (\wh y_{t_{k+1}}) + \sqrt{e^{2\zeta_k^x}-1} \cdot \xi_t^x\\
    &= e^{-t_k^x}\pa{
        \wh y_{t_{k+1}} + 2(e^{\zeta_k^x}-1) e^{t_{k+1}^x+t_k^x} s_{t_{k+1}}^y (\wh y_{t_{k+1}}) + \sqrt{t_{k+1} - t_k} \cdot \xi_t
    }\\
    &= e^{-t_k^x}\pa{
        \wh y_{t_{k+1}} + 2
        \ba{(t_{k+1}-1) - \sqrt{(t_{k}-1)(t_{k+1}-1)}}
        s_{t_{k+1}}^y (\wh y_{t_{k+1}}) + \sqrt{t_{k+1} - t_k} \cdot \xi_t
    }\\
    &= e^{-t_k^x} \wh y_{t_{k}}.
\end{align*}
Hence, by Theorem~\ref{t:dm-kl-orig},
\begin{align*}
    \KL(p_{t_1}^y\|\wh p_{t_1}^y)
    = 
    \KL(p_{t_1}^x\|\wh p_{t_1}^x)
    &\lesssim
    (n+M_2)e^{-2t_{\Nstep}^x} + \sumo k{\Nstep} \zeta_k^x \cdot 
    e^{-2t_k^x}
    \cdot \ep_k^2 + 
\ka n t_{\Nstep}^x + \ka^2 n\Nstep + \ka M_2\\
&\lesssim
    \fc{n+M_2}{T+1} + \sumo k{\Nstep} \ln \pf{t_{k+1}^y+1}{t_k^y+1} \cdot \rc{t_k^y+1}
    \cdot \ep_k^2 + 
\ka n \ln (T+1) + \ka^2 n\Nstep + \ka M_2.
\end{align*}
Because $y_T = \sqrt{T+1} \cdot x_{\fc{\ln (T+1)}2}$, 
the initialization $x_{t_N^x}\sim \cal N(0, I)$ corresponds to $y_T\sim \cal N(0,(T+1)\cdot I)$.
The requirement on step sizes is
\begin{align*}
    \fc{\ln(t_{k+1}+1) - \ln (t_k+1)}{2} &\le \min\bc{\ka, \fc{\ka}{\ka+1}\cdot \fc{\ln (t_{k+1}+1)}{2}}\\
    \iff
    t_{k+1}+1 &\ge (t_{k+1}+1) \max\{e^{-2\ka}, (t_{k+1}+1)^{-\fc{\ka}{\ka+1}}\}.
\end{align*}
Note that 
\[\max\{e^{-2\ka}, (t+1)^{-\fc{\ka}{\ka+1}}\} = \begin{cases}
    e^{-2\ka}, &t\ge e^{2(\ka+1)}-1\\
    (t+1)^{-\ka}, & t\le e^{2(\ka+1)}-1.
\end{cases}\]
Hence for $\ka<1$, the number of steps required is $O\pf{\ln (T+1)}{\ka}$ to get to a constant and $\rc{\ka}\ln \prc{t_1}$ to get down to $t_1$ (as we need $s$ steps where $\pa{1-\fc{\ka}{\ka+1}}^s \lesssim \ln (1+t_1)\sim t_1$), for a total of $O\pa{\rc \ka \ln \pf{T+1}{t_1}}$. 
The last part follows from bounding every term and noting that $\sum_{k=1}^{\Nstep-1}\ln \pf{t_{k+1}+1}{t_k+1} \rc{t_k+1} \ep_{k+1}^2 \le \sum_{k=1}^{\Nstep-1} \ln \pf{t_{k+1}+1}{t_k+1} \max_{1\le k\le \Nstep-1} \rc{t_k+1} \ep_{k+1}^2\lesssim \ln \pf{T+1}{t_1+1} \fc{\ep^2}{\ln (T+1)}\le \ep^2$.
\end{proof}
\section{Inequalities}

\begin{thm}[Gaussian hypercontractivity, \cite{nelson1967dynamical}]\label{t:hypc}
    Let $q>p>1$ and $t$ be such that $e^{-t} = \fc{p-1}{q-1}$. Let $g:\R^n\to \R$ be a function such that $\ve{g}_{L^p(\ga)}<\iy$. 
    Then
\[
\ve{\sP_t g}_{L^q(\ga)}\le \ve{g}_{L^p(\ga)}.
\]
\end{thm}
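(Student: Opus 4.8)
The plan is to give the classical proof of Gaussian hypercontractivity via the logarithmic Sobolev inequality, following Gross. First I would reduce to the case of $g$ bounded and strictly positive: since $\ab{\sP_t g}\le \sP_t\ab g$ pointwise we may assume $g\ge 0$, then replace $g$ by $g+\de$ and let $\de\downarrow 0$ at the end, and since $\sP_s g$ is smooth for any $s>0$ all the differentiations below are justified (afterwards let the extra smoothing time go to $0$).

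The one nontrivial input is the Gaussian log-Sobolev inequality for $\ga$: for smooth $v$ with $\ve{v}_{L^2(\ga)}<\iy$,
\[
\Ent_\ga(v^2)\defeq \int v^2\ln v^2\,d\ga - \ve{v}_{L^2(\ga)}^2\ln\ve{v}_{L^2(\ga)}^2 \le 2\,\ve{\nb v}_{L^2(\ga)}^2.
\]
This may be taken as classical; if one wants it from scratch it follows from the Bakry--\'Emery curvature bound for the carr\'e du champ of $\sL$, namely $\Ga_2(v)=\ab{\Hess v}_{\textup{HS}}^2+\ab{\nb v}^2\ge \ab{\nb v}^2=\Ga_1(v)$, together with the representation $\Ent_\ga(v^2)=-\int_0^\iy \phi'(s)\,ds$ for $\phi(s)=\int \sP_s(v^2)\ln \sP_s(v^2)\,d\ga$ and a Gr\"onwall-type argument (or, alternatively, by tensorizing the one-dimensional case).

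Given the log-Sobolev inequality, the heart of the proof is Gross's differentiation trick. Fix $t$ with $e^{-t}=\fc{p-1}{q-1}$ and set $q(s)=1+(p-1)e^{2s}$, so $q(0)=p$ and, since $e^{2t}\ge e^t$, $q(t)\ge q$. Let $F(s)=\ve{\sP_s g}_{L^{q(s)}(\ga)}$ and write $u=\sP_s g$, $v=u^{q(s)/2}$. Using $\ddd s u=\sL u$, the identity $\int(\sL u)w\,d\ga=-\int\an{\nb u,\nb w}\,d\ga$, and the relations $\int u^{q}\ln u\,d\ga=\fc1q\Ent_\ga(v^2)+F^{q}\ln F$ and $\int u^{q-2}\ab{\nb u}^2\,d\ga=\fc4{q^2}\ve{\nb v}_{L^2(\ga)}^2$, a short computation gives
\[
q(s)\,\ddd s\ln F(s)=\fc{q'(s)}{q(s)\,F(s)^{q(s)}}\Ent_\ga(v^2)-\fc{4(q(s)-1)}{q(s)\,F(s)^{q(s)}}\ve{\nb v}_{L^2(\ga)}^2.
\]
Because $q'(s)=2(q(s)-1)$, the log-Sobolev inequality makes the right-hand side $\le 0$; hence $F$ is nonincreasing, and
\[
\ve{\sP_t g}_{L^q(\ga)}\le \ve{\sP_t g}_{L^{q(t)}(\ga)}=F(t)\le F(0)=\ve{g}_{L^p(\ga)},
\]
the first inequality being monotonicity of $L^r$ norms on the probability space $(\R^n,\ga)$ since $q\le q(t)$.

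The main obstacle is really the log-Sobolev inequality itself: once it is in hand everything else is a mechanical (if slightly fiddly) computation together with routine approximation. A self-contained route avoiding both Nelson's theorem and log-Sobolev is available — prove the two-point Bonami--Beckner inequality on $\bit$ (a one-variable calculus estimate), tensorize to $\bit^N$ by Minkowski's integral inequality, and transport to $\ga$ by applying the central limit theorem to $g\big(N^{-1/2}(\ep_1+\cdots+\ep_N)\big)$ with $\ep_i$ i.i.d.\ Rademacher — but there the delicate point becomes upgrading convergence in distribution to convergence of the relevant $L^q$ norms (a uniform-integrability argument). In either route one should note that the hypothesis $e^{-t}=\fc{p-1}{q-1}$ is strictly weaker than the sharp threshold $e^{-2t}\le\fc{p-1}{q-1}$ (as $\fc{p-1}{q-1}<1$ forces $e^{-2t}<e^{-t}$), so nothing needed for the applications is lost.
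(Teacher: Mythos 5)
Your proof is correct, but note that the paper does not actually prove this statement: it is quoted as Nelson's theorem and used as a black box (its only role is in \Cref{l:hc-poly}). What you have written out is the standard Gross argument — log-Sobolev for $\ga$ plus differentiation of $s\mapsto\ve{\sP_s g}_{L^{q(s)}(\ga)}$ along $q(s)=1+(p-1)e^{2s}$ — and the computation checks out: with $q'(s)=2(q(s)-1)$ the entropy term is exactly cancelled by the Dirichlet term under $\Ent_\ga(v^2)\le 2\ve{\nb v}_{L^2(\ga)}^2$, and your final two inequalities (monotonicity of $F$, then $L^{q}\le L^{q(t)}$ norms on a probability space since $q\le q(t)$) correctly bridge the gap between the paper's non-sharp hypothesis $e^{-t}=\fc{p-1}{q-1}$ and the sharp threshold $e^{-2t}\le\fc{p-1}{q-1}$. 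The one caveat worth flagging is normalization: your $q(s)$ and the log-Sobolev constant $2$ are calibrated to the standard Ornstein--Uhlenbeck semigroup with $\sP_s h_{\mk}=e^{-|\mk|s}h_{\mk}$, whereas the paper's $\sP_t$ is defined in Section 4 with generator $\sL f=-\rc{\si^2}\an{x,\nb f}+\De f$, for which the eigenvalues are $-|\mk|/\si^2$ and the log-Sobolev constant for $\ga_{\si^2}$ is $2\si^2$; the two rescalings cancel, so the statement (and your proof, with $q(s)=1+(p-1)e^{2s/\si^2}$) survives, but as written your argument matches the $\si=1$ convention that the paper itself implicitly adopts in the proof of \Cref{l:hc-poly}.
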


\begin{lem}\label{l:hc-poly}
Let $q>2$ and let $f$ be a polynomial of degree at most $d$. Then 
\[
\ve{f}_{L^q(\ga)}
\le (q-1)^d \ve{f}_{L^2(\ga)}.
\]
\end{lem}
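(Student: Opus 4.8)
The plan is to derive this directly from Gaussian hypercontractivity (\Cref{t:hypc}), using the observation that a polynomial of degree $\le d$ occupies only the first $d+1$ Hermite levels, so it can be written as $\sP_t$ applied to another polynomial of the same degree, to which the semigroup contraction applies. By the rescaling $x\mapsto \si x$, which preserves the degree of a polynomial and interchanges $\ga_{\si^2}$ with the standard Gaussian, it suffices to treat the case $\si=1$; so I will write $\ga=\ga_1$ and recall that $\sP_t h_\mk = e^{-t|\mk|}h_\mk$, with $\{h_\mk\}$ an orthonormal basis of $L^2(\ga)$.

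First I would expand $f = \sum_{|\mk|\le d} a_\mk h_\mk$, a finite sum since $\deg f\le d$, so that $\ve{f}_{L^2(\ga)}^2 = \sum_{|\mk|\le d} a_\mk^2$. Then set $t = \ln(q-1)$, which is positive since $q>2$, and define $g = \sum_{|\mk|\le d} (q-1)^{|\mk|} a_\mk h_\mk$, another polynomial of degree $\le d$. By linearity and the eigenvalue relation, together with $e^{-t}=(q-1)^{-1}$, we have $\sP_t g = \sum_{|\mk|\le d} e^{-t|\mk|}(q-1)^{|\mk|} a_\mk h_\mk = f$.

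Next I would apply \Cref{t:hypc} with $p=2$: the required condition $e^{-t} = \frac{p-1}{q-1} = \frac1{q-1}$ is exactly our choice of $t$, so
\[
\ve{f}_{L^q(\ga)} = \ve{\sP_t g}_{L^q(\ga)} \le \ve{g}_{L^2(\ga)}.
\]
Finally, using that $|\mk|\le d$ on the support of the sum,
\[
\ve{g}_{L^2(\ga)}^2 = \sum_{|\mk|\le d} (q-1)^{2|\mk|} a_\mk^2 \le (q-1)^{2d}\sum_{|\mk|\le d} a_\mk^2 = (q-1)^{2d}\ve{f}_{L^2(\ga)}^2,
\]
which yields $\ve{f}_{L^q(\ga)}\le (q-1)^d \ve{f}_{L^2(\ga)}$, as desired.

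There is essentially no obstacle; the only point worth flagging is the legitimacy of ``inverting'' $\sP_t$ on $f$, which is unproblematic precisely because $f$ has only finitely many nonzero Hermite coefficients, so $g$ is a bona fide polynomial lying in every $L^p(\ga)$ and $\sP_t g = f$ is an honest identity of polynomials. (If one prefers to avoid the scaling reduction, the same argument runs verbatim with $\sL_{\si^2}$, taking $t = \si^2\ln(q-1)$ so that $e^{-t|\mk|/\si^2} = (q-1)^{-|\mk|}$.)
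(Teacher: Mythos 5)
Your proof is correct and is essentially identical to the paper's: both write $f$ in the Hermite basis, set $g=\sum_{|\mk|\le d}(q-1)^{|\mk|}a_\mk h_\mk$ so that $f=\sP_t g$ with $e^{-t}=\frac{1}{q-1}$, apply hypercontractivity with $p=2$, and bound $\ve{g}_{L^2(\ga)}\le(q-1)^d\ve{f}_{L^2(\ga)}$. Your extra remarks on the scaling reduction and the legitimacy of inverting $\sP_t$ on a finite Hermite expansion are fine but add nothing beyond what the paper implicitly assumes.
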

\begin{proof}
    Write $f=\sum_{|\mk|\le d} a_{\mk} h_{\mk}$ in the Hermite basis. Then $f=\sP_tg$ where $g=\sum_{|\mk|\le d} e^{|\mk|t} a_{\mk} h_{\mk}$. By Gaussian hypercontractivity (\Cref{t:hypc}), choosing $t$ such that $e^{-t} = \fc{1}{q-1}$,
    \begin{align} 
    \ve{f}_{L^{q}(\ga)}
    = 
    \ve{\sP_t g}_{L^{q}(\ga)}
    \le \ve{g}_{L^2(\ga)} 
    \le  e^{td} \ve{f}_{L^2(\ga)}
     = (q-1)^d \ve{f}_{L^2(\ga)}.
    \end{align}
\end{proof}

The following tells us how the $L^2$ norm changes when we change from one Gaussian to another, for a bounded degree polynomial.
\begin{lem}\label{l:com-poly}
Let $f$ be a polynomial of degree at most $d$ and let $\nu$ be a measure such that for all $a\ge 0$, $\ve{\dd{\nu}{\ga}}_{L^{1+a}(\ga)}\le e^{\fc{aR^2}2}$ (e.g. from Lemma~\ref{l:chi-a}, $Q\convolve\cal N(0,I_n)$ where $Q$ is supported on $B_R(0)$). Then 
\[
\ve{f}_{\nu}^2 \le 
\ve{f}_\ga^2 e^{2\sqrt d R}.
\]
\end{lem}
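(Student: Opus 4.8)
The plan is to expand $f$ in the Hermite basis and control the $L^2(\nu)$ norm by a careful application of Hölder's inequality together with the hypercontractive estimate \Cref{l:hc-poly}. Write $f = \sum_{|\mk|\le d} a_\mk h_\mk$. The idea is to split $\ve f_\nu^2 = \int f^2\,d\nu = \int f^2 \dd\nu\ga\,d\ga$ by Hölder with a conjugate pair $(1+\tfrac1a, 1+a)$ for a parameter $a>0$ to be optimized at the end. This gives
\[
\ve f_\nu^2 \le \ve{f^2}_{L^{1+\frac1a}(\ga)} \cdot \ve{\dd\nu\ga}_{L^{1+a}(\ga)}
\le \ve{f}_{L^{2(1+\frac1a)}(\ga)}^2 \cdot e^{\frac{aR^2}2},
\]
where the last step uses the hypothesis on $\dd\nu\ga$. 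Since $f$ has degree at most $d$, \Cref{l:hc-poly} applied with $q = 2(1+\tfrac1a) = \tfrac{2(a+1)}{a} > 2$ yields
\[
\ve{f}_{L^{q}(\ga)}^2 \le (q-1)^{2d}\,\ve f_\ga^2 = \pf{a+2}{a}^{2d}\ve f_\ga^2.
\]
Combining, $\ve f_\nu^2 \le \pf{a+2}{a}^{2d} e^{\frac{aR^2}2}\ve f_\ga^2$.

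It then remains to optimize over $a>0$. Using $\pf{a+2}a^{2d} = \pa{1+\tfrac2a}^{2d} \le e^{4d/a}$, we get $\ve f_\nu^2 \le \exp\pa{\tfrac{4d}a + \tfrac{aR^2}2}\ve f_\ga^2$; choosing $a = \tfrac{2\sqrt 2\,\sqrt d}{R}$ (balancing the two exponents) gives exponent $2\sqrt 2\,\sqrt d\,R$, which is slightly worse than the claimed $2\sqrt d\,R$. To recover the stated constant one should instead not discard the $(q-1)$ versus $(1+2/a)$ slack prematurely: keep the bound in the form $\exp\!\big(2d\ln(1+\tfrac2a) + \tfrac{aR^2}{2}\big)$ and note that for the relevant regime one may take $a$ large (so $\ln(1+\tfrac2a)\approx \tfrac2a$) — more carefully, one checks that the function $a\mapsto 2d\ln(1+2/a) + aR^2/2$ is minimized at a value where the bound is at most $2\sqrt d R$ after using $\ln(1+x)\le x$; alternatively, absorb constants by rescaling, since the statement is used only up to the exponential order. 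The cleanest route is: by $\ln(1+2/a) \le 2/a$ we have the bound $\exp(4d/a + aR^2/2)$, and then one slightly sharpens by instead writing $(q-1)^{2d}$ with $q-1 = 1 + 2/a$ and choosing $a$ so that $2d/a = \sqrt d R$, i.e. $a = 2\sqrt d/R$, giving $\exp(\sqrt d R)\cdot\exp(\sqrt d R) = \exp(2\sqrt d R)$ — wait, this needs $2d\ln(1+2/a)\le \sqrt d R$, which holds once $4d/a \le \sqrt d R$, i.e. $a \ge 4\sqrt d/R$; taking $a = 4\sqrt d/R$ gives $aR^2/2 = 2\sqrt d R$ and $4d/a = \sqrt d R$, total $3\sqrt d R$. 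So some honest bookkeeping is needed; the correct optimization gives exponent $2\sqrt 2\sqrt d R$ in the crude version, and the factor $2\sqrt d R$ in the lemma should be read as the leading-order optimized constant — I would present the Hölder-plus-hypercontractivity computation and then simply optimize the resulting elementary one-variable expression.

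The only genuine content is the pairing of Hölder's inequality (to trade off against the moment growth of $\dd\nu\ga$) with hypercontractivity (to control the inflated $L^q(\ga)$ norm of the fixed-degree polynomial $f$ by its $L^2(\ga)$ norm). The main obstacle — really just bookkeeping — is tuning the Hölder exponent to land the claimed constant $2\sqrt d R$ in the exponent; this is a routine one-dimensional optimization of $a \mapsto cd/a + aR^2/2$, whose minimum is $\sqrt{2c}\,\sqrt d\,R$, and one picks the constant $c$ coming from the bound $(1+2/a)^{2d}\le e^{cd/a}$ accordingly (the statement is robust to the precise constant, as it is always invoked with $R\ge 1$ and $d$ large).
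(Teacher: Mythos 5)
Your proposal is correct and follows exactly the paper's route: H\"older's inequality with conjugate exponents $(1+\tfrac1a,1+a)$ to introduce $\ve{\dd{\nu}{\ga}}_{L^{1+a}(\ga)}$, then \Cref{l:hc-poly} to pull $\ve{f}_{L^{2(1+1/a)}(\ga)}$ back to $\ve{f}_{L^2(\ga)}$, then a one-variable optimization. The constant you agonized over is a real discrepancy, but it lies in the paper, not in you: the paper's final display uses $e^{2d/p}$ where the \emph{squared} norm requires $e^{4d/p}$, and with that corrected the optimized exponent is $2\sqrt2\,\sqrt d\,R$ (your computation), not $2\sqrt d\,R$; since the lemma is only ever invoked up to unspecified constants in exponents (e.g.\ in \eqref{e:change-ga} and \Cref{lem:momentboundone}), this is harmless, and you should simply state the lemma with the constant your optimization actually yields rather than contort the bookkeeping.
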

Note this works for $\R^n$-valued polynomials as well, since in this case $\ve{f}_\nu^2 = \sumo in \ve{f_i}_\nu^2$.

\begin{proof}
By H\"older's inequality and the given assumption,
    \begin{align*}
        \int_{\R^n} |f|^2 \,d\nu 
        &= \int_{\R^n} |f|^2 \dd{\nu}{\ga} \,d\ga \le \ve{f}_{L^{2\pa{1+\rc p}}(\ga)}^2 \ve{\dd{\nu}{\ga}}_{L^{p+1}(\ga)}
        \le \ve{f}_{L^{2\pa{1+\rc p}}(\ga)}^2 e^{\fc{pR^2}2}.
    \end{align*} 
    By \Cref{l:hc-poly} with $q=2\pa{1+\rc p}$, 
\begin{align}\label{e:hc-deg-d}
    \ve{f}_{L^{2\pa{1+\rc p}}(\ga)}
    \le\pa{1+\fc 2p}^d \ve{f}_{L^2(\ga)}
    \le e^{\fc{2d}p} \ve{f}_{L^2(\ga)}. 
\end{align}

    Now take $p=\fc{2\sqrt{d}}R$ to get 
    \begin{align*}
\ve{f}_\nu^2 \le e^{\fc{2d}{p} + \fc{pR^2}2} \ve{f}_{\ga}^2 = e^{2\sqrt d R}\ve{f}_{\ga}^2.         
    \end{align*}
\end{proof}

\begin{lem}\label{l:chi-a}
Suppose $P=Q\convolve\cal N(0,I_n)$, where $Q$ is supported on $B_R(0)$. Then for all $a\ge 0$,
\[
\ve{\dd{P}{\ga}}_{L^{1+a}(\ga)} = 
\pa{\int \pa{\dd P{\ga}}^{1+a}d\ga}^{\rc{1+a}}\le \exp\pf{aR^2}2
\quad \text{and}
\quad 
\ve{\dd{\ga}{P}}_{L^{1+a}(P)}\le \exp\pf{aR^2}2
\]
\end{lem}
\begin{proof}
    By convexity, it suffices to consider when $P=\cal N(\mu, I_n)$ with $\ve{\mu}\le R$. Let $Z=(2\pi)^{n/2}$ be the normalizing constant of the standard Gaussian. Then 
\begin{align*}
    \pa{\int \pa{\dd P{\ga}}^{1+a}d\ga}^{\rc{1+a}}
    &= \rc Z \int_{\R^n} 
    \pf{e^{-\fc{\ve{x}^2}2}}{e^{-\fc{\ve{x-\mu}^2}2}}^a e^{-\fc{\ve{x-\mu}^2}2} \dx\\
    &= \rc Z \int_{\R^n} e^{-\fc{\ve{x}^2}2 - (a+1)\an{x,\mu} + (a+1)\fc{\ve{\mu}^2}2}\\
    &= \rc Z\int_{\R^n} e^{-\fc{\ve{x+(a+1)\mu}^2}2 + \fc{a(a+1)\ve{\mu}^2}2}\dx
    = e^{\fc{a(a+1)\ve{\mu}^2}2}
    \le e^{\fc{a(a+1)R^2}2}.
\end{align*}
Raising to the power $\rc{a+1}$ then gives the first result in this case. For $P=\cal N(\mu, I_n)$ the same inequality follows from symmetry.
\end{proof}

\section{Bounding $TV(P_0, P_{\sigma^2})$}
Note that Algorithm~\ref{alg:mainalgo} ultimately samples from $P_{t_1}$ for some positive time $t_1$ close to zero. Hence, to prove a TV guarantee on the distribution of the sample with respect to the target mixture $P_0$, we need to bound  $TV\pa{P_{t_1}, P_{0}}$. Fortunately,
this task is convenient using the Gaussian-convolution structure hidden in $P_0$ and $P_{t_1}$, by upper bounding their KL divergence. 
\begin{lem}\label{lem:tvguarantee}
    We have
    \[
\TV(P_0, P_{\si^2}) \le \rc{\sqrt 2} \fc{\si^2\sqrt n}{\si_0^2}.
    \]
\end{lem}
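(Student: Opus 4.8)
The plan is to reduce to the case of two centered Gaussians with the same mean and then apply Pinsker's inequality together with the explicit Gaussian KL formula.

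First, observe that since the VE forward process is Brownian motion, $P_{\si^2} = P_0 \convolve \cal N(0,\si^2 I_n)$, so $P_0 = Q_0 \convolve \cal N(0,\si_0^2 I_n)$ while $P_{\si^2} = Q_0 \convolve \cal N(0,(\si_0^2+\si^2)I_n)$; both are mixtures, over the \emph{same} $\mu\sim Q_0$, of Gaussians with identical means and differing variances. Hence, by the data-processing inequality for total variation (pushing $Q_0\otimes \cal N(0,\si_0^2 I_n)$ and $Q_0\otimes\cal N(0,(\si_0^2+\si^2)I_n)$ forward under the map $(\mu,z)\mapsto \mu+z$), or equivalently by joint convexity of $\TV$ under mixtures, $\TV(P_0,P_{\si^2}) \le \mathbb{E}_{\mu\sim Q_0}\TV(\cal N(\mu,\si_0^2 I_n),\cal N(\mu,(\si_0^2+\si^2)I_n)) = \TV(\cal N(0,\si_0^2 I_n),\cal N(0,(\si_0^2+\si^2)I_n))$ by translation invariance.

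Next, apply Pinsker's inequality: with $\tau^2 := \si_0^2+\si^2$, $\TV(\cal N(0,\si_0^2 I_n),\cal N(0,\tau^2 I_n)) \le \sqrt{\tfrac12 \KL(\cal N(0,\si_0^2 I_n)\,\|\,\cal N(0,\tau^2 I_n))}$. The KL tensorizes across coordinates, and the one-dimensional formula gives $\KL = \tfrac n2\big(\tfrac{\si_0^2}{\tau^2} - 1 - \ln\tfrac{\si_0^2}{\tau^2}\big) = \tfrac n2\big(\ln(1+u) - \tfrac{u}{1+u}\big)$, where $u := \si^2/\si_0^2 \ge 0$. An elementary inequality closes the estimate: the function $u\mapsto \ln(1+u) - \tfrac{u}{1+u} - \tfrac{u^2}{2}$ vanishes at $u=0$ and has derivative $\tfrac{u}{(1+u)^2} - u = u\big(\tfrac1{(1+u)^2}-1\big)\le 0$ for $u\ge 0$, so $\ln(1+u) - \tfrac u{1+u} \le \tfrac{u^2}2$ and therefore $\KL \le \tfrac{nu^2}{4} = \tfrac{n\si^4}{4\si_0^4}$. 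Plugging into Pinsker gives $\TV(P_0,P_{\si^2}) \le \sqrt{\tfrac{n\si^4}{8\si_0^4}} = \tfrac{1}{2\sqrt2}\cdot\tfrac{\si^2\sqrt n}{\si_0^2} \le \tfrac1{\sqrt2}\cdot\tfrac{\si^2\sqrt n}{\si_0^2}$, which is the claim (in fact with a slightly better constant, so a cruder bound on $\ln(1+u) - u/(1+u)$ would also suffice).

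There is essentially no obstacle here; it is a routine computation. The only points requiring a little care are (i) choosing the direction of the KL divergence so that $\si_0^2/\tau^2 \le 1$ and the logarithmic term carries the favorable sign, and (ii) the second-order bound on $\ln(1+u) - u/(1+u)$. One could alternatively avoid Pinsker and bound the $L^1$ distance of the one-dimensional densities directly (Scheffé), but the KL route is cleanest given that it tensorizes trivially.
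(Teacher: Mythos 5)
Your proof is correct and follows essentially the same route as the paper's: reduce to a single pair of centered Gaussians via joint convexity of TV, then apply Pinsker together with the explicit Gaussian KL formula and a second-order bound on the resulting expression, yielding $\KL\le \tfrac{n}{4}\si^4/\si_0^4$. The only cosmetic difference is the direction in which you compute the KL divergence (the paper's bound works in either direction), and your constant is in fact slightly sharper.
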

\begin{proof}
    Note
    \begin{align*}
        \KL(\cal N(0,\si_0^2I_n) \| \cal N(0,(\si_0^2+\si^2)I_n)
        &\le \rc 2 
        \ba{-n \ln \fc{\si_0^2+\si^2}{\si_0^2} - n + n \fc{\si_0^2+\si^2}{\si_0^2}}\\
        &\le \fc n2
        \ba{- \fc{\si^2}{\si_0^2} + \rc 2\fc{\si^4}{\si_0^4} + \fc{\si^2}{\si_0^2}}\le \fc n4\fc{\si^4}{\si_0^4}.
    \end{align*}
    By Pinsker's inequality, 
    \[\TV(\cal N(0,\si_0^2I_n) , \cal N(0,(\si_0^2+\si^2)I_n) )\le \sqrt{2\KL(\cal N(0,\si_0^2I_n) \| \cal N(0,(\si_0^2+\si^2)I_n)}\le \rc{\sqrt 2} \fc{\si^2\sqrt n}{\si_0^2}.\] The result follows from joint convexity of TV distance.
\end{proof}


\section{Sub-exponential random variables}\label{sec:subexponential}
\begin{definition}\label{def:subexp}
    We say centered random variable $X$ is $(v^2, \alpha)$ sub-exponential for $v,\alpha > 0$ if
    \begin{align*}
        \E e^{\lambda X} \leq e^{\frac{\lambda^2 v^2}{2}}, \forall \lambda: \quad \ab{\lambda} \leq \alpha^{-1}.
    \end{align*}
    We call any random variable sub-exponential if its centered version is sub-exponential.
\end{definition}

Sub-exponential random variables obey a desirable concentration of measure due to their fast decaying tails. This further implies their moments have a slow growth rate, a property that is of interest.

\begin{pr}[Sub-exponential properties~\cite{vershynin2020high}]\label{fact:subexp}
    The following properties are equivalent:
    \begin{itemize}
        \item $X$ is $((c_0v)^2,c_0v)$-sub-exponential for some universal constant $c_0$.
        \item $X$ satisfies the tail bound $\mathbb P(X \geq t) \leq 2\exp\pa{-c_1t/v}$ for universal constant $c_1$. 
        \item For all $p \geq 1$, the $p$th moment of $X$ is bounded as $\ve{X}_p \leq c_2 v p$ for universal constant $c_2$.
    \end{itemize}
\end{pr}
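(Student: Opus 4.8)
The plan is to prove the three properties equivalent by establishing a cycle of implications, following the standard argument (see \cite[Prop.~2.7.1]{vershynin2020high}); throughout I would allow the universal constants to change from line to line and use the centering convention $\E X = 0$ from Definition~\ref{def:subexp} wherever it is needed. So it suffices to show: (MGF bound) $\Rightarrow$ (tail bound) $\Rightarrow$ (moment bound) $\Rightarrow$ (MGF bound).

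First I would deduce the tail bound from the MGF bound by the Chernoff method. For $0 < \lambda \le \alpha^{-1}$, Markov's inequality gives $\Pj(X \ge t) \le e^{-\lambda t}\,\E e^{\lambda X} \le e^{-\lambda t + \lambda^2 v^2/2}$, and optimizing the exponent over the admissible range yields $\Pj(X \ge t) \le \exp\!\big(-c\min\{t^2/v^2,\, t/\alpha\}\big)$ (take $\lambda = t/v^2$ when $t \le v^2/\alpha$ and $\lambda = \alpha^{-1}$ otherwise); applying the same to $-X$ controls the left tail. With $\alpha \asymp v$, the quantity $\min\{t^2/v^2, t/\alpha\}$ equals $t/\alpha \asymp t/v$ once $t \gtrsim v$, which is exactly the claimed decay, while for $t \lesssim v$ the bound $2e^{-c_1 t/v}$ is vacuous as soon as $c_1$ is chosen small enough that $2e^{-c_1}\ge 1$.

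Next, integrating the tail bound against $\Pj(|X|\ge t)$ gives the moment bound: $\E|X|^p = \int_0^\infty p t^{p-1}\Pj(|X|\ge t)\,dt \le 2\Gamma(p+1)(v/c_1)^p$, so by Stirling $\ve{X}_p = (\E|X|^p)^{1/p} \le 2^{1/p}\Gamma(p+1)^{1/p}(v/c_1) \le c_2 v p$. Finally, for (moment bound) $\Rightarrow$ (MGF bound), I would expand the exponential, use $\E X = 0$ to kill the linear term, and apply $k! \ge (k/e)^k$: $\E e^{\lambda X} \le 1 + \sum_{k\ge 2}\frac{|\lambda|^k (c_2 v k)^k}{k!} \le 1 + \sum_{k\ge 2}(e c_2 v |\lambda|)^k$, which for $|\lambda| \le (2ec_2 v)^{-1}$ is at most $1 + 2(ec_2 v\lambda)^2 \le \exp\!\big(2(ec_2 v)^2\lambda^2\big)$ — the required form with $v'\asymp v$ and $\alpha'\asymp v$.

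I do not expect any substantive obstacle here: this is a textbook fact recorded for completeness, and in a final writeup one could simply cite \cite{vershynin2020high}. The only points requiring a little care are (i) tracking the universal constants around the cycle so that the constants in the statement come out absolute, and (ii) honoring the centering convention — the linear Taylor term must vanish in the last implication, and the tail and moment bounds should be read for $|X|$ (equivalently, for the centered variable) so that the one-sided Chernoff estimates line up with the two-sided statements.
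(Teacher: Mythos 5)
The paper gives no proof of this proposition at all---it is stated as a fact imported from \cite{vershynin2020high}---so there is nothing to compare against; your cycle of implications (MGF $\Rightarrow$ tail via Chernoff, tail $\Rightarrow$ moments via integrating the tail, moments $\Rightarrow$ MGF via Taylor expansion with the linear term killed by centering) is exactly the standard argument from the cited source and is correct. Your side remarks are also the right ones to make: the constants do close up around the cycle, and the one-sided tail bound in the statement as written should indeed be read two-sidedly (for $|X|$) for the moment implication to go through.
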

By equivalent, we mean that given one of the properties, each of the other properties holds for some constant depending on the original constant. In particular, the second property implies that the square of a $v$-sub-Gaussian variable is $(O(v^2),O(v))$-sub-exponential. Another important property of sub-exponential variables is that they behaves nicely under summation.

\begin{pr}[Adding sub-exponential variables]
\label{p:sum-subexp-ind}
    Given independent $(v_i^2,\al_i)$-sub-exponential random variables $X_i$ for $1\le i\le n$, 
    $\sumo in X_i$ is $\pa{\sumo in v_i^2 , \max_{1\le i\le n}\al_i}$-sub-exponential.
\end{pr}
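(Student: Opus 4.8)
<br>

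The final statement to prove is Proposition~\ref{p:sum-subexp-ind}: if $X_1,\dots,X_n$ are independent with $X_i$ being $(v_i^2,\alpha_i)$-sub-exponential, then $\sum_{i=1}^n X_i$ is $\bigl(\sum_{i=1}^n v_i^2,\ \max_i \alpha_i\bigr)$-sub-exponential. This is a standard MGF computation, and the plan is to verify the defining inequality of Definition~\ref{def:subexp} directly.

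\textbf{Plan.} First, recall that each $X_i$ is centered (the definition applies to the centered version, and by independence the centered version of the sum is the sum of the centered versions, so we may assume all $X_i$ are centered). Fix $\lambda$ with $|\lambda| \le \bigl(\max_{1\le i\le n}\alpha_i\bigr)^{-1}$. Then for every $i$ we have $|\lambda| \le \alpha_i^{-1}$, so the sub-exponential bound for $X_i$ applies: $\E e^{\lambda X_i} \le e^{\lambda^2 v_i^2/2}$. By independence,
\[
\E e^{\lambda \sum_{i=1}^n X_i} = \prod_{i=1}^n \E e^{\lambda X_i} \le \prod_{i=1}^n e^{\lambda^2 v_i^2/2} = e^{\lambda^2 \left(\sum_{i=1}^n v_i^2\right)/2}.
\]
Since this holds for all $\lambda$ in the stated range, $\sum_i X_i$ is $\bigl(\sum_i v_i^2, \max_i \alpha_i\bigr)$-sub-exponential by definition.

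\textbf{Main obstacle.} There is essentially no obstacle here: this is a routine application of the independence-multiplicativity of MGFs, and the only thing to be careful about is the domain of validity of $\lambda$ — one needs $|\lambda|\le \alpha_i^{-1}$ for \emph{every} $i$ simultaneously, which is exactly why the parameter $\alpha$ of the sum is $\max_i \alpha_i$ (the most restrictive constraint) rather than something smaller. The variance-type parameters simply add because the exponents add. One should also note at the outset that the reduction to the centered case is legitimate: $\sum_i (X_i - \E X_i) = \sum_i X_i - \E\sum_i X_i$, and independence is preserved under deterministic shifts, so the centered sum is a sum of independent centered sub-exponentials with the same parameters.
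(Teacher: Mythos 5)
Your proof is correct and is essentially the same as the paper's: both rest on the observation that $|\lambda|\le(\max_i\alpha_i)^{-1}$ puts $\lambda$ in the valid range for every $X_i$, after which independence factors the MGF and the exponents add. The only cosmetic difference is that the paper reduces to $n=2$ by induction while you multiply all $n$ factors at once.
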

\begin{proof}
By induction, it suffices to prove this for the case $n=2$. 
    Our assumptions imply that $\E e^{\lambda X_1} \leq e^{\frac{\lambda^2 v_1^2}{2}}$ and $\E e^{\lambda X_2} \leq e^{\frac{\lambda^2 v_2^2}{2}}$
    for all $\lambda$ such that $\ab{\lambda} \leq \min\{\alpha_1^{-1},\alpha_2^{-1}\}=\max\{\alpha_1, \alpha_2\}^{-1}$.
    By the independence of $X_1$ and $X_2$, we therefore have
 \[
 \E e^{\lambda (X_1+X_2)}
 =
 \E e^{\lambda X_1}\E e^{\lambda X_2}
  \leq e^{\frac{\lambda^2 v_1^2}{2}}  e^{\frac{\lambda^2 v_2^2}{2}}
	= e^{\frac{\lambda^2 \left(v_1^2 + v_2^2\right)}{2}}    
    \]
    for all such $\lambda$, so $X_1+X_2$ is $(v_1^2 + v_2^2, \max\{\alpha_1, \alpha_2\})$, as claimed.
\end{proof}
When the random variables are not independent, we accrue an extra factor of the number of terms.
\begin{pr}\label{p:sum-subexp}
    Given $(v_i^2,\al_i)$-sub-exponential random variables $X_i$ for $1\le i\le n$ (not necessarily independent), 
    $\sumo in X_i$ is $\pa{n\cdot \sumo in v_i^2 , n \cdot \max_{1\le i\le n}\al_i}$-sub-exponential.
\end{pr}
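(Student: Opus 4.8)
The statement is Proposition \ref{p:sum-subexp}: if each $X_i$ is $(v_i^2, \al_i)$-sub-exponential (not necessarily independent), then $\sum_{i=1}^n X_i$ is $(n\sum_{i=1}^n v_i^2,\ n\max_i \al_i)$-sub-exponential. The natural approach is to bound the MGF of the sum directly, using a convexity/H\"older trick to split the joint expectation of $e^{\la \sum_i X_i}$ into a product of individual MGFs at an inflated parameter, and then invoke the individual sub-exponential bounds. Concretely, I would first recall (or simply use) the generalized H\"older inequality: for the uniform weights $1/n$,
\[
\E\,e^{\la \sum_{i=1}^n X_i} = \E \prod_{i=1}^n e^{\la X_i} \le \prod_{i=1}^n \left(\E\, e^{n\la X_i}\right)^{1/n}.
\]

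Next I would apply the sub-exponential hypothesis to each factor. We need $|n\la| \le \al_i^{-1}$, i.e. $|\la| \le (n\al_i)^{-1}$; to make this hold for all $i$ simultaneously, restrict to $|\la| \le (n\max_i \al_i)^{-1}$. Under this restriction, $\E\, e^{n\la X_i} \le e^{n^2\la^2 v_i^2/2}$, so
\[
\E\, e^{\la \sum_{i=1}^n X_i} \le \prod_{i=1}^n \left(e^{n^2\la^2 v_i^2/2}\right)^{1/n} = \exp\!\left(\frac{n\la^2}{2}\sum_{i=1}^n v_i^2\right).
\]
This is exactly the sub-exponential bound with variance proxy $n\sum_i v_i^2$ valid on $|\la| \le (n\max_i \al_i)^{-1}$, which is the claimed conclusion.

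The only mild subtlety — and the thing I'd expect to be the "main obstacle," though it is minor — is justifying the H\"older step: strictly it requires each $e^{n\la X_i}$ to be in $L^1$, which is guaranteed precisely on the interval $|n\la| \le \al_i^{-1}$ where the sub-exponential MGF bound applies, so there is no circularity. One should also note the $n=1$ case is trivial (it just restates the hypothesis), and for $n\ge 2$ the bound is genuinely worse than the independent case (Proposition \ref{p:sum-subexp-ind}) by the factor $n$ in both parameters, as the statement says. I would write the proof in essentially the three displayed lines above, prefaced by the remark that it suffices to bound the MGF on the stated $\la$-range, and note that the same argument handles $\R$-valued $X_i$ with either sign since the hypothesis is two-sided in $\la$.
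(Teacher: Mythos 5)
Your proof is correct and follows exactly the paper's argument: generalized H\"older with uniform weights $1/n$ to reduce to the individual MGFs at parameter $n\la$, then the hypothesis on the range $|\la|\le (n\max_i\al_i)^{-1}$, yielding the variance proxy $n\sum_i v_i^2$. No gaps; the integrability remark is a fine (optional) addition.
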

\begin{proof}
    For $|\la|\le \rc n \min_{1\le i\le n} \al_i^{-1} = (n\cdot \max_{1\le i\le n}\al_i)^{-1}$, the defining inequality for the sub-exponential random variables are satisfied for $\la n$. By H\"older's inequality,
    \[
\E e^{\la \sumo in X_i} 
\le \prodo in \pa{\E e^{\la nX_i}}^{1/n}
\le \prodo in e^{\fc{n^2\la^2v_i^2}2\cdot \rc n}
= e^{\fc{\la^2 n\sumo in v_i^2}2}.
    \]
\end{proof}

\section{Derivation of Tweedie's formula}
\label{s:tweedie}
We derive \eqref{e:score-Q}. 
Consider $\mu\sim Q$, $\xi \sim \cal  N(0,\si^2 I_n)$ drawn independently, and let $Y=\mu+\xi$. 
Let $Q_{\si^2} = Q*\cal N(0,\si^2 I_n)$, and suppose it has density $q_{\si^2}$. 
By Bayes's Rule, letting $q(y|\mu) = \ga_{\mu, \si^2}(y)$ denote the density of $Y$ given $\mu$, 
\begin{align*}
\nb 
\ln q_{\si^2} (y) &= \nb \log (q*\ga_{\si^2}(y)) = \nb_y \log \int_{\R^d} e^{-\fc{\ve{y-\mu}^2}{2\si^2}}\,dQ_0(\mu)\\
&= -\fc{\int_{\R^d} \fc{y-\mu}{\si^2}  e^{-\fc{\ve{y-\mu}^2}{2\si^2}}\,dQ_0(\mu)}{\int_{\R^d} e^{-\fc{\ve{y-\mu}^2}{2\si^2}}\,dQ_0(\mu)}
= \fc{\int_{\R^d} \fc{\mu-y}{\si^2} q(y|\mu)\,dQ_0(\mu)}{\int_{\R^d} q(y|\mu)\,dQ_0(\mu)}\\
&=\rc{\si^2} \E[\mu-y|Y=y].
\end{align*}

\end{document}